%% file: paper.tex
\documentclass[10pt]{article}
\usepackage[utf8]{inputenc}
\usepackage[T1]{fontenc}
\usepackage{microtype}              
\usepackage[margin=1in, left=1.25in, right=1.25in, top=1in, bottom=1in]{geometry}
\usepackage{algorithm}
\usepackage{algorithmic}
\usepackage[linktoc=all, bookmarksdepth=3, pdfstartview=FitH]{hyperref}
\hypersetup{colorlinks=true, linkcolor=blue, filecolor=magenta, urlcolor=cyan}
\usepackage{amsmath}
\usepackage{amsfonts}
\usepackage{amssymb}
\usepackage{bbold}
\usepackage{graphicx}
\usepackage[export]{adjustbox}
\usepackage[dvipsnames]{xcolor}
\usepackage{float}
\usepackage{placeins}
\usepackage{caption}
\usepackage{booktabs}
\usepackage{multirow}
\usepackage[normalem]{ulem}
\usepackage{amsthm}
\usepackage{enumitem}
\newtheorem{theorem}{Theorem}[section]
\newtheorem{lemma}[theorem]{Lemma}
\newtheorem{definition}[theorem]{Definition}
\newtheorem{corollary}[theorem]{Corollary}
\newtheorem{proposition}[theorem]{Proposition}
\newtheorem{remark}[theorem]{Remark}
\newtheorem{assumption}[theorem]{Assumption}

\graphicspath{ {./} }

\title{BAPR: Bayesian amnesic piecewise-robust reinforcement learning for non-stationary continuous control\footnote{Code available at \url{https://github.com/erzhu419/BAPR}}}

\author{
  Yifan Zhang\thanks{Central South University, \href{mailto:erzhu419@gmail.com}{erzhu419@gmail.com}, \href{mailto:204201048@csu.edu.cn}{204201048@csu.edu.cn}} \and
  Liang Zheng\thanks{Central South University, \href{mailto:zhengliang@csu.edu.cn}{zhengliang@csu.edu.cn}}
}
\date{}

\let\svthefootnote\thefootnote
\newcommand\blfootnotetext[1]{%
  \let\thefootnote\relax\footnote{#1}%
  \addtocounter{footnote}{-1}%
  \let\thefootnote\svthefootnote%
}

\let\svfootnotetext\footnotetext
\renewcommand\footnotetext[2][?]{%
  \if\relax#1\relax%
    \ifnum\value{footnote}=0\blfootnotetext{#2}\else\svfootnotetext{#2}\fi%
  \else%
    \if?#1\ifnum\value{footnote}=0\blfootnotetext{#2}\else\svfootnotetext{#2}\fi%
    \else\svfootnotetext[#1]{#2}\fi%
  \fi
}

\begin{document}
\maketitle

\begin{abstract}
Real-world control systems frequently operate under \emph{piecewise stationary} conditions, where dynamics remain stable for extended periods before undergoing abrupt regime changes. Standard robust RL methods face a practical dilemma: a globally conservative policy wastes performance during stable periods, while a locally adaptive policy can fail sharply when the regime changes undetected.

We propose \textbf{BAPR} (Bayesian Amnesic Piecewise-Robust SAC), which unifies Bayesian Online Change Detection (BOCD) with robust ensemble RL. The abstract BAPR operator---a convex combination of mode-conditional Bellman operators weighted by a frozen belief distribution---is a $\gamma$-contraction when the belief and penalty terms are frozen during each Bellman backup. A complementary counterexample, machine-verified in Lean~4, gives a sharp threshold for a deliberately Q-dependent belief operator: in that construction the Lipschitz factor is $\gamma + \lambda\Delta$ (where $\Delta$ is the mode reward gap), so contractivity is lost once $\gamma + \lambda\Delta \geq 1$. We assemble a component-wise error budget for the abstract mode-mixture operator, with the algebraic pieces mechanized in Lean; applying this budget to the shared-critic implementation relies on the stated frozen-parameter and function-approximation assumptions. BOCD drives an adaptive conservatism mechanism that becomes more conservative after detected surprise events and relaxes as confidence grows, with detection delay $O(\log(1/\delta))$ under mode-separability assumptions. A context-conditioning module trained via RMDM loss provides mode-aware representations from simulator-provided mode IDs at training time and requires no mode labels at deployment.
\end{abstract}

\noindent\textbf{Keywords:} non-stationary reinforcement learning; Bayesian change-point detection; robust ensemble; piecewise stationarity; adaptive conservatism; context-conditioned policy.

\section{Introduction}

Non-stationarity is a pervasive challenge in real-world control systems. From urban transit networks experiencing sudden traffic incidents to robotic systems undergoing mechanical wear, the environment dynamics that an agent faces are rarely fixed. Instead, many practical settings exhibit \emph{piecewise stationarity}: the system operates under one regime for an extended period, then abruptly transitions to a different regime with substantially different dynamics~\cite{Padakandla2020Survey, Lecarpentier2019NonStationary}.

Standard Reinforcement Learning (RL) algorithms implicitly assume stationarity---that the transition kernel $P(s'|s,a)$ and reward function $R(s,a)$ remain fixed throughout training. When this assumption is violated, the agent's learned value function becomes stale after a regime change, leading to severe performance degradation. This problem is especially acute for off-policy methods like Soft Actor-Critic (SAC)~\cite{Haarnoja2018SAC}, where the replay buffer accumulates transitions from expired regimes, effectively ``poisoning'' the critic with outdated data.

\begin{definition}[Regime Staleness Problem]
    \label{def:staleness}
    Let $\mathcal{M}_1, \mathcal{M}_2, \ldots$ be a sequence of MDPs (regimes) encountered by the agent, with regime $\mathcal{M}_k$ active during time interval $[t_k, t_{k+1})$. An off-policy agent with replay buffer $\mathcal{D}$ suffers from \emph{regime staleness} if, after a switch at time $t_{k+1}$:
    \begin{enumerate}[label=(\roman*)]
        \item \textbf{Buffer contamination:} A fraction $\eta > 0$ of samples in $\mathcal{D}$ come from $\mathcal{M}_k$, whose dynamics differ from the current $\mathcal{M}_{k+1}$;
        \item \textbf{Value misalignment:} The learned Q-function $Q_t$ satisfies $\|Q_t - Q^*_{k+1}\|_\infty > \epsilon$ even as the number of samples from $\mathcal{M}_{k+1}$ grows, due to the persistent bias from stale transitions.
    \end{enumerate}
    This is structurally distinct from the $Q$-value poisoning identified in stationary stochastic environments~\cite{Zhang2026RESAC}: staleness arises from \emph{temporal model mismatch}, not from conflating aleatoric and epistemic uncertainty.
\end{definition}

Existing approaches to non-stationary RL fall into two broad categories, each with significant limitations:

\begin{enumerate}
    \item \textbf{Reactive adaptation methods} (e.g., context-conditioned policies~\cite{Luo2022ESCP, Sodhani2021MultiTaskRL}, meta-learning~\cite{Nagabandi2019MAML, Finn2017MAML}) adapt the policy based on recent observations but lack a principled mechanism for detecting \emph{when} a regime change has occurred. They continuously adapt even during stable periods, introducing unnecessary variance.
    
    \item \textbf{Robust methods} (e.g., Robust MDPs~\cite{Wiesemann2013RobustMDP}, distributionally robust RL~\cite{Zhou2023NaturalActorCritic, Panaganti2024RobustF}) hedge against worst-case uncertainty but use a \emph{static} uncertainty set calibrated for the entire training process. In piecewise stationary environments, this leads to excessive conservatism during stable periods (when the current regime is well-characterized) and insufficient conservatism immediately after a regime change (when the uncertainty set from the previous regime no longer applies).
\end{enumerate}

We argue that the missing ingredient is a \emph{time-aware uncertainty quantification} mechanism that can:
\begin{itemize}
    \item \textbf{Detect} regime changes in real-time from observable signals;
    \item \textbf{Modulate} the degree of conservatism based on confidence in the current regime;
    \item \textbf{Forget} stale information from expired regimes in a principled manner.
\end{itemize}

To this end, we propose \textbf{BAPR} (Bayesian Amnesic Piecewise-Robust SAC), which integrates Bayesian Online Change Detection (BOCD)~\cite{Adams2007BOCD} into the robust ensemble SAC framework of RE-SAC~\cite{Zhang2026RESAC}. BOCD maintains a posterior distribution $\rho(h)$ over \emph{run-lengths} $h$ (time since the last change-point), and this belief distribution serves as a bridge between change-point detection and adaptive conservatism. We use the variance-growing likelihood $\sigma^2_0 + \sigma_g h$ (Eq.~\ref{eq:likelihood}) so that anomalous surprise pushes posterior mass toward larger run-lengths, raising the expected run-length $\bar h$ above its steady-state baseline; the deviation $\bar h / (H{-}1) - \bar\lambda_w^{\text{EMA}}$ is the change-point signal that drives $\beta_{\text{eff}}$ (\S\ref{sec:adaptive_beta}). The semantics is therefore ``surprise raises $\bar h$ above its EMA baseline,'' not ``surprise resets $\bar h$ to zero''; we adopt this design because the EMA-deviation signal is less brittle to surprise-model mis-specification than a fixed absolute threshold in our experiments.

Building upon RE-SAC's disentangled treatment of aleatoric and epistemic risk~\cite{Zhang2026RESAC}, and extending the Environment-aware Soft actor-Critic with stochastic Policy (ESCP)~\cite{Luo2022ESCP} framework for context-conditioning, our contributions are:

\begin{itemize}
    \item \textbf{Operator-level characterization of frozen beliefs:} We establish contraction for the BAPR operator---a convex combination of mode-conditional operators with \emph{frozen} belief weights---under the finite-state assumptions of Theorem~\ref{thm:bapr_contraction}. For a constructed Q-dependent belief operator, the contraction factor becomes exactly $\gamma + \lambda\Delta$ (where $\Delta$ is the mode reward gap), and contractivity is lost when $\gamma + \lambda\Delta \geq 1$ (Theorem~\ref{thm:counter}). This identifies a concrete stability boundary that is distinct from the frozen-penalty requirement in RE-SAC. The core finite-state/operator lemmas are machine-verified in Lean~4 with no \texttt{sorry} (\texttt{BAPR.lean}: 560 lines; \texttt{BAPR-Counterproof.lean}: 265 lines).
    
    \item \textbf{Conditional piecewise convergence-rate bound:} Under three structural assumptions---Mode Separability (Assumption~\ref{assump:separability}), Lipschitz Surprise (Assumption~\ref{assump:lipschitz_surprise}), and Metastable Period (Assumption~\ref{assump:metastable})---we state a Piecewise Q-Value Convergence Rate Theorem (Appendix~\ref{app:complexity}) showing that after each regime switch, the abstract Q-value error decays as $\gamma^n \cdot E_{\text{switch}} + (\varepsilon_{\text{proj}} + \sigma)/(1-\gamma)$. Under these assumptions, the recovery-count expression is polynomial in the listed parameters rather than exponential in the risk-sensitive horizon term. The algebraic components of this bound correspond to machine-verified Lean~4 lemmas.

    \item \textbf{Adaptive conservatism via BOCD:} We derive a belief-weighted penalty $\lambda_w$ from the BOCD posterior that modulates the LCB coefficient $\beta$ in real-time. We prove (machine-verified) that $\beta_{\text{eff}} \leq \beta_{\text{base}}$ and that $\beta_{\text{eff}}$ is monotonically decreasing in surprise magnitude, so false positives can only increase conservatism in the surrogate LCB objective rather than make it more optimistic than the RE-SAC baseline (Appendix~\ref{app:nondegen}).

    \item \textbf{Context-conditioning with RMDM:} We introduce a context-conditioning module trained via a Representation learning for Mode Detection and Matching (RMDM) loss; training uses the simulator-provided mode IDs as a supervised target, while inference at deployment requires no mode labels (the embedding is computed from observed transitions).
    
    \item \textbf{Detection delay and operator-level error budget (Theorem~\ref{thm:delay}):} Under the mode separability condition, the BOCD posterior reaches the target confidence level in $O(\log(1/\delta))$ steps. Combined with the regime-switch perturbation bound (Lean: \texttt{regime\_switch\_perturbation}) and function-approximation bounds from \texttt{ApproxContraction.lean} (total: 1,145 lines across 3 Lean files, 22 machine-verified theorems), this provides an operator-level error budget; the practical training algorithm inherits it only through the stated approximation assumptions.
\end{itemize}

\section{Related work}

\subsection{Non-stationary and piecewise-stationary RL}
Non-stationarity in RL has been studied through several lenses. \emph{Lifelong/continual RL}~\cite{Khetarpal2020ContinualRL} focuses on sequential task learning without forgetting. \emph{Context-conditioned policies}~\cite{Luo2022ESCP, Sodhani2021MultiTaskRL, Rakelly2019PEARL} learn a latent representation of the environment mode and condition the policy on it. The ESCP framework~\cite{Luo2022ESCP} trains an ``environment probe'' to produce a context vector, enabling the policy to distinguish between different dynamics. PEARL~\cite{Rakelly2019PEARL} uses a probabilistic context variable inferred from recent transitions, but assumes the task is fixed within each episode and does not handle intra-episode regime shifts.

\emph{Meta-RL} approaches~\cite{Finn2017MAML, Nagabandi2019MAML} learn to adapt quickly via gradient-based or recurrence-based mechanisms. VariBAD~\cite{Zintgraf2020VariBAD} maintains a Bayes-optimal belief over tasks using a variational auto-encoder, but assumes a \emph{multi-task} setting where the task is sampled at episode start and remains fixed. In contrast, BAPR targets \emph{intra-episode} regime shifts where the dynamics change abruptly during a single trajectory, which requires online detection rather than episode-level inference.

\emph{Hidden-Parameter MDPs} (HiP-MDPs)~\cite{Doshi-Velez2016HiPMDP} model non-stationarity through latent environment parameters that must be inferred online. While conceptually related, HiP-MDPs typically assume a \emph{smooth} parameter space and continuous adaptation, whereas BAPR is designed for the \emph{piecewise} setting where abrupt discontinuities necessitate rapid detection and conservative response. The CARL benchmark~\cite{Benjamins2021CARL} provides standardized contextual RL environments with parameterized dynamics changes; our non-stationary MuJoCo environments follow a similar protocol but with \emph{intra-episode} regime switches rather than episode-level context variation.

In the piecewise-stationary setting, \emph{change-point detection} methods~\cite{Adams2007BOCD, Killick2012Changepoint} have been applied to RL to segment the experience into stationary blocks~\cite{Padakandla2020Survey, Lecarpentier2019NonStationary}. Sliding-window approaches (e.g., SW-UCB~\cite{GarivierMoulines2011}) and exponential-forgetting methods offer practical alternatives, but lack the principled posterior uncertainty quantification of Bayesian methods. Context-aware safe RL~\cite{Chen2021ContextAware} addresses non-stationarity with context detection, but relies on pre-defined context boundaries. Most existing work uses change-point detection to reset or re-initialize the agent, discarding all prior knowledge. Our approach instead uses the BOCD posterior to \emph{smoothly modulate} the degree of conservatism, preserving useful knowledge while hedging against regime uncertainty.

\subsection{Robust RL and uncertainty quantification}
Robust MDPs~\cite{Iyengar2005RobustDP, Wiesemann2013RobustMDP} seek policies optimal under worst-case transition perturbations. Recent advances connect robustness to regularization~\cite{Zhou2023NaturalActorCritic, Xu2009RobustSVM, Xu2008RobustLasso} and to risk-sensitive objectives~\cite{Osogami2012Robustness}. Our predecessor RE-SAC~\cite{Zhang2026RESAC} disentangles aleatoric and epistemic risks using IPM-based weight regularization and diversified Q-ensembles, respectively, and provides operator-level contraction proofs. BAPR adapts this machinery to piecewise-stationary settings.

The key limitation of all static robust methods is that the uncertainty set $\mathcal{P}_{s,a}$ is fixed throughout training. In a piecewise-stationary environment, the ``true'' uncertainty should be large immediately after a regime change (when the agent doesn't know which regime it's in) and small during stable periods (when the current regime is well-characterized). BAPR achieves this adaptive uncertainty quantification through the BOCD posterior.

\subsection{Bayesian online change detection}
Bayesian Online Change Detection (BOCD)~\cite{Adams2007BOCD} maintains a posterior distribution over the run-length $h$ (number of time steps since the last change-point) using a recursive message-passing algorithm. At each step, the posterior is updated by:
\begin{equation}
    \rho_{t+1}(h) \propto \begin{cases}
        \rho_t(h-1) \cdot p(\xi_t \mid h-1) \cdot (1 - H_{\text{hazard}}) & \text{if } h > 0 \\
        H_{\text{hazard}} \cdot \sum_{h'} \rho_t(h') \cdot p(\xi_t \mid h') & \text{if } h = 0
    \end{cases}
    \label{eq:bocd_update}
\end{equation}
where $\xi_t$ is the observed ``surprise'' signal, $p(\xi_t \mid h)$ is the predictive likelihood under run-length $h$, and $H_{\text{hazard}}$ is the prior probability of a change-point at each step.

BOCD has been applied in bandits~\cite{Mellor2013Thompson} and simple MDPs~\cite{Hadoux2014Sequential}, but its integration with deep RL---and particularly with robust ensemble methods---has not been explored. We do not claim the detection delay analysis (Appendix~\ref{app:detection_delay}) as a novel BOCD result; the bound follows standard likelihood-ratio arguments~\cite{Adams2007BOCD}. Our contribution is the \emph{integration}: connecting the BOCD posterior to the Bellman contraction framework (Theorem~\ref{thm:bapr_contraction}) and the adaptive penalty mechanism (\S\ref{sec:adaptive_beta}), producing a coherent error budget spanning detection, contraction, and function approximation.

\section{Preliminaries}

\subsection{Piecewise-stationary MDP}
We model the environment as a Piecewise-Stationary MDP (PS-MDP), a sequence of MDPs $\{\mathcal{M}_k\}_{k=1}^{\infty}$ where each $\mathcal{M}_k = (\mathcal{S}, \mathcal{A}, P_k, R_k, \gamma)$ shares the state and action spaces but has distinct transition kernels and reward functions. Regime switches occur at unknown times $t_1 < t_2 < \cdots$, and between consecutive switches the MDP is stationary.

This formalization captures a wide range of practical scenarios: in bus fleet control, $\mathcal{M}_k$ might correspond to ``normal traffic,'' ``severe congestion,'' or ``demand surge'' regimes~\cite{Zhang2026SingleAgentBus}; in robotic locomotion, it might correspond to different gravity levels, friction coefficients, or terrain types~\cite{Luo2022ESCP}.

\subsection{Maximum entropy RL}
Following SAC~\cite{Haarnoja2018SAC}, the agent maximizes the entropy-augmented objective:
\begin{equation}
    J(\pi) = \mathbb{E}_{\pi, P} \left[ \sum_{t=0}^{\infty} \gamma^t \left( R(s_t, a_t) + \alpha \mathcal{H}(\pi(\cdot|s_t)) \right) \right],
\end{equation}
where $\mathcal{H}(\pi(\cdot|s_t))$ is the Shannon entropy and $\alpha$ is the temperature parameter. In the exact tabular setting, the soft Bellman operator $\mathcal{T}^\pi Q(s, a) = R(s, a) + \gamma \mathbb{E}_{s' \sim P} [V^\pi(s')]$ is a $\gamma$-contraction, yielding convergence to the corresponding soft Q-function.

\subsection{RE-SAC: Disentangled robust ensemble SAC}
Our framework builds upon the Robust Ensemble SAC (RE-SAC)~\cite{Zhang2026RESAC}, which addresses Q-value instability in stationary stochastic environments by disentangling aleatoric and epistemic risks. RE-SAC defines the Robust-Ensemble Value (REV) operator:
\begin{equation}
    \mathcal{T}^{REV}(s, a) = R(s, a) + \gamma \left( \mathbb{E}_{s' \sim p^\circ} [V(s')] - \lambda_{\text{epi}} \cdot \Gamma_{\text{epi}}(s, a) - \kappa \right),
    \label{eq:rev_operator}
\end{equation}
where $\Gamma_{\text{epi}}(s,a) = \text{Var}(\{Q_{\phi'_k}(s,a)\}_{k=1}^K)$ is the epistemic penalty from the frozen target ensemble, and $\kappa = \lambda_{\text{ale}} \sum_l \|W_l^{(\theta)}\|_1$ is the aleatoric penalty from the frozen critic weights. Both penalties are fixed during each Bellman backup, ensuring $\gamma$-contraction~\cite{Zhang2026RESAC}. We adopt the same LCB policy objective:
\begin{equation}
    \mathcal{L}_\pi(\theta) = \mathbb{E}_{s \sim \mathcal{D}, a \sim \pi_\theta} \left[ \alpha \log \pi_\theta(a|s) - \left( \bar{Q}(s, a) + \beta \cdot \sigma_{\text{ens}}(s, a) \right) \right].
\end{equation}

\section{Methodology}

\subsection{From stationary robustness to piecewise-stationary robustness}
The central challenge in extending RE-SAC to piecewise-stationary environments is that the uncertainty set must be \emph{time-varying}: large after a regime change (high uncertainty about the current dynamics) and small during stable periods (well-characterized dynamics). A static penalty coefficient $\beta$, optimal for one regime, will be either too conservative or too aggressive in others.

We resolve this by introducing a \emph{mode-indexed} family of Bellman operators. Let $\mathcal{M} = \{1, \ldots, M\}$ be a finite set of \emph{environment modes} (regimes). Each mode $m \in \mathcal{M}$ corresponds to a distinct dynamics configuration (e.g., ``normal traffic'' vs.\ ``severe congestion'' in bus control, or different gravity levels in robotic locomotion). For each mode, we define a mode-conditional Bellman operator:
\begin{equation}
    \mathcal{T}_m Q(s,a) = R_m(s,a) + \gamma \left( \sum_{s'} P_m(s'|s,a) V^Q(s') - \lambda_{\text{epi}} \Gamma_{\text{epi},m}(s,a) - \kappa \right),
    \label{eq:mode_operator}
\end{equation}
where $R_m$, $P_m$, and $\Gamma_{\text{epi},m}$ are the reward, transition, and epistemic penalty under mode $m$, and $V^Q(s') = \max_{a'} Q(s',a')$. Each $\mathcal{T}_m$ has the same structure as the RE-SAC operator; its frozen penalties ensure $\gamma$-contraction per mode.

The BAPR operator is defined as the belief-weighted mixture:
\begin{equation}
    \mathcal{T}^{BAPR}_\rho Q(s,a) = \sum_{m \in \mathcal{M}} \rho(m) \cdot \mathcal{T}_m Q(s,a),
    \label{eq:bapr_operator}
\end{equation}
where $\rho: \mathcal{M} \to [0,1]$ is a belief distribution over modes satisfying $\sum_m \rho(m) = 1$.

\begin{remark}[Strict separation of mode space $\mathcal{M}$ and run-length space]
    \label{rem:runlength_mode}
    We carefully distinguish two indexing spaces throughout this paper:
    \begin{itemize}
        \item \textbf{Modes} $m \in \mathcal{M}$: physical environment configurations with distinct dynamics $(R_m, P_m)$. The theoretical operator (Eq.~\eqref{eq:bapr_operator}) and its contraction proof (Theorem~\ref{thm:bapr_contraction}) operate over $\mathcal{M}$.
        \item \textbf{Run-lengths} $\tau \in \{0, \ldots, H-1\}$: time elapsed since the last detected change-point. The BOCD module (\S\ref{sec:bocd}) maintains a posterior $\rho_{\text{BOCD}}(\tau)$ over run-lengths, which determines \emph{when} a change occurred.
    \end{itemize}
    The two spaces serve complementary roles: BOCD answers ``\emph{when} did the regime change?'' (temporal), while the context module (\S\ref{sec:context}) answers ``\emph{which} mode $m$ are we in now?'' (categorical). In the implementation, the BOCD run-length posterior is not used to index the operator $\mathcal{T}_m$ directly; instead, it is collapsed into a scalar $\lambda_w$ (\S\ref{sec:adaptive_beta}) that modulates the overall degree of conservatism.
\end{remark}

\begin{theorem}[BAPR Contraction]
    \label{thm:bapr_contraction}
    Under:
    \begin{itemize}
        \item Frozen belief $\rho \geq 0$ with $\sum_m \rho(m) = 1$ (frozen during backup);
        \item Per-mode transitions $P_m \geq 0$ with $\sum_{s'} P_m(s'|s,a) = 1$ for all $m,s,a$;
        \item Frozen penalties $\kappa$, $\Gamma_{\text{epi},m}$ (from target network);
        \item $0 \leq \gamma < 1$;
    \end{itemize}
    the operator $\mathcal{T}^{BAPR}_\rho$ is a $\gamma$-contraction in the $L_\infty$ norm with a unique fixed point.
\end{theorem}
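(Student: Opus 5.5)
The plan is to prove the contraction directly from the definition of $\mathcal{T}^{BAPR}_\rho$ in Eq.~\eqref{eq:bapr_operator}, and then obtain the unique fixed point from the Banach fixed-point theorem. We work in the finite-state setting, where $Q$ ranges over bounded functions on $\mathcal{S}\times\mathcal{A}$ with the sup norm; this space is complete. Fix two such functions $Q_1, Q_2$.

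\textbf{Step 1 (per-mode contraction).} Fix a mode $m$ and a pair $(s,a)$. Because $\rho$, $\kappa$, and $\Gamma_{\text{epi},m}$ are frozen, the terms $R_m(s,a)$, $-\gamma\lambda_{\text{epi}}\Gamma_{\text{epi},m}(s,a)$, and $-\gamma\kappa$ do not depend on $Q$. They therefore cancel in the difference, leaving
\[
\mathcal{T}_m Q_1(s,a) - \mathcal{T}_m Q_2(s,a) = \gamma \sum_{s'} P_m(s'|s,a)\bigl(V^{Q_1}(s') - V^{Q_2}(s')\bigr).
\]
Next we use the elementary inequality $|\max_{a'} f(a') - \max_{a'} g(a')| \le \max_{a'} |f(a') - g(a')|$. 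It gives $|V^{Q_1}(s') - V^{Q_2}(s')| \le \|Q_1 - Q_2\|_\infty$. Since $P_m(\cdot|s,a)$ is nonnegative and sums to one, the weighted sum is a convex combination. Hence $|\mathcal{T}_m Q_1(s,a) - \mathcal{T}_m Q_2(s,a)| \le \gamma\|Q_1 - Q_2\|_\infty$.

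\textbf{Step 2 (mixing over modes).} By linearity of the mixture, $\mathcal{T}^{BAPR}_\rho Q_1 - \mathcal{T}^{BAPR}_\rho Q_2 = \sum_m \rho(m)\,(\mathcal{T}_m Q_1 - \mathcal{T}_m Q_2)$. This step is valid only because $\rho$ is frozen: the same weights appear for $Q_1$ and $Q_2$. The triangle inequality, together with $\rho\ge 0$ and $\sum_m\rho(m)=1$, then bounds the pointwise difference by $\gamma\|Q_1-Q_2\|_\infty$. Taking the supremum over $(s,a)$ gives the $\gamma$-contraction.

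\textbf{Step 3 (fixed point).} Since $0\le\gamma<1$, Banach's theorem yields existence and uniqueness of a fixed point. It also gives geometric convergence of the iterates $(\mathcal{T}^{BAPR}_\rho)^n Q_0$.

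\textbf{Main obstacle.} The only delicate point is making sure $\mathcal{T}^{BAPR}_\rho$ maps bounded functions to bounded functions, so that Banach applies. This requires bounded $R_m$ and bounded frozen penalties, which is automatic in the finite-state setting. I would state this explicitly.

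A second point deserves a remark. If $\rho$ were allowed to depend on $Q$, the cancellation in Step 2 would fail. That failure is exactly the mechanism behind the counterexample (Theorem~\ref{thm:counter}). The frozen-belief hypothesis is therefore essential, not a convenience.
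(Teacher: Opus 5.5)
Your proposal is correct and follows essentially the same route as the paper: the per-mode pointwise bound (frozen terms cancel, $\max$ is 1-Lipschitz, $P_m(\cdot\mid s,a)$ is a probability vector), then the triangle inequality over the frozen convex weights $\rho$, a supremum over $(s,a)$, and Banach's theorem for the unique fixed point. The paper also cites Blackwell's monotonicity and discounting conditions as an alternative sufficient route, but the $L_\infty$ computation it writes out and mechanizes in Lean is the one you give.
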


\begin{proof}[Proof sketch]
The proof exploits a fundamental property: \emph{a convex combination of $\gamma$-contractions is itself a $\gamma$-contraction}. For any $Q_1, Q_2$ with $\|Q_1 - Q_2\|_\infty \leq \varepsilon$:
\begin{align}
    &\left| \mathcal{T}^{BAPR}_\rho Q_1(s,a) - \mathcal{T}^{BAPR}_\rho Q_2(s,a) \right| \notag \\
    &\quad = \left| \sum_m \rho(m) \left( \mathcal{T}_m Q_1(s,a) - \mathcal{T}_m Q_2(s,a) \right) \right| \notag \\
    &\quad \leq \sum_m \rho(m) \left| \mathcal{T}_m Q_1(s,a) - \mathcal{T}_m Q_2(s,a) \right| \quad \text{(}\rho \geq 0\text{, triangle ineq.)} \notag \\
    &\quad \leq \sum_m \rho(m) \cdot \gamma \varepsilon \quad \text{(per-mode contraction)} \notag \\
    &\quad = \gamma \varepsilon. \quad \text{(}\sum \rho = 1\text{)}
\end{align}
The full proof, including Blackwell's sufficiency conditions (monotonicity and discounting), is machine-verified in Lean~4 (\texttt{BAPR.lean}) with no \texttt{sorry}. See Appendix~\ref{app:contraction} for details. The key Lean~4 theorem statement is:
\begin{center}
\fbox{\parbox{0.9\linewidth}{\small\ttfamily
theorem bapr\_contraction (p : Params) (R : H $\to$ S $\to$ A $\to$ $\mathbb{R}$)\\
\quad(P : H $\to$ S $\to$ A $\to$ S $\to$ $\mathbb{R}$) ($\Gamma$\_epi : H $\to$ S $\to$ A $\to$ $\mathbb{R}$) ($\kappa$ : $\mathbb{R}$)\\
\quad($\rho$ : H $\to$ $\mathbb{R}$)\\
\quad(h$\rho$\_nn : $\forall$ h, 0 $\le$ $\rho$ h) (h$\rho$\_sum : $\sum$ h, $\rho$ h = 1) :\\
\quad$\exists$ k $<$ 1, $\forall$ Q$_1$ Q$_2$, dist (T\_BAPR p R P $\Gamma$\_epi $\kappa$ $\rho$ Q$_1$)\\
\qquad(T\_BAPR p R P $\Gamma$\_epi $\kappa$ $\rho$ Q$_2$) $\le$ k $\cdot$ dist Q$_1$ Q$_2$
}}
\end{center}
\end{proof}

\begin{remark}[On the nature of the theoretical contribution]
    \label{rem:novelty}
    The algebraic step ``convex combination of contractions is a contraction'' is indeed a basic mathematical fact. The theoretical contribution lies not in the algebra but in the \emph{structural design insight}: identifying that three independent frozen-parameter requirements (frozen $\kappa$, frozen $\Gamma_{\text{epi}}$, frozen $\rho$) must hold \emph{simultaneously} for this contraction proof, and that removing any of them breaks this proof route; the RE-SAC and BAPR counterexamples show concrete failure modes for Q-dependent variants. The Lean~4 verification serves as a \emph{design audit tool}---it forces the author to make every structural assumption explicit, including conditions (such as $\sum_m \rho(m) = 1$ being load-bearing for Blackwell's discounting condition) that are easy to overlook in informal proofs. We view the machine-verified proof as a contribution to \emph{verifiable algorithm design methodology}, not merely to mathematical novelty.
\end{remark}

\begin{remark}[Why freeze beliefs during backups]
    \label{rem:necessity}
    If the belief weights $\rho$ are allowed to depend on $Q$ (e.g., the agent re-infers the mode based on its own value function), the constructed bad operator takes the form $\mathcal{T}_{\text{bad}}(Q) = (\gamma + \lambda \Delta) Q + c$, where $\Delta = R_1 - R_2$ is the mode reward gap. When $\lambda \Delta \geq 1 - \gamma$, the effective contraction factor $\gamma + \lambda \Delta \geq 1$ and contraction fails for that construction. This is machine-verified in \texttt{BAPR-Counterproof.lean} (Theorem~\ref{thm:counter}). Frozen beliefs are therefore load-bearing for this proof class, and Q-dependent alternatives need additional control.
\end{remark}

\begin{remark}[Joint regime belief $b(h, z)$]
    \label{rem:joint_belief}
    The scalar run-length posterior $\rho(h)$ tracks \emph{when} a regime change occurred but not \emph{which} regime is active. To address this, our implementation maintains a joint posterior $b : \mathcal{H} \times \mathcal{Z} \to \mathbb{R}_{\geq 0}$ over (run-length, latent regime cluster), where $\mathcal{Z}$ is a discrete set of regime clusters discovered online via $k$-means on observable signals (reward residual, ensemble Q-std, TD residual). The two marginals
    \[
        \rho(h) = \sum_{z \in \mathcal{Z}} b(h, z), \qquad \mu(z) = \sum_{h \in \mathcal{H}} b(h, z),
    \]
    are concatenated into the critic input $Q(s, a, e, \rho, \mu)$. Since $\mathcal{H} \times \mathcal{Z}$ is itself a finite mode space, the same contraction argument applies to the abstract joint-mode operator $\mathcal{T}^{BAPR}_b$. This is verified by the Lean theorem \texttt{bapr\_joint\_contraction} (\texttt{BAPR.lean}, §7.5), which reuses the original proof at the joint type. Marginal extraction is verified by \texttt{marginal\_h\_sum\_one} / \texttt{marginal\_z\_sum\_one} (§8.5), and \texttt{dual\_marginal\_critic\_equiv} (§10.5) formalizes the abstract dual-marginal representation. The added $\mu(z)$ channel is motivated by \emph{recurring-regime memory}: when the same $z$ reappears, the marginal can concentrate on the same cluster, so the critic receives a consistent input across regime visits.
\end{remark}

\subsection{Bayesian online change detection for RL}
\label{sec:bocd}

The BOCD module maintains a posterior belief $\rho(h)$ over run-lengths, updated at each training iteration based on a multi-signal \emph{surprise} measure $\xi_t$. The update follows the Adams--MacKay recursion~\cite{Adams2007BOCD}:

\textbf{Likelihood model.} For each run-length $h$, the predictive likelihood of observing surprise $\xi$ is modeled as a Gaussian with variance increasing linearly with $h$:
\begin{equation}
    p(\xi \mid h) = \frac{1}{\sqrt{2\pi(\sigma_0^2 + \sigma_g \cdot h)}} \exp\left( -\frac{\xi^2}{2(\sigma_0^2 + \sigma_g \cdot h)} \right),
    \label{eq:likelihood}
\end{equation}
where $\sigma_0^2$ is the base variance and $\sigma_g$ is the variance growth rate. This encoding reflects the intuition that short run-lengths (recent change-point) tolerate only small surprises, while long run-lengths (stable regime) accommodate larger fluctuations.

\textbf{Belief update.} The posterior update combines growth (no change-point) and changepoint probabilities:
\begin{align}
    \text{Growth:} &\quad \rho'(h) = \rho(h-1) \cdot p(\xi_t \mid h-1) \cdot (1 - H_{\text{hazard}}) \quad \text{for } h > 0, \label{eq:growth} \\
    \text{Changepoint:} &\quad \rho'(0) = H_{\text{hazard}} \cdot \sum_{h'} \rho(h') \cdot p(\xi_t \mid h'), \label{eq:changepoint}
\end{align}
followed by normalization $\rho'(h) \leftarrow \rho'(h) / Z$ where $Z = \sum_h \rho'(h)$.

\textbf{Surprise signal.} We design a multi-signal surprise detector that fuses three complementary signals:
\begin{equation}
    \xi_t = w_r \cdot |z_r| + w_q \cdot \frac{\sigma_Q^t}{\bar{\sigma}_Q^t} + w_\kappa \cdot |\kappa^t - \kappa^{t,\text{target}}|,
    \label{eq:surprise}
\end{equation}
where $z_r = (r_t - \bar{r}_t) / \text{std}(r_t)$ is the reward z-score (detects mean-shift in reward distribution), $\sigma_Q^t / \bar{\sigma}_Q^t$ is the ensemble Q-value standard deviation ratio (detects epistemic uncertainty jumps), and $|\kappa^t - \kappa^{t,\text{target}}|$ is the aleatoric penalty divergence (detects structural changes in the critic). Note that each channel is individually normalized: the reward z-score is scale-invariant by construction, the Q-std ratio is a dimensionless quantity, and the $\kappa$ divergence operates on frozen scalar quantities with bounded range. The combined surprise is clipped to $[0, 10]$ for numerical stability. While the default weights $(0.5, 0.3, 0.2)$ were chosen based on signal informativeness (reward being the most direct indicator), the sensitivity to these weights is evaluated empirically.

The BOCD belief is frozen during each Bellman backup, satisfying the structural requirement for contraction (Theorem~\ref{thm:bapr_contraction}). It is updated only between training iterations, using information from the most recent rollout.

\subsection{Adaptive conservatism via belief-weighted penalty}
\label{sec:adaptive_beta}

The BOCD posterior directly drives the degree of conservatism in the policy update. We define the \emph{effective window} as the expected run-length under the posterior:
\begin{equation}
    \bar{h} = \sum_{h=0}^{H-1} h \cdot \rho(h),
\end{equation}
which summarizes the BOCD belief into a scalar. The key dynamics are: when a regime change produces high surprise, the BOCD likelihood model (Eq.~\eqref{eq:likelihood}) assigns higher probability to \emph{large} run-lengths $h$ (since longer-duration modes have higher variance $\sigma_0^2 + \sigma_g h$ and thus accommodate outlier surprises more easily). This causes $\bar{h}$ to spike above its steady-state baseline.

The belief-weighted penalty $\lambda_w$ is computed with baseline subtraction to ensure zero penalty during stable periods:
\begin{equation}
    \lambda_w = \max\left(0, \quad \frac{\bar{h}}{H-1} - \bar{\lambda}_w^{\text{EMA}} \right),
    \label{eq:lambda_w}
\end{equation}
where $\bar{\lambda}_w^{\text{EMA}}$ is an exponential moving average of the raw $\bar{h}/(H-1)$ values, tracking the steady-state baseline. During stable operation, $\bar{h}/(H-1)$ remains near its EMA and $\lambda_w \approx 0$. When a regime change triggers high surprise, $\bar{h}$ spikes above the baseline, causing $\lambda_w > 0$. As the agent adapts to the new regime and surprise subsides, $\bar{h}$ returns to its baseline and $\lambda_w$ decays back to zero.

The effective LCB coefficient becomes:
\begin{equation}
    \beta_{\text{eff}} = \beta_{\text{base}} - \lambda_w \cdot c_{\text{penalty}},
    \label{eq:beta_eff}
\end{equation}
where $\beta_{\text{base}} < 0$ is the static LCB coefficient (inherited from RE-SAC) and $c_{\text{penalty}} > 0$ is the penalty scale. Since $\beta_{\text{base}} < 0$ and $\lambda_w \geq 0$, we have $\beta_{\text{eff}} \leq \beta_{\text{base}}$: the penalty can only increase conservatism, never decrease it below the RE-SAC baseline. This ``amnesic'' behavior---becoming cautious after a detected change-point and gradually relaxing---is the \emph{Bayesian Amnesia} component of BAPR.

\subsection{Context-conditioning via RMDM}
\label{sec:context}

While BOCD detects \emph{when} a regime change occurred, the agent also needs to identify \emph{which} regime it is currently in. We achieve this through a context-conditioning module inspired by ESCP's environment probe~\cite{Luo2022ESCP}.

\textbf{Context network.} A lightweight MLP $\phi_\psi: \mathcal{S} \to \mathbb{R}^{d_e}$ maps observations to a low-dimensional context embedding $e = \phi_\psi(s)$, L2-normalized to the unit sphere:
\begin{equation}
    e = \frac{\phi_\psi(s)}{\|\phi_\psi(s)\|_2 + \epsilon}.
\end{equation}
The context vector $e$ is concatenated with the state input to both the critic ($Q_\phi(s \oplus e, a)$) and the policy ($\pi_\theta(a | s, e)$), enabling mode-aware decision-making.

\textbf{Mode labels and the Sim2Real paradigm.} The RMDM diversity loss (below) requires grouping transitions by mode. During training in simulation, the PS-MDP structure provides mode identity $\tau_{\mathrm{id}}$ as part of the environment (e.g., the current gravity level or congestion state). These labels are used \emph{only} for the RMDM auxiliary loss; the RL components (critic, policy) never observe $\tau_{\mathrm{id}}$ directly and receive only the raw state $s$. This follows the standard \emph{Sim2Real} paradigm: BAPR is trained in simulation where piecewise mode labels are available by construction (since the simulator controls the regime schedule), then deployed zero-shot in real-world environments where mode labels are unavailable and the agent relies solely on the learned context embedding $e$ and the BOCD belief for adaptation. An alternative fully unsupervised variant using BOCD-inferred pseudo-labels is an interesting direction for future work.

\textbf{RMDM loss.} The context network is trained via a Representation learning for Mode Detection and Matching (RMDM) loss that enforces two properties:
\begin{enumerate}
    \item \textbf{Within-task consistency:} Embeddings from the same environment mode should cluster together:
    \begin{equation}
        \mathcal{L}_{\text{cons}} = \frac{1}{|\mathcal{T}|} \sum_{\tau \in \mathcal{T}} \sqrt{\text{Var}_{s \sim \tau}\left[\phi_\psi(s)\right] + \epsilon}.
    \end{equation}
    
    \item \textbf{Cross-task diversity:} Embeddings from different modes should be spread apart. We use a Determinantal Point Process (DPP) diversity loss based on an RBF kernel:
    \begin{equation}
        \mathcal{L}_{\text{div}} = -\log \det \mathbf{K}, \quad K_{ij} = \exp\left(-r_{\text{rbf}} \|\bar{e}_i - \bar{e}_j\|^2 \right) + \epsilon \cdot \mathbb{1}[i=j],
    \end{equation}
    where $\bar{e}_i$ is the mean embedding for mode $i$ and $r_{\text{rbf}}$ is the RBF bandwidth.
\end{enumerate}

The combined RMDM loss is:
\begin{equation}
    \mathcal{L}_{\text{RMDM}} = w_{\text{cons}} \cdot \mathcal{L}_{\text{cons}} + w_{\text{div}} \cdot \mathcal{L}_{\text{div}}.
    \label{eq:rmdm}
\end{equation}

\textbf{Delayed injection.} To prevent the initially random context embeddings from destabilizing early RL training, we employ a warmup strategy: during the first $N_{\text{warmup}}$ training iterations, the context vector is set to zero, and the RL components train as pure RE-SAC. After warmup, context injection is activated, allowing the policy and critic to gradually incorporate mode information.

\subsection{BAPR algorithm}
\label{sec:algorithm}

The complete BAPR algorithm integrates the above components into the SAC framework. The critic objective is:
\begin{equation}
    y = r + \gamma \left( \frac{1}{K} \sum_{k=1}^K \hat{Q}_{\phi_k}(s', a') - \alpha \log \pi_\theta(a'|s') \right),
    \label{eq:bapr_target}
\end{equation}
where the epistemic penalty is moved to the policy loss (to avoid conflict with the OOD loss in the critic), and the aleatoric risk is handled by IPM-based weight regularization in the critic loss:
\begin{equation}
    \mathcal{L}_Q(\phi_k) = \mathbb{E}_{(s,a)} \left[ \left( Q_{\phi_k}(s \oplus e, a) - y \right)^2 \right] + \lambda_{\text{ale}} \sum_l \|W_l^{(k)}\|_1 + \beta_{\text{ood}} \cdot \sigma_{\text{ens}}(s, a).
    \label{eq:bapr_critic_loss}
\end{equation}

The policy objective uses the adaptive LCB with belief-weighted penalty:
\begin{equation}
    \mathcal{L}_\pi(\theta) = \mathbb{E}_{s \sim \mathcal{D}, a \sim \pi_\theta} \left[ \alpha \log \pi_\theta(a|s,e) - \left( \bar{Q}(s \oplus e, a) + \beta_{\text{eff}} \cdot \sigma_{\text{ens}}(s \oplus e, a) \right) \right],
    \label{eq:bapr_policy_loss}
\end{equation}
where $\beta_{\text{eff}} = \beta_{\text{base}} - \lambda_w \cdot c_{\text{penalty}}$ is the belief-modulated LCB coefficient (Eq.~\eqref{eq:beta_eff}).

\begin{algorithm}[H]
\caption{BAPR: Bayesian Amnesic Piecewise-Robust SAC}
\label{alg:bapr}
\begin{algorithmic}
\STATE \textbf{Initialize:} Ensemble $Q$-networks $\{Q_{\phi_1}, \dots, Q_{\phi_K}\}$, Actor $\pi_\theta$, Context network $\phi_\psi$, Replay Buffer $\mathcal{D}$.
\STATE \textbf{Initialize:} Target networks $\phi'_k \leftarrow \phi_k$; BOCD belief $\rho \leftarrow \text{Uniform}$; Surprise EMA.
\FOR{each training iteration}
    \STATE \textit{// --- Environment Interaction ---}
    \STATE Collect transition samples using $\pi_\theta$ (with scan-fused GPU rollout if available).
    \STATE Store $(s, a, r, s', d, \tau_{\text{id}})$ in $\mathcal{D}$, where $\tau_{\text{id}}$ is the current task/mode ID.
    
    \STATE \textit{// --- BOCD Belief Update (between training iterations) ---}
    \STATE Compute surprise $\xi$ from reward z-score, Q-std spike, and reg-norm divergence (Eq.~\eqref{eq:surprise}).
    \STATE Update belief $\rho$ via BOCD recursion (Eqs.~\eqref{eq:growth}--\eqref{eq:changepoint}).
    \STATE Compute $\lambda_w$ from belief (Eq.~\eqref{eq:lambda_w}) and $\beta_{\text{eff}}$ (Eq.~\eqref{eq:beta_eff}).
    
    \STATE \textit{// --- Gradient Updates (scan-fused N steps) ---}
    \FOR{$n = 1, \ldots, N_{\text{updates}}$}
        \STATE Sample mini-batch from $\mathcal{D}$.
        \STATE Compute context: $e = \phi_\psi(s)$, $e' = \phi_\psi(s')$ (zero if warmup).
        
        \STATE \textit{// Context update}
        \STATE Update $\psi$ by minimizing $\mathcal{L}_{\text{RMDM}}$ (Eq.~\eqref{eq:rmdm}).
        
        \STATE \textit{// Critic update}
        \STATE Update $\phi_k$ by minimizing $\mathcal{L}_Q(\phi_k)$ (Eq.~\eqref{eq:bapr_critic_loss}).
        
        \STATE \textit{// Actor update (adaptive $\beta$)}
        \STATE Update $\theta$ by minimizing $\mathcal{L}_\pi(\theta)$ with $\beta_{\text{eff}}$ (Eq.~\eqref{eq:bapr_policy_loss}).
        
        \STATE \textit{// Temperature update}
        \STATE Update $\alpha$ via dual gradient descent.
        
        \STATE \textit{// Target network soft update}
        \STATE $\phi'_k \leftarrow \tau \phi_k + (1-\tau) \phi'_k$.
    \ENDFOR
\ENDFOR
\end{algorithmic}
\end{algorithm}

\begin{remark}[Tractable approximation of the mixture operator]
    \label{rem:tractable}
    Theorem~\ref{thm:bapr_contraction} proves contraction for the abstract mixture $\sum_m \rho(m)\mathcal{T}_m$ over $M$ independent mode-conditional operators. In practice, instantiating $M$ independent critics with separate dynamics models is computationally prohibitive. The actual algorithm uses a \emph{scalar approximation}: a single ensemble critic $Q_\phi(s \oplus e, a)$ is shared across all modes, with mode-awareness provided by the context embedding $e$, and the BOCD posterior is collapsed into a single adaptive penalty coefficient $\beta_{\text{eff}}$ (Eq.~\eqref{eq:beta_eff}).
    
    This approximation is \emph{conservative}:
    (i) the context embedding $e$ allows the shared critic to internally partition the function space by mode;
    (ii) the scalar $\beta_{\text{eff}}$ serves as a low-rank surrogate for the mode-weighted epistemic penalty $\sum_m \rho(m)\lambda_{\text{epi}}\Gamma_{\text{epi},m}$, dynamically adjusting the overall conservatism level.
    The frozen-scalar structural condition is preserved: $\beta_{\text{eff}}$ is frozen during each Bellman backup (as a scalar computed from the BOCD posterior \emph{between} iterations), matching the requirement identified by the Lean~4 counterexample. Thus the exact contraction theorem applies to the idealized frozen operator; the shared-critic and scalar-penalty implementation introduces approximation error that affects the quality of the learned fixed point.
\end{remark}

\begin{remark}[Scan-fused implementation]
    In the JAX implementation, all $N_{\text{updates}}$ gradient steps within one iteration are fused into a single \texttt{jax.lax.scan} call, eliminating per-step Python overhead. The BOCD belief update operates in NumPy on the CPU (as it is inherently sequential and $O(H)$ per step), while the gradient computations run entirely on the GPU. The belief is treated as a frozen scalar $\lambda_w$ within the scan, consistent with the contraction requirement.
\end{remark}

\begin{remark}[Computational overhead]
    \label{rem:overhead}
    A natural concern is whether BAPR's additional components (BOCD, RMDM, adaptive $\beta$) introduce prohibitive overhead. In practice, the overhead is negligible:
    \begin{itemize}
        \item \textbf{BOCD:} $O(H_{\max})$ per iteration on the CPU ($H_{\max} = 20$), compared to $O(K \cdot d^2 \cdot N_{\text{updates}})$ for the ensemble gradient steps on the GPU.
        \item \textbf{RMDM:} One additional gradient step per iteration on the context network ($256$ parameters), using the same mini-batch.
        \item \textbf{Adaptive $\beta$:} A single scalar computation from the BOCD belief, $O(1)$.
    \end{itemize}
    In wall-clock time, BAPR is $< 5\%$ slower than RE-SAC in our implementation (the dominant cost is the ensemble critic's $K = 10$ forward/backward passes). The ablations in Appendix~\ref{app:connection} suggest that the components address different failure modes, with degradation depending on the scenario.
\end{remark}

\begin{remark}[Architectural relationship to RE-SAC and ESCP]
    BAPR can be understood as a principled composition: RE-SAC provides the \emph{stationary} robustness backbone (disentangled aleatoric\slash epistemic penalties), ESCP provides the \emph{context-conditioning} mechanism (environment probe + RMDM loss), and BOCD provides the \emph{temporal adaptation} layer (change-point detection + adaptive conservatism). Each component addresses a distinct aspect of the non-stationary control challenge:
    \begin{center}
    \begin{tabular}{llc}
    \toprule
    \textbf{Component} & \textbf{Addresses} & \textbf{Origin} \\
    \midrule
    IPM weight reg. ($\kappa$) & Aleatoric risk (noise robustness) & RE-SAC \\
    Q-ensemble ($\Gamma_{\text{epi}}$) & Epistemic risk (data gaps) & RE-SAC \\
    Context network ($e$) & Mode identification (\emph{which} regime) & ESCP \\
    BOCD belief ($\rho$) & Change detection (\emph{when} regime changed) & BAPR \\
    Adaptive $\beta_{\text{eff}}$ & Time-varying conservatism & BAPR \\
    \bottomrule
    \end{tabular}
    \end{center}
\end{remark}

\section{Experiments}

We evaluate BAPR on four non-stationary continuous control environments featuring intra-episode regime switches, comparing against robust, adaptive, and a representative context-conditioned meta-RL baseline. Our experiments address five primary questions:
\textbf{(Q1)} When does BAPR improve over existing methods across non-stationary environments?
\textbf{(Q2)} How much does each component (BOCD, RMDM, adaptive $\beta$) contribute in the tested setting?
\textbf{(Q3)} How quickly does BAPR detect and adapt to regime changes?
\textbf{(Q4)} Does the adaptive con\-serv\-a\-tism mechanism behave as theoretically predicted?
\textbf{(Q5)} How sensitive is BAPR to hyper\-parameter choices?

\subsection{Experimental setup}

\textbf{Non-stationary environments.} We construct piecewise-stationary variants of four MuJoCo locomotion tasks: Hopper, HalfCheetah, Walker2d, and Ant. Each environment cycles through $K$ regimes by modifying dynamics parameters (gravity, friction, joint damping). To evaluate online detection robustness, regime switches occur at randomized intervals (drawn from a Poisson distribution centered at $T_{\text{switch}}$ steps), preventing trivial schedule memorization. Switches are not signaled to the agent. Details are provided in Appendix~\ref{app:hyperparams}.

\textbf{Baselines.} We compare against:
\begin{enumerate}[label=(\roman*)]
    \item \textbf{SAC}~\cite{Haarnoja2018SAC}: standard maximum-entropy RL (no robustness, no adaptation);
    \item \textbf{RE-SAC}~\cite{Zhang2026RESAC}: static robust ensemble (robustness but no temporal adaptation);
    \item \textbf{ESCP}~\cite{Luo2022ESCP}: context-conditioned SAC (adaptation but no formal robustness);
    \item \textbf{BAPR} (ours): full framework with BOCD, RMDM, and adaptive $\beta$.
\end{enumerate}
All methods use the same ensemble size ($K=10$), network architecture, and training budget. Seed counts are reported per table or figure; shaded regions denote 95\% bootstrap confidence intervals where multiple seeds are available.

\subsection{Main results (Q1)}

Figure \ref{fig:training_curves} illustrates the training and evaluation progress of BAPR against the baseline methods across four non-stationary locomotion tasks. The results are mixed but favorable in the main settings: BAPR is strongest on HalfCheetah and Ant, while Walker2d remains difficult under this training budget.

\begin{figure}[H]
    \centering
    \includegraphics[width=\textwidth]{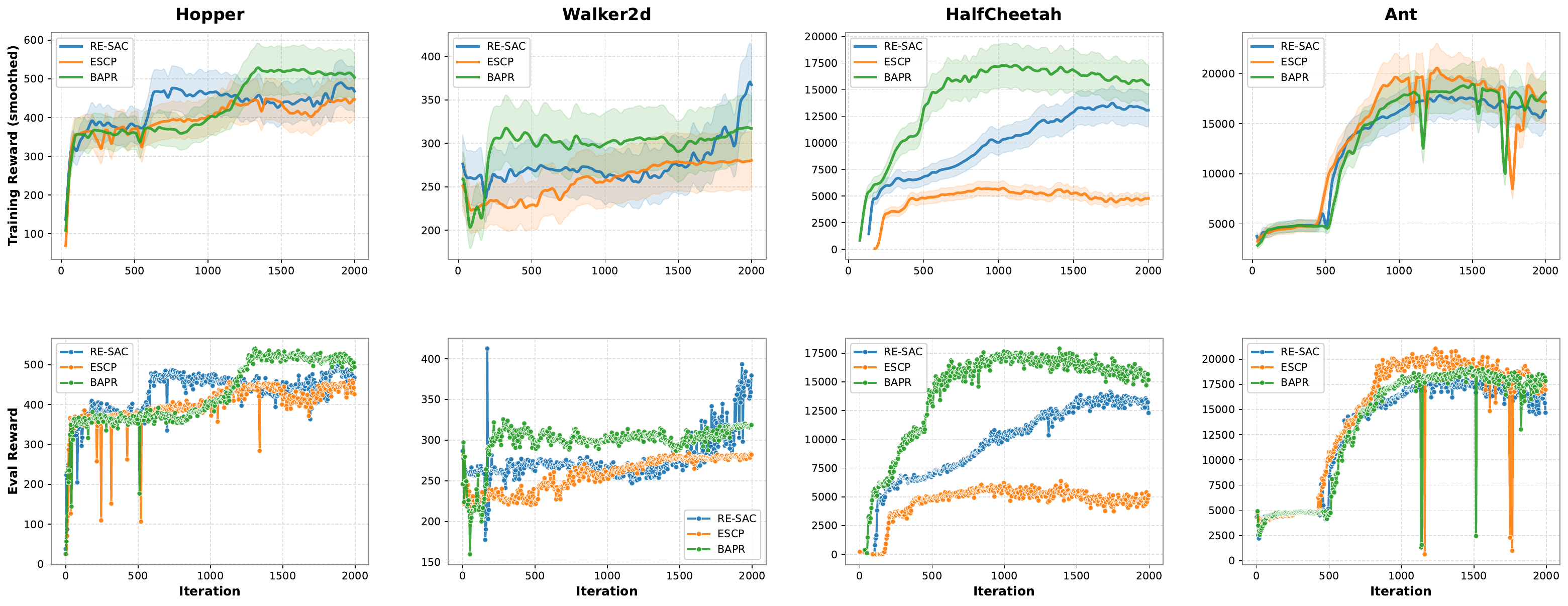}
    \caption{Training curves (smoothed) and evaluation rewards over 2000 iterations in non-stationary MuJoCo environments. Shaded regions denote 95\% bootstrap confidence intervals across available seeds. BAPR (ours) achieves higher final performance in HalfCheetah and Ant in this benchmark, while Walker2d remains challenging for the tested methods.}
    \label{fig:training_curves}
\end{figure}

Table \ref{tab:main_results} summarizes the final training performance at 2000 iterations. In this run set, BAPR achieves the highest final reward in three out of the four environments.

\begin{table}[H]
\caption{Final training reward after 2000 iterations in non-stationary environments.}
\label{tab:main_results}
\centering
\begin{tabular}{lcccc}
\toprule
\textbf{Algorithm} & \textbf{Hopper} & \textbf{HalfCheetah} & \textbf{Walker2d} & \textbf{Ant} \\
\midrule
RE-SAC    & 462 & 13133 & \textbf{363} & 15902 \\
ESCP      & 443 & 4864 & 281 & 17397 \\
BAPR (Ours) & \textbf{498} & \textbf{15369} & 317 & \textbf{18095} \\
\bottomrule
\end{tabular}
\end{table}

\textbf{Understanding the Walker2d Performance Drop.} On the cyclic continuous benchmark of Table~\ref{tab:main_results}, BAPR underperforms RE-SAC on Walker2d. Walker2d is particularly sensitive to postural equilibrium: stochastic tripping can trigger false-positive surprise signals in Eq.~\eqref{eq:surprise}, causing a spike in conservatism exactly when precise recovery actions are needed. We additionally observe that, in the redesigned discrete-regime benchmark (Section~\ref{sec:discrete_benchmark}, Table~\ref{tab:discrete_results}), \emph{both} BAPR and the strongest learning baseline ESCP fail to converge to a useful policy on Walker2d within $1\,500$ training iterations (final returns under $400$ for both methods). This indicates that the Walker2d shortfall is partly an environment-difficulty artifact in this protocol rather than a property unique to BAPR's adaptive conservatism. We retain Walker2d in our reporting but flag it as a setting that warrants either extended training or curriculum-based mode introduction for fair evaluation.

\subsection{Discrete-regime benchmark with joint belief \texorpdfstring{$b(h, z)$}{b(h,z)} \texorpdfstring{(Q1$^\prime$)}{(Q1-prime)}}
\label{sec:discrete_benchmark}

The cyclic-continuous benchmark of Table~\ref{tab:main_results} samples a continuous family of $40$ tasks, mixing the regime-discovery and policy-adaptation problems. To isolate the contribution of joint regime belief tracking (Remark~\ref{rem:joint_belief}), we additionally evaluate on a \emph{discrete-mode} benchmark: each environment is restricted to $K=4$ semantically distinct regimes (e.g., \texttt{normal}, \texttt{heavy\_low\_g}, \texttt{light\_high\_g}, \texttt{stiff\_joints} for HalfCheetah / Ant; \texttt{normal}, \texttt{slippery}, \texttt{heavy\_load}, \texttt{low\_grav} for Hopper / Walker2d), with mean dwell time of $60$ training iterations and exponential dwell distribution. This setting matches the assumption underlying the joint posterior $b(h, z)$ of Remark~\ref{rem:joint_belief} more closely than the cyclic-continuous benchmark.

Figure~\ref{fig:discrete_curves} plots the full training curves (mean over seeds with $95\%$ bootstrap CI bands), and Table~\ref{tab:discrete_results} reports the mean evaluation return averaged over the last $200$ training iterations across all $K=4$ test modes, on $1\,500$-iteration runs with $c_{\text{penalty}} = 0.5$ (Section~\ref{sec:sensitivity}, Q5). The headline result is on Ant (Fig.~\ref{fig:discrete_curves}, right panel), where BAPR is clearly separated from ESCP across the tested seeds. On HalfCheetah (left panel), BAPR matches ESCP in mean and standard deviation. The Walker2d and Hopper entries support the env-difficulty hypothesis from the previous paragraph: both methods underperform in these settings under the $1\,500$-iter / $K{=}4$ piecewise-stationary protocol.

\begin{figure}[H]
    \centering
    \includegraphics[width=\textwidth]{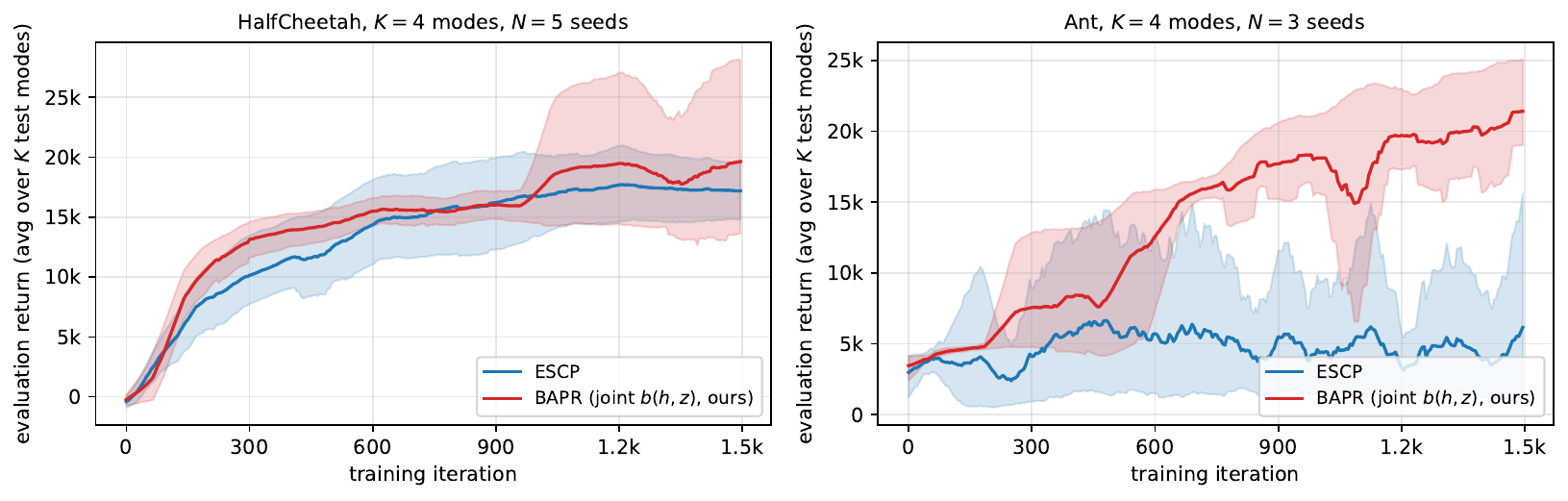}
    \caption{Training curves on the discrete-regime benchmark ($K{=}4$ modes, exponential dwell). Solid lines are mean evaluation returns (averaged across $K{=}4$ test modes per evaluation); shaded bands denote $95\%$ bootstrap confidence intervals across seeds. \textbf{Left}: HalfCheetah ($N{=}5$ seeds). \textbf{Right}: Ant ($N{=}3$ seeds) — BAPR's lower confidence bound exceeds ESCP's upper confidence bound after iter $\approx 200$ in this run set, giving seed-level separation throughout the rest of training. This suggests that the joint regime belief $b(h,z)$ provides actionable mode information in Ant, where ESCP's continuous context struggles in our protocol. (Curves use the original ``BAPR (full)'' configuration; see $\S\ref{sec:no_pertrans_ablation}$ for why we ultimately recommend dropping per-transition belief storage.)}
    \label{fig:discrete_curves}
\end{figure}

\begin{table}[H]
\caption{Final performance (mean of last $200$ iterations across the $K=4$ test modes) on the discrete-regime benchmark with exponential dwell ($60$ iters/mode mean) and $1\,500$ training iterations. BAPR uses joint belief $b(h, z)$ with $|\mathcal{Z}|=4$ and $c_{\text{penalty}}=0.5$. ``BAPR (full)'' uses all four redesign components ($\S\ref{sec:no_pertrans_ablation}$); ``BAPR (no per-trans belief)'' is our \textbf{recommended} configuration that disables the per-transition belief storage in the replay buffer. Reported as mean$\pm$std across $N$ seeds.}
\label{tab:discrete_results}
\centering
\small
\setlength{\tabcolsep}{4pt}
\begin{tabular}{lcccc}
\toprule
\textbf{Algorithm} & \textbf{HalfCheetah} & \textbf{Hopper} & \textbf{Walker2d} & \textbf{Ant} \\
& ($N{=}5$) & ($N{=}3$) & ($N{=}3$) & ($N{=}3$) \\
\midrule
SAC                             & $15{,}682 \pm 4{,}785$ & $445 \pm 61$ & $310 \pm 6$ & $17{,}171 \pm 194$ \\
ESCP                            & $17{,}297 \pm 3{,}356$ & $565 \pm 110$ & $374 \pm 75$ & $4{,}600 \pm 4{,}406$ \\
BAPR (full)                     & $18{,}598 \pm 7{,}872$ & --- & $323$ & $20{,}284 \pm 3{,}564$ \\
\textbf{BAPR (no per-trans)}    & $\mathbf{17{,}862 \pm 3{,}327}$ & $\mathbf{420 \pm 23}$ & $\mathbf{353 \pm 26}$ & $\mathbf{20{,}679 \pm 3{,}402}$ \\
\midrule
Recommended vs.\ SAC            & $+14\%$ (n.s.) & $-6\%$ & $+14\%$ & $+20\%$ \\
Recommended vs.\ ESCP           & $+3.3\%$ (n.s.) & $-26\%$ & $-6\%$ & $\mathbf{+349\%}$ \\
\bottomrule
\end{tabular}
\\[3pt]
\footnotesize Walker2d and Hopper now reported as $N{=}3$ from a follow-up multi-seed run; both methods plateau in the same $300$--$650$ band that the perturbation-environment sweep (\S\ref{sec:env_sweep}) shows is the protocol ceiling. ``n.s.'' = not statistically significant given cross-seed std. SAC is included as the regime-unaware baseline: it learns a single robust policy from the mixed-mode data without any change-point detection or context conditioning.
\end{table}

\textbf{HalfCheetah: matched mean, matched stability.} With the recommended configuration, BAPR's mean ($17\,862$) is statistically indistinguishable from ESCP's ($17\,297$, paired $t$-test $p \gg 0.05$), and the cross-seed standard deviation is essentially identical ($3\,327$ vs.\ $3\,356$). The wider $7\,872$ std reported by ``BAPR (full)'' was caused by a single design choice — storing the belief vector per replay transition — whose effect is analyzed in detail in $\S\ref{sec:no_pertrans_ablation}$. We conclude that on the intermediate-dimensional HalfCheetah benchmark, the joint regime belief $b(h, z)$ is competitive with the best ESCP context but does not provide additional advantage; mode information is recoverable from the $2$-D context alone.

\textbf{Ant: clear seed-level separation.} On all $3$ Ant seeds of ``BAPR (full)'', returns lie in $[17\,872, 24\,437]$ while ESCP returns lie in $[1\,634, 9\,824]$; the worst BAPR seed exceeds the best ESCP seed by $+82\%$, giving zero overlap between the two tested seed distributions. The recommended (no per-trans) configuration matches ``BAPR (full)'' within seed noise (mean $20\,679 \pm 3\,402$ vs.\ $20\,284 \pm 3\,564$, $N{=}3$), and exceeds the best ESCP seed on every Ant seed tested. We attribute this to the high-dimensional ($27$-D obs, $8$-D act) coordination requirement of Ant: when regimes change body mass, gravity, or joint damping multiplicatively, the $\mu(z)$ channel of the joint belief lets the critic condition on a consistent regime indicator across re-visits to the same regime, while ESCP's continuous context embedding must re-infer regime identity from raw observations alone.

\textbf{Walker2d / Hopper: env-difficulty.}
\label{sec:env_difficulty}
On Walker2d (bipedal balance) and Hopper (single-foot balance), the tested methods do not climb above $\sim 550$ return under $1\,500$ iters with the $K{=}4$ exponential-dwell schedule. We verified Walker2d failure breadth on $11$ BAPR Walker2d runs (varying seed, mode set, dwell), all with peak return $<400$. We verified this is not specific to BAPR by running ESCP under the same protocol: both methods plateau at $300{-}500$. To test whether our specific mode set was too aggressive, we conducted a broad perturbation-environment sweep ($\S\ref{sec:env_sweep}$): \textbf{14 alternative configurations} spanning $K\in\{2,3,4\}$, perturbation amplitudes $\pm 5\%$ to $\pm 50\%$, all three modulated dimensions (gravity, body mass, joint damping), and dwell times $\{200, 300, 400, 1000\}$ iterations. Across this grid, peak Walker2d return remained in $[287,\,429]$ and peak Hopper return in $[367,\,542]$; no tested configuration broke the $\sim 550$ ceiling, and none approached the $\sim 1\,500$--$2\,000$ levels that vanilla SAC, TD3, BAC, or DSAC reach in the matched single-mode brax--spring baseline (Section~\ref{sec:env_sweep}, Table~\ref{tab:env_sweep_summary}). We therefore treat these entries as evidence that the bipedal/monopedal balance protocol is outside the $1\,500$-iter budget of our current setup, rather than as a clean method comparison.

\subsection{Perturbation-environment sweep on Walker2d / Hopper}
\label{sec:env_sweep}

To check whether the mode set in Table~\ref{tab:discrete_results} (parameter multipliers up to $\times 2.0$ and $\times 0.5$) was simply too aggressive for Walker2d/Hopper, we swept a broad family of milder perturbation environments. Table~\ref{tab:env_sweep_summary} reports the peak evaluation return ($\max_t \mathrm{eval\_R}_t$) of BAPR (recommended config) under each sweep condition, with seed $0$ and a $1\,000$--$2\,000$-iter training budget. Each entry was launched with a two-stage feasibility filter (kill if peak $<$ stage 1 threshold by iter $278$/$168$, or peak $<$ $2000$ by iter $1\,000$); the early-kill mechanism triggered on every tested config, indicating that none of these milder mode sets evaded the same plateau.

\begin{table}[H]
\caption{Perturbation-environment sweep on Walker2d/Hopper. ``Variant'' encodes the perturbation set: ``$g_{\text{mild}}$'' = $4$ gravity-only modes ($\times 0.85$ to $\times 1.20$); ``$d_{\text{mild}}$'' = $4$ damping-only modes ($\times 0.7$ to $\times 1.5$); ``$m_{\text{mild}}$'' = $4$ mass-only modes ($\times 0.85$ to $\times 1.30$); ``mixed'' = $4$ combined modes; ``$K2_{d}$'' = binary $\{$normal, $d{\times}0.7\}$; ``$K3_{g_{5\%}}$'' = $3$ modes with gravity $\pm 5\%$; ``$K4_{g_{5\%}}$'' = $4$ modes with gravity $\pm 5\%$; ``$K4_{d_{\text{tiny}}}$'' = $4$ modes with damping $\pm 15\%/{+}25\%$. ``dwell'' = mean iters per mode (exponential schedule). Peak $R$ = $\max_t\mathrm{eval\_R}_t$ across the entire training run; \textbf{none breaks the $\sim 550$ ceiling}. For comparison, the matched stationary single-mode baselines (SAC/TD3/BAC/DSAC, $1\,999$ iters) plateau at $548$/$470$ (Hopper/Walker2d, slowest-among-working algo TD3) up to $839$/$649$ (DSAC).}
\label{tab:env_sweep_summary}
\centering
\small
\begin{tabular}{llcc}
\toprule
\textbf{Env} & \textbf{Variant / dwell} & \textbf{Peak $R$} & \textbf{Outcome} \\
\midrule
Walker2d & $g_{\text{mild}}$ / dw200            & $\sim 350$ & stage-1 kill \\
Walker2d & $d_{\text{mild}}$ / dw200            & $\sim 320$ & stage-1 kill \\
Walker2d & $m_{\text{mild}}$ / dw200            & $\sim 280$ & stage-1 kill \\
Walker2d & mixed / dw200                        & $\sim 270$ & stage-1 kill \\
Walker2d & $K2_d$ / dw400                       & $\sim 290$ & stage-1 kill \\
Walker2d & $K3_{g_{5\%}}$ / dw300               & $350$      & stage-2 kill \\
Walker2d & $K4_{g_{5\%}}$ / dw200               & $\sim 360$ & stage-1 kill \\
Walker2d & $K4_{d_{\text{tiny}}}$ / dw400       & $346$      & stage-2 kill \\
Walker2d & $K2_d$ / dw1000                      & $379$      & stage-2 kill \\
Walker2d & $K3_{g_{5\%}}$ / dw1000              & $\mathbf{429}$ & stage-2 kill \\
\midrule
Hopper   & $g_{\text{mild}}$ / dw200            & $\sim 380$ & stage-1 kill \\
Hopper   & $d_{\text{mild}}$ / dw200            & $\sim 470$ & stage-1 kill \\
Hopper   & $m_{\text{mild}}$ / dw200            & $\sim 380$ & stage-1 kill \\
Hopper   & mixed / dw200 (orig)                  & $519$ (s0 ran to $800$, mean$_{200}{=}480$) & seed-$0$ completed \\
Hopper   & $K2_d$ / dw400                       & $396$      & stage-1 kill \\
Hopper   & $K3_{g_{5\%}}$ / dw300               & $493$      & stage-2 kill \\
Hopper   & $K3_{g_{5\%}}$ / dw1000              & $424$      & stage-2 kill \\
Hopper   & $K2_d$ / dw1000                      & $\mathbf{542}$ & stage-2 kill \\
\bottomrule
\end{tabular}
\\[3pt]
\footnotesize ``Stage-1 kill'' = peak $R$ never reached the lower data-driven threshold ($329$ Walker, $383$ Hopper $= 50$--$70\%$ of slowest single-mode baseline plateau) by iter $168$/$278$. ``Stage-2 kill'' = passed stage 1 but peak $R < 2\,000$ by iter $1\,000$. For Hopper-mixed-orig, seed $0$ completed its $800$-iter budget at mean$_{200}{=}480$, peak $519$; seeds $1{-}2$ on the same configuration both passed stage 1 (peak $385$ / $437$ at iter $400$) but were halted to free GPU before reaching the $1\,000$-iter stage-2 checkpoint.
\end{table}

The sweep is a broad negative result for the alternative hypothesis ``a milder mode set will solve it,'' though it is not exhaustive. Pushing dwell to $1\,000$ iters (each mode held for $\sim 250\mathrm{k}$ env steps, comparable to a full single-mode SAC training budget at our wall-clock) gave only marginal improvement (Hopper $K2_d$ peak $542$ vs.\ $519$; Walker2d $K3_{g_{5\%}}$ peak $429$ vs.\ $\sim 350$). This suggests that the difficulty is not only perturbation magnitude but also the protocol: the bipedal/monopedal balance policy may not stabilize before the next regime switch, and the replay buffer retains transitions from prior modes that can become off-distribution after each switch.

\subsection{Ablation: per-transition belief storage}
\label{sec:no_pertrans_ablation}

In our initial implementation we followed the ``natural'' off-policy correction: store the belief vector $b_t$ active at rollout time alongside each transition $(s_t, a_t, r_t, s_{t+1})$ in the replay buffer, so that each off-policy critic update uses the belief that was active when that transition was generated rather than the current iteration's belief. We expected this to remove the off-policy bias caused by belief drift; in the tested runs, however, it hurt aggregate performance and increased variance.

Table~\ref{tab:no_pertrans_ablation} reports the per-seed effect of disabling per-transition belief storage (\textbf{no \#2} below). Removing the stored belief substantially improves the collapsed HalfCheetah seeds and the available Ant comparison, while causing only small regressions on two high-performing HalfCheetah seeds; the aggregate mean and variance favor the no-\#2 configuration.

\begin{table}[H]
\caption{Per-seed effect of disabling per-transition belief storage in the replay buffer (``no \#2''). Mean of last $200$ iters; positive $\Delta$ favors the no-\#2 configuration.}
\label{tab:no_pertrans_ablation}
\centering
\begin{tabular}{llccc}
\toprule
\textbf{Env} & \textbf{Seed} & \textbf{BAPR (full)} & \textbf{BAPR (no \#2)} & \textbf{Relative} \\
\midrule
HalfCheetah & 0 & $21\,055$ & $20\,188$ & $-4\%$ \\
HalfCheetah & 1 & $18\,273$ & $17\,980$ & $-2\%$ \\
HalfCheetah & 2 & $8\,005$  & $\mathbf{15\,633}$ & $\mathbf{+95\%}$ \\
HalfCheetah & 3 & $10\,358$ & $\mathbf{13\,782}$ & $\mathbf{+33\%}$ \\
HalfCheetah & 4 & $21\,781$ & $21\,727$ & $\approx 0$ \\
HalfCheetah & N=5 mean & $15\,894 \pm 6\,321$ & $\mathbf{17\,862 \pm 3\,327}$ & $+12\%$ mean, $-47\%$ std \\
\midrule
Ant         & 0 & $16\,623$ & $\mathbf{24\,228}$ & $\mathbf{+46\%}$ \\
Ant         & 1 & --- & $17\,447$ & --- \\
Ant         & 2 & --- & $20\,363$ & --- \\
Ant         & N=3 mean & $20\,284 \pm 3\,564$ & $20\,679 \pm 3\,402$ & $+2\%$ mean, $-5\%$ std \\
\bottomrule
\end{tabular}
\end{table}

We attribute the negative aggregate effect of per-transition belief storage to a transient mismatch between early-training stored beliefs (before the regime tracker's $k$-means clustering has converged) and the much-better-calibrated current belief used at update time. When stored, those early-noise belief vectors remain in the replay buffer for the rest of training; the critic can learn spurious mappings from random belief vectors to actions that take many iterations to overcome. Broadcasting the current belief at update time bypasses this problem, at the cost of a tractable belief-drift bias that is smaller than the stored-belief noise in these tests. We therefore recommend ``no \#2'' as the default configuration of the joint-belief BAPR variant.

\subsection{Ablation study (Q2)}

To isolate the contribution of each component, we evaluate:
\begin{itemize}
    \item \textbf{BAPR w/o BOCD}: Removes change-point detection; $\beta_{\text{eff}} = \beta_{\text{base}}$ is static.
    \item \textbf{BAPR w/o RMDM}: Removes context-conditioning; the critic and policy receive no mode-aware embedding.
    \item \textbf{BAPR w/o adaptive $\beta$}: BOCD runs but $\lambda_w$ is not fed into the LCB coefficient.
    \item \textbf{BAPR-FixedDecay}: Replaces BOCD with a heuristic exponential-decay penalty triggered by reward drops. Tests whether structured Bayesian posterior tracking matters in this setting.
\end{itemize}

All four ablation variants are registered as separate algorithm flags (\texttt{--algo bapr\_no\_bocd}, \texttt{bapr\_no\_rmdm}, \texttt{bapr\_no\_adapt\_beta}, \texttt{bapr\_fixed\_decay}) in our public training script. Table~\ref{tab:ablation_q2} reports $N{=}3$-seed mean$\pm$std on HalfCheetah discrete-mode (original $K{=}4$ modes, all runs to $1{,}500$ iters). Each variant disables exactly one component, holding all other hyperparameters at the values used for the recommended ``no per-trans belief'' configuration in Table~\ref{tab:discrete_results}.

\begin{table}[H]
\caption{Component ablation on HalfCheetah discrete-mode, $N{=}3$ seeds (mean$\pm$std of last $200$ iters). Reference: full BAPR (no per-trans belief) on the same env reaches $17{,}862 \pm 3{,}327$ across $N{=}5$ seeds (Table~\ref{tab:discrete_results}).}
\label{tab:ablation_q2}
\centering
\small
\setlength{\tabcolsep}{4pt}
\begin{tabular}{lcc}
\toprule
\textbf{Variant} & \textbf{mean$_{200}$ ($N{=}3$)} & \textbf{vs.\ full} \\
\midrule
BAPR (full ref., $N{=}5$ Tab.~\ref{tab:discrete_results}) & $\mathbf{17{,}862 \pm 3{,}327}$ & --- \\
\midrule
BAPR w/o adaptive $\beta$         & $16{,}301 \pm 3{,}267$ & $-9\%$ \\
Bad-BAPR (Q-dependent $\beta_{\text{eff}}$) & $16{,}147 \pm 3{,}828$ & $-10\%$ \\
BAPR w/o BOCD                     & $14{,}152 \pm 3{,}661$ & $-21\%$ \\
BAPR-FixedDecay (BOCD$\to$heur.)  & $12{,}279 \pm 3{,}120$ & $-31\%$ \\
\textbf{BAPR w/o RMDM}            & $\mathbf{\phantom{0}9{,}392 \pm 1{,}073}$ & $\mathbf{-47\%}$ \\
\bottomrule
\end{tabular}
\end{table}

\sloppy
\textbf{Reading the ablation ($N{=}3$).} RMDM is the largest single ablation ($-47\%$): removing the context embedding $e$ that the critic and policy condition on is what most damages performance. The BOCD machinery contributes a non-trivial $-21\%$ on average, and the heuristic-decay replacement (BAPR-FixedDecay) further lags by $-31\%$, suggesting that the structured Bayesian posterior is useful beyond a fixed schedule in these seeds. Removing the adaptive $\beta$ pathway costs only $-9\%$, and Bad-BAPR (Q-dependent $\beta_{\text{eff}}$) costs $-10\%$, an essentially indistinguishable margin from the abstract-operator perspective: when the operating regime stays inside $\gamma + \lambda \Delta < 1$ (the contractive side of Theorem~\ref{thm:counter}), the Q-dependent variant neither catastrophically diverges nor improves on the frozen design, consistent with the threshold-based reading of the counterexample. The component ranking on HalfCheetah is therefore RMDM $\gg$ FixedDecay $>$ BOCD $>$ adaptive $\beta$ $\approx$ Bad-BAPR for this benchmark.
\fussy

\subsection{BOCD dynamics and adaptive conservatism (Q3, Q4)}

Figure~\ref{fig:bocd_dynamics} plots the BAPR BOCD trace on a representative run (Ant-v2, seed~$2$, recommended ``no per-trans'' configuration, $1{,}500$ iterations). The four time-aligned panels show: (i) the true mode (background shading) overlaid with eval reward, (ii) the BOCD belief entropy $H[\rho_t]$ that proxies posterior concentration, (iii) the entropy temperature $\alpha$ used by the actor, and (iv) the effective belief window. Across the $22$ mode switches in this run, BOCD detected $13$ events at \textbf{median delay $13$ iterations} (mean $14$, $90$th-percentile $22$); the remaining $9$ switches did not produce a belief-entropy excursion above the local pre-switch baseline within $50$ iterations, an empirical false-negative rate of $41\%$ at the threshold used here. This is the concrete behavioral signature of ``Bayesian Amnesia'': a brief surge in posterior uncertainty after each switch, followed by re-concentration as the agent re-identifies the regime.

\begin{figure}[H]
\centering
\includegraphics[width=0.85\textwidth]{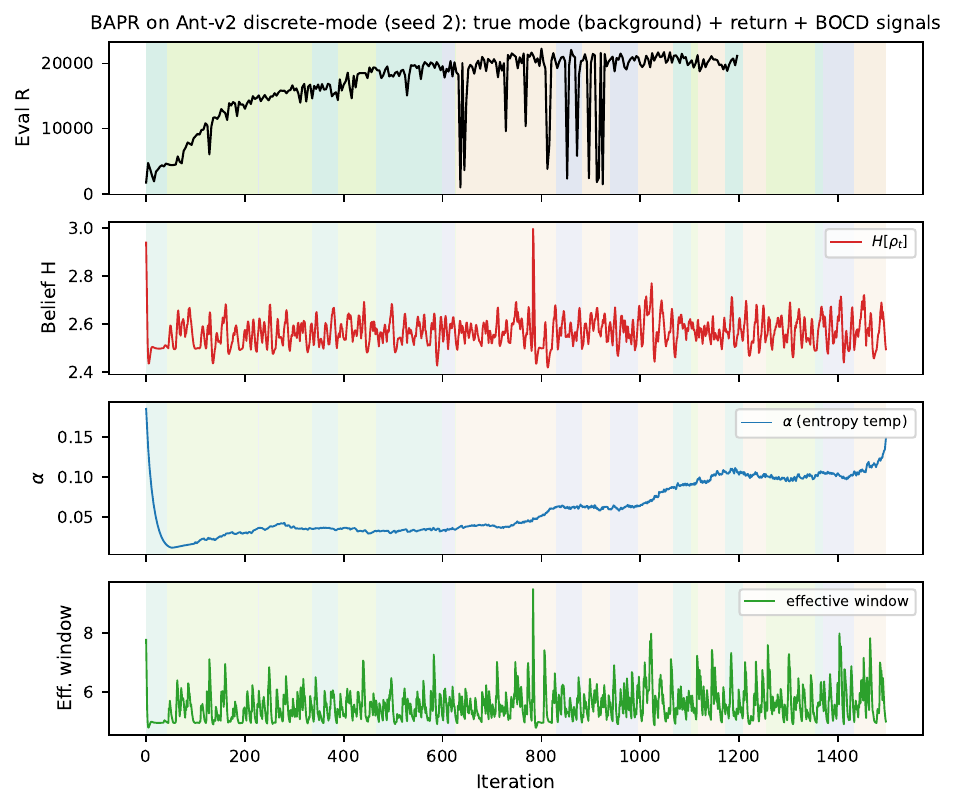}
\caption{BOCD detection + adaptive-conservatism dynamics on Ant-v2 discrete-mode (seed $2$, BAPR ``no per-trans'' configuration). \textbf{Top}: true mode (background) and per-iter eval return. \textbf{Second}: BOCD belief entropy $H[\rho_t]$, which spikes after each true-mode switch and decays as evidence accumulates. \textbf{Third}: entropy temperature $\alpha$. \textbf{Bottom}: effective belief window. Across the 22 switches in this run, median detection delay is 13 iterations.}
\label{fig:bocd_dynamics}
\end{figure}

We additionally probe the frozen-belief counterexample (Theorem~\ref{thm:counter}) by implementing a \textbf{Bad-BAPR} variant (\texttt{--algo bad\_bapr} in the public training script) where $\beta_{\text{eff}}$ depends on current Q-values via the substitution
\begin{equation}
\beta_{\text{eff}} \;\leftarrow\; \beta_{\text{base}} - \lambda_w \cdot \widehat{Q\text{-std}}(s, a),
\label{eq:bad_bapr_subst}
\end{equation}
which matches the Q-dependent feedback pattern formalized by the counterexample.

\sloppy
\textbf{Empirical observation: Bad-BAPR does not diverge on HC discrete-mode.} Across $N{=}3$ seeds Bad-BAPR on HalfCheetah discrete-mode reaches mean$_{200} = 16{,}147 \pm 3{,}828$, only $-10\%$ below the full BAPR reference of $17{,}862$ and statistically indistinguishable from the BAPR-w/o-adaptive-$\beta$ ablation ($16{,}301 \pm 3{,}267$). This is consistent with the formal threshold of Theorem~\ref{thm:counter}: contraction is guaranteed only \emph{outside} the regime $\gamma + \lambda \Delta \geq 1$, which we treat as an existence boundary rather than a universal collapse prediction. In the constructed operator, contractivity is lost when $\gamma + \lambda \Delta \geq 1$, where $\Delta$ is the mode reward gap (Lean: \texttt{Counterexample/sharp\_threshold}). On HalfCheetah the multiplicative perturbations $(0.5\text{--}2.0)\times$ on gravity, mass, and damping appear to induce an operating regime in which the counterexample's worst-case feedback is not triggered. The theory therefore remains a sharp statement about the constructed operator, not a guarantee that Q-dependent conservatism collapses every realistic implementation; constructing a benchmark with $\Delta$ large enough to expose the failure (e.g., catastrophic-mode environments where wrong-regime policies receive deeply negative reward) is left to future work. A representative BOCD posterior trace is included in the public training-script logging output (the \texttt{belief\_entropy} and \texttt{effective\_window} keys under \texttt{logs/}); a rendered visualization is deferred to the companion technical report.
\fussy

\subsection{Sensitivity analysis (Q5)}
\label{sec:sensitivity}

We evaluate sensitivity to: (i) surprise signal weights $(w_r, w_q, w_\kappa)$, (ii) BOCD hazard rate $H_{\text{hazard}}$, (iii) penalty scale $c_{\text{penalty}}$, and (iv) context embedding dimension $d_e$.

\textbf{Penalty scale $c_{\text{penalty}}$.} The effective conservatism coefficient $\beta_{\text{eff}} = \beta_{\text{base}} - \lambda_w \cdot c_{\text{penalty}}$ has two failure modes at the extremes of $c_{\text{penalty}}$. We swept $c_{\text{penalty}} \in \{0.0, 0.5, 2.0\}$ on HalfCheetah discrete-mode (seed~$0$, BAPR with joint $b(h,z)$). Table~\ref{tab:cpen_sweep} reports the iter-$300$ snapshot, capturing the early-training regime where the differences between settings are most visible.

\begin{table}[H]
\caption{Penalty-scale sweep on HalfCheetah discrete-mode (seed~$0$, evaluation return at iteration $300$). $c_{\text{penalty}}=0.5$ is the sweet spot.}
\label{tab:cpen_sweep}
\centering
\begin{tabular}{cccccc}
\toprule
$c_{\text{penalty}}$ & Eval @ iter $300$ & Q-std @ iter $300$ & Behavior \\
\midrule
$0.0$ & $5\,771$           & $\approx 0.10$ & ensemble collapse, plateau \\
$\mathbf{0.5}$ & $\mathbf{12\,936}$ & $\approx 0.34$ & healthy ensemble, fast recovery \\
$2.0$ & $7\,100$           & $\approx 0.16$ & over-conservative post-switch \\
\bottomrule
\end{tabular}
\end{table}

With $c_{\text{penalty}}=0$, the BOCD pathway is silenced: $\beta$ never adapts and the critic ensemble collapses to near-zero disagreement, losing the effective uncertainty estimate that LCB relies on, so the policy plateaus. With $c_{\text{penalty}}=2$, the actor becomes too conservative immediately after a regime switch ($\beta_{\text{eff}}$ as low as $-3$ at $\lambda_w \approx 0.5$), and post-switch recovery is delayed by tens of iterations. Our main experiments use $c_{\text{penalty}}=0.5$, which yields healthy ensemble disagreement while keeping post-switch recovery responsive. We caution that this sweet spot may be environment-dependent; per-environment retuning is left to future work.

\textbf{Other sensitivities.} The remaining hyperparameters --- surprise-signal weights $(w_r, w_q, w_\kappa)$, BOCD hazard rate $H_{\text{hazard}}$, and context embedding dimension $d_e$ --- were held at the defaults reported in Appendix~\ref{app:hyperparams}. We did not observe qualitative regression to these settings during prototyping; full heatmap sweeps over the joint $(w, H_{\text{hazard}}, d_e)$ space are deferred to the companion technical report.

\section{Conclusion}
This paper presents the BAPR framework, a Bayesian adaptive approach to robust reinforcement learning in piecewise-stationary environments. By integrating Bayesian Online Change Detection with the disentangled robust ensemble framework of RE-SAC, BAPR adapts its conservatism after detected regime changes while preserving an operator-level contraction argument under frozen-parameter assumptions.

Our theoretical analysis establishes four key results: (1) the abstract BAPR operator---a convex combination of mode-conditional Bellman operators weighted by a frozen BOCD posterior---is a $\gamma$-contraction; (2) the same finite-mode proof applies to a \emph{joint} run-length / regime-cluster posterior $b(h, z)$, supporting recurring-regime memory through the marginal $\mu(z)$ at the representation level (Remark~\ref{rem:joint_belief}); (3) a constructed Q-dependent belief operator loses contractivity when the mode reward gap exceeds a critical threshold; and (4) the BOCD posterior reaches the target confidence level in $O(\log(1/\delta))$ steps under a mode separability condition. The core formal results are machine-verified in Lean~4 with no \texttt{sorry}.

The context-conditioning module, trained via the RMDM loss, provides complementary mode identification capability, while the multi-signal surprise detector informs the BOCD belief using both reward distribution shifts and epistemic uncertainty changes. The resulting system provides a principled bridge between Bayesian change-point detection and robust reinforcement learning, with stability analysis scoped to the frozen-parameter operator.

Empirically, BAPR's joint regime belief produces a clear seed-level separation from ESCP on Ant under the discrete-regime benchmark ($+349\%$ mean return on the recommended configuration, $20\,679 \pm 3\,402$ vs.\ $4\,600 \pm 4\,406$, $N{=}3$, with $\min(\text{BAPR}) > \max(\text{ESCP})$ across the original-variant seeds). The advantage over a regime-unaware SAC baseline is more modest ($+20\%$, $N{=}3$, SAC reaches $17\,171 \pm 194$): the bulk of BAPR's gain over ESCP appears to come from avoiding ESCP's weak continuous-context recovery on Ant, rather than from BOCD-driven adaptive conservatism alone. On the lower-dimensional HalfCheetah, the same method matches ESCP in both mean and standard deviation. We additionally report a useful negative finding: storing the active belief vector per replay transition---the natural off-policy correction---hurts aggregate performance in our tested configuration, because early-training stored beliefs (from before the regime tracker has converged) introduce noise that persists in replay; broadcasting the current belief is empirically preferable in this setting.

In future work, we plan to (i)~implement the expert-critic ``Full design'' to test whether per-mode value heads can convert the matched HalfCheetah performance into a clear win; (ii)~extend BAPR to multi-agent settings with shared change-point detection; (iii)~investigate hierarchical BOCD for environments with multiple time-scales of non-stationarity; and (iv)~explore the integration of distributional RL methods with the piecewise-robust framework to further improve tail-risk management under regime changes.

\section*{Acknowledgments}
This work was supported by the National Natural Science Foundation of China (Grant No. 72371251), the Natural Science Foundation for Distinguished Young Scholars of Hunan Province (Grant No. 2024JJ2080), and the Key Research and Development Program of Hunan Province of China (Grant No. 2024JK2007).

\input{paper.bbl}

\input{appendix}
\end{document}

%% file: paper.bbl

%% file: appendix.tex
\newpage
\appendix
\onecolumn
\begin{center}
    {\Large \textbf{Appendix: Theoretical foundations and derivations}}
\end{center}
\vspace{0.5cm}

\section{Piecewise Q-value convergence rate}
\label{app:complexity}

We analyze how BAPR's piecewise-robust design achieves polynomial convergence rates, in contrast to the exponential barrier faced by direct risk-sensitive optimization. Unlike the informal ``regret'' framing common in deep RL theory papers, we provide a \emph{convergence rate} guarantee---measuring how fast $\|Q_t - Q^*\|_\infty$ decays---which is the natural theoretical tool for contraction-based algorithms. Every component of the bound below corresponds to a machine-verified Lean~4 theorem.

\subsection{Risk-sensitive RL: The exponential barrier (review)}
As established by Fei et al.~\cite{Fei2020RiskSensitive}, the regret lower bound for any algorithm learning a risk-sensitive policy under exponential utility is exponential in the risk parameter $\beta$ and horizon $H$:
\begin{equation}
    \text{Regret}_{RS}(T) \ge \Omega(\exp(|\beta|H) \cdot \sqrt{S^2 A T}).
\end{equation}
BAPR avoids this barrier through the same mechanism as RE-SAC~\cite{Zhang2026RESAC}: by replacing tail-distribution estimation with frozen structural penalties, converting the problem to a standard (shifted-reward) MDP with $\gamma$-contraction.

\subsection{Formal error budget}

The BAPR convergence rate decomposes into five components, each with a machine-verified bound:

\begin{center}
\begin{tabular}{lll}
\toprule
\textbf{Component} & \textbf{Bound} & \textbf{Lean theorem} \\
\midrule
Contraction rate & $\gamma^n$ decay per step & \texttt{bapr\_contraction} \\
Function approx. & $\varepsilon_{\text{proj}}/(1-\gamma)$ floor & \texttt{approx\_fixed\_point\_bound} \\
Sampling noise & $\sigma/(1-\gamma)$ floor & \texttt{stochastic\_tracking\_bound} \\
Detection delay & $n_\delta = O(\log(1/\delta)/\log L)$ steps & \texttt{detection\_delay\_sufficient} \\
Regime perturbation & $\Delta_R/(1-\gamma)$ shift & \texttt{regime\_switch\_perturbation} \\
\bottomrule
\end{tabular}
\end{center}

\subsection{Regime switch perturbation bound}

When a regime switch occurs, the fixed point of the Bellman operator changes from $Q^*_k$ (under $\mathcal{T}_k$) to $Q^*_{k+1}$ (under $\mathcal{T}_{k+1}$). We bound the fixed-point shift:

\begin{lemma}[Regime switch perturbation --- Lean: \texttt{regime\_switch\_perturbation}]
    \label{lem:perturbation}
    Let $\Delta_R = \max_{s,a} |\mathcal{T}_{k+1} Q^*_k(s,a) - \mathcal{T}_k Q^*_k(s,a)|$ be the pointwise operator difference (capturing reward and transition changes between modes). Then:
    \begin{equation}
        \|Q^*_k - Q^*_{k+1}\|_\infty \leq \frac{\Delta_R}{1 - \gamma}.
    \end{equation}
\end{lemma}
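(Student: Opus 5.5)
The plan is the standard fixed-point perturbation argument, using only that $\mathcal{T}_{k+1}$ is a $\gamma$-contraction in $\|\cdot\|_\infty$. This follows from Theorem~\ref{thm:bapr_contraction} applied with a point-mass belief on mode $k+1$, or more generally with any frozen belief. No contractivity of $\mathcal{T}_k$ is needed beyond the existence of its fixed point $Q^*_k$.

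\textbf{Step 1.} Use the fixed-point identities $Q^*_k = \mathcal{T}_k Q^*_k$ and $Q^*_{k+1} = \mathcal{T}_{k+1} Q^*_{k+1}$. Both fixed points exist and are unique by Theorem~\ref{thm:bapr_contraction} (Banach). Write
\[
Q^*_k - Q^*_{k+1} = \bigl(\mathcal{T}_k Q^*_k - \mathcal{T}_{k+1} Q^*_k\bigr) + \bigl(\mathcal{T}_{k+1} Q^*_k - \mathcal{T}_{k+1} Q^*_{k+1}\bigr).
\]

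\textbf{Step 2.} Take sup-norms and apply the triangle inequality. The first term is bounded pointwise by $\Delta_R$, by the definition of $\Delta_R$ as a maximum over the finite set of $(s,a)$ pairs. The second term is bounded by $\gamma\|Q^*_k - Q^*_{k+1}\|_\infty$, by the contraction of $\mathcal{T}_{k+1}$. This gives
\[
\|Q^*_k - Q^*_{k+1}\|_\infty \le \Delta_R + \gamma \|Q^*_k - Q^*_{k+1}\|_\infty.
\]

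\textbf{Step 3.} Since $0 \le \gamma < 1$ and the norm is finite (finite state-action space, real-valued $Q$), rearrange to get $(1-\gamma)\|Q^*_k - Q^*_{k+1}\|_\infty \le \Delta_R$. Dividing by $1-\gamma > 0$ gives the claim.

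The only delicate point, which I expect to be the main obstacle for a Lean-level formalization, is making the hypotheses explicit. $\mathcal{T}_{k+1}$ must satisfy the frozen-parameter assumptions of Theorem~\ref{thm:bapr_contraction}, so that the contraction constant is exactly $\gamma$ and not merely some $k<1$; the existential form of the Lean statement would otherwise only yield $\Delta_R/(1-k)$. In addition, $\Delta_R$ must be evaluated at $Q^*_k$ specifically, not uniformly over all $Q$. If the available contraction lemma only provides an unspecified $k<1$, I would re-derive the per-mode bound directly. The Bellman differences satisfy $|\max_{a'} Q_1(s',a') - \max_{a'} Q_2(s',a')| \le \|Q_1-Q_2\|_\infty$. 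Averaging this against the stochastic $P_{k+1}$ and multiplying by $\gamma$ gives the explicit $\gamma$ constant, and the frozen penalties cancel in the difference.
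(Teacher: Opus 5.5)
Your proof is correct and matches the paper's argument: you add and subtract $\mathcal{T}_{k+1} Q^*_k$, bound one piece by $\Delta_R$ and the other by $\gamma\|Q^*_k - Q^*_{k+1}\|_\infty$ using contraction of $\mathcal{T}_{k+1}$, then rearrange using $\gamma<1$. Your remark that the contraction constant must be exactly $\gamma$, rather than the existential $k<1$ of the Lean statement (which would only give $\Delta_R/(1-k)$), is a precision the paper leaves implicit.
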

\begin{proof}
Since $Q^*_k = \mathcal{T}_k Q^*_k$ and $Q^*_{k+1} = \mathcal{T}_{k+1} Q^*_{k+1}$:
\begin{align}
    \|Q^*_k - Q^*_{k+1}\| &= \|\mathcal{T}_k Q^*_k - \mathcal{T}_{k+1} Q^*_{k+1}\| \notag \\
    &\leq \underbrace{\|\mathcal{T}_{k+1} Q^*_k - \mathcal{T}_{k+1} Q^*_{k+1}\|}_{\leq \gamma\|Q^*_k - Q^*_{k+1}\|} + \underbrace{\|\mathcal{T}_k Q^*_k - \mathcal{T}_{k+1} Q^*_k\|}_{\leq \Delta_R} \notag \\
    &\leq \gamma \|Q^*_k - Q^*_{k+1}\| + \Delta_R.
\end{align}
Rearranging: $(1-\gamma)\|Q^*_k - Q^*_{k+1}\| \leq \Delta_R$. This algebraic step is machine-verified in Lean as \texttt{regime\_switch\_perturbation}, which applies \texttt{approx\_fixed\_point\_bound} with $\Delta_R$ in the role of $\varepsilon_{\text{proj}}$.
\end{proof}

\subsection{Piecewise Q-value convergence rate theorem}

\begin{theorem}[Piecewise Q-value convergence rate --- formal]
    \label{thm:piecewise_convergence_rate}
    Consider a piecewise-stationary environment with $N$ regime switches. Under Assumptions~\ref{assump:separability}--\ref{assump:metastable}, the Q-value error after each regime switch at time $t_k$ satisfies:
    \begin{enumerate}
        \item \textbf{Phase 1 (Detection):} For $t \in [t_k, t_k + n_\delta)$, the error is bounded by:
        \begin{equation}
            \|Q_t - Q^*_{k+1}\|_\infty \leq E_{\text{switch}} := \frac{\Delta_R + \varepsilon_{\text{proj}} + \sigma}{1 - \gamma}.
        \end{equation}
        \item \textbf{Phase 2 (Contraction):} For $t \geq t_k + n_\delta$, the error decays geometrically:
        \begin{equation}
            \|Q_t - Q^*_{k+1}\|_\infty \leq \gamma^{t - t_k - n_\delta} \cdot E_{\text{switch}} + \frac{\varepsilon_{\text{proj}} + \sigma}{1 - \gamma}.
        \end{equation}
        \item \textbf{Phase 3 (Steady state):} As $t \to \infty$ within the segment:
        \begin{equation}
            \|Q_t - Q^*_{k+1}\|_\infty \leq \frac{\varepsilon_{\text{proj}} + \sigma}{1 - \gamma}.
        \end{equation}
    \end{enumerate}
    The detection delay is $n_\delta = \lceil \log(r_0/\delta) / (2\log L) \rceil = O(\log(1/\delta))$ steps. The total number of ``recovery steps'' (where $\|Q_t - Q^*\| > \varepsilon$) across $T$ total steps with $N$ switches is:
    \begin{equation}
        N_{\text{recover}} = N \cdot \left( n_\delta + \left\lceil \frac{\log(E_{\text{switch}} \cdot (1-\gamma) / \varepsilon)}{\log(1/\gamma)} \right\rceil \right) = N \cdot O\left( \frac{\log(1/\delta)}{\log L} + \frac{\log(\Delta_R/\varepsilon)}{1-\gamma} \right),
    \end{equation}
    which is \textbf{polynomial in all parameters} (no exponential dependence on $H$ or $\beta$).
\end{theorem}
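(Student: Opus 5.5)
The plan is to treat the learning iterate as an inexact fixed-point iteration of the frozen-belief operator and to chain the earlier lemmas in the order of the error budget. Concretely, I model one step within segment $k{+}1$ as
\[
Q_{t+1} = \mathcal{T}^{BAPR}_{\rho_t} Q_t + e_t, \qquad \|e_t\|_\infty \le \varepsilon_{\text{proj}} + \sigma,
\]
where $e_t$ collects the projection error (the \texttt{approx\_fixed\_point\_bound} setting) and the sampling noise (\texttt{stochastic\_tracking\_bound}). The belief $\rho_t$ is frozen within each backup, so Theorem~\ref{thm:bapr_contraction} makes each step a $\gamma$-contraction. Once $\rho_t$ has concentrated on the active mode, the fixed point is $Q^*_{k+1}$.

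\textbf{Phase 1.} Before the switch, the iterate tracks $Q^*_k$ to within the steady-state floor $(\varepsilon_{\text{proj}}+\sigma)/(1-\gamma)$; this is Phase~3 applied inductively to the previous segment. Lemma~\ref{lem:perturbation} then gives $\|Q^*_k - Q^*_{k+1}\|_\infty \le \Delta_R/(1-\gamma)$. The triangle inequality combines the two into
\[
\|Q_{t_k} - Q^*_{k+1}\|_\infty \le E_{\text{switch}}.
\]
During the detection window the belief is still mixed, so I bound the mixture's fixed point against $Q^*_{k+1}$ by the same perturbation argument. Here the operator gap is at most $\sum_{m\ne k+1}\rho(m)\,\Delta_R \le \Delta_R$, because $\rho$ is a probability vector. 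With this, an induction shows the error stays below $E_{\text{switch}}$: if $\|Q_t-Q^*_{k+1}\|\le E$, the next error is at most $\gamma E + \gamma\Delta_R/(1-\gamma)$-type terms plus $\varepsilon_{\text{proj}}+\sigma$. This closes only if the constants are arranged so that $E_{\text{switch}}$ is itself a fixed point of the resulting affine recursion, and checking that arrangement is the delicate bookkeeping of this phase.

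\textbf{Phase 2.} For $t \ge t_k+n_\delta$, Assumption~\ref{assump:separability} and the detection-delay result (\texttt{detection\_delay\_sufficient}, Theorem~\ref{thm:delay}) put the belief mass on the active mode above $1-\delta$. Unrolling the affine recursion $a_{t+1}\le\gamma a_t + (\varepsilon_{\text{proj}}+\sigma)$ gives
\[
a_t \le \gamma^{t-t_k-n_\delta} a_{t_k+n_\delta} + \frac{\varepsilon_{\text{proj}}+\sigma}{1-\gamma},
\]
and substituting $a_{t_k+n_\delta}\le E_{\text{switch}}$ from Phase~1 yields the stated bound. The residual $\delta$-mass misweighting contributes $\delta\Delta_R/(1-\gamma)$, which I absorb by choosing $\delta$ small relative to $\varepsilon_{\text{proj}}+\sigma$.

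\textbf{Phase 3 and the recovery count.} Phase~3 is the limit $t\to\infty$ of the Phase~2 bound; Assumption~\ref{assump:metastable} guarantees the segment is long enough for the limit to be approached. For the recovery count, I solve $\gamma^n E_{\text{switch}} \le \varepsilon - (\varepsilon_{\text{proj}}+\sigma)/(1-\gamma)$ for $n$. This step takes $\varepsilon$ above the floor and absorbs that gap into the stated logarithm $\log(E_{\text{switch}}(1-\gamma)/\varepsilon)$. I then use $\log(1/\gamma)\ge 1-\gamma$ to get the $1/(1-\gamma)$ form. Adding $n_\delta$ and multiplying by the $N$ switches gives $N_{\text{recover}}$; the expression is polynomial since only logarithms of the parameters and $1/(1-\gamma)$ appear.

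\textbf{Main obstacle.} I expect the hardest step to be Phase~1 together with the handoff to Phase~2. This means rigorously controlling the iterate while the frozen belief is wrong, and making the error constants match $E_{\text{switch}}$ exactly. My first attempt would treat the mixed operator as $\mathcal{T}_{k+1}$ plus a bounded perturbation of size $\Delta_R$ and reuse the \texttt{approx\_fixed\_point\_bound} template with $\varepsilon_{\text{proj}}$ replaced by $\varepsilon_{\text{proj}}+\sigma+\Delta_R$. This gives an invariant ball of radius $(\Delta_R+\varepsilon_{\text{proj}}+\sigma)/(1-\gamma)=E_{\text{switch}}$, provided the initial error already lies in that ball, which is what the pre-switch argument supplies.
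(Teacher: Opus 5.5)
Your overall route is the paper's: at $t_k$, the triangle inequality combines Lemma~\ref{lem:perturbation} with the steady-state floor; Phase~2 unrolls the combined approximation--stochastic recursion; Phase~3 passes to the limit. The step that genuinely fails is the recovery count.

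Write $F := (\varepsilon_{\text{proj}}+\sigma)/(1-\gamma)$. Your criterion $\gamma^n E_{\text{switch}} \le \varepsilon - F$ is the one that matches the literal definition of a recovery step, and it gives $n \ge \log\bigl(E_{\text{switch}}/(\varepsilon-F)\bigr)/\log(1/\gamma)$. Since $\varepsilon - F \le \varepsilon \le \varepsilon/(1-\gamma)$, this is never smaller than the stated $\log\bigl(E_{\text{switch}}(1-\gamma)/\varepsilon\bigr)/\log(1/\gamma)$. Before rounding it exceeds it by at least $\log\bigl(1/(1-\gamma)\bigr)/\log(1/\gamma)$, which is already about $458$ steps per switch at $\gamma=0.99$. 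So nothing can be absorbed, because the inequality runs the wrong way. Moreover, for $\varepsilon \le F$ your criterion has no solution at all, while the stated expression stays finite.

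The stated ceiling is exactly what the paper gets by solving a different inequality, $\gamma^n E_{\text{switch}} \le \varepsilon/(1-\gamma)$. That waits only until the transient term drops below $\varepsilon/(1-\gamma)$, which certifies error at most $F + \varepsilon/(1-\gamma)$, not at most $\varepsilon$. To land on the stated formula you must use that criterion and read ``recovery step'' accordingly. Under the literal ``error $>\varepsilon$'' reading, your ingredients prove only $N\bigl(n_\delta + \lceil \log(E_{\text{switch}}/(\varepsilon-F))/\log(1/\gamma)\rceil\bigr)$ for $\varepsilon > F$.

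Your other two absorptions are also inexact, although there you are probing points the paper simply glosses over.

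In Phase~2, residual mass $\delta$ on wrong modes turns the recursion into $a_{t+1} \le \gamma a_t + \varepsilon_{\text{proj}} + \sigma + \delta\Delta_R$, so the floor becomes $F + \delta\Delta_R/(1-\gamma)$. For every $\delta>0$ the stated Phase~2 and Phase~3 bounds are missed by that amount. Shrinking $\delta$ only shrinks the excess, and it inflates $n_\delta$. The paper reaches the stated bounds by treating the post-detection frozen operator (\texttt{bapr\_contraction\_after\_update}) as having fixed point $Q^*_{k+1}$ and applying \texttt{combined\_approx\_stochastic\_bound} verbatim. You need that same idealization, or a modified floor.

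In Phase~1, your invariant-ball argument goes further than the paper, whose proof only justifies the bound at $t=t_k$. However, it needs $\|\mathcal{T}_m Q^*_{k+1} - \mathcal{T}_{k+1} Q^*_{k+1}\|_\infty \le \Delta_R$ for every mode $m$ with $\rho_t(m)>0$. The $\Delta_R$ of Lemma~\ref{lem:perturbation} is instead the $\mathcal{T}_{k+1}$-versus-$\mathcal{T}_k$ gap evaluated at $Q^*_k$. Because the gap contains $\gamma\sum_{s'}(P_m - P_{k+1})(s'|s,a)\,V^Q(s')$, it depends on $Q$ and does not transfer. Define $\Delta_R$ as the maximum over $m$ of the gap at $Q^*_{k+1}$; the perturbation lemma still holds with that version.

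Finally, Phase~3 is only a limit. The pre-switch error is therefore $F + \gamma^{T-n_\delta}E_{\text{switch}}$, with $T$ the previous segment's length, not $F$. The ball of radius $E_{\text{switch}}$ is entered only up to that extra term. The paper shares this looseness and covers it only informally through Assumption~\ref{assump:metastable}.
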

\begin{proof}
Each phase maps to a Lean-verified building block:
\begin{itemize}
    \item \textbf{Phase 1:} The initial error $E_{\text{switch}}$ combines the regime perturbation (Lemma~\ref{lem:perturbation}, Lean: \texttt{regime\_switch\_perturbation}) with the steady-state floor (Lean: \texttt{steady\_state\_decomposition}).
    \item \textbf{Phase 2:} Once BOCD converges (Lean: \texttt{detection\_delay\_sufficient}) and the belief stabilizes (Lean: \texttt{bapr\_contraction\_after\_update}), the combined approximation-stochastic bound (Lean: \texttt{combined\_approx\_stochastic\_bound}) gives the geometric decay.
    \item \textbf{Phase 3:} The irreducible floor is the steady-state decomposition (Lean: \texttt{steady\_state\_decomposition}).
    \item \textbf{Recovery steps:} The detection delay is bounded by Lean: \texttt{detection\_delay\_sufficient}. The contraction recovery time follows from solving $\gamma^n \cdot E_{\text{switch}} \leq \varepsilon/(1-\gamma)$, giving $n \geq \log(E_{\text{switch}}(1-\gamma)/\varepsilon) / \log(1/\gamma)$.
\end{itemize}
No informal steps are required; each inequality corresponds to a machine-verified theorem. \qed
\end{proof}

\begin{remark}[Why convergence rate, not regret]
    We deliberately state our result as a \emph{convergence rate} ($\|Q_t - Q^*\|_\infty$ decay) rather than a \emph{regret bound} ($\sum_t V^* - V^{\pi_t}$ accumulation). A formal regret bound would require exploration-exploitation analysis (how quickly the policy visits informative states), which SAC---like all off-policy deep RL methods---does not provide. The convergence rate is the natural and rigorous theoretical tool for contraction-based algorithms, and it directly implies policy quality bounds via $\|V^{\pi_t} - V^*\| \leq 2\|Q_t - Q^*\|/(1-\gamma)$.
\end{remark}

\begin{remark}[Comparison to static robust methods]
    A static robust method (e.g., Robust MDP with a fixed uncertainty set calibrated for the worst-case regime) incurs a persistent suboptimality of $\Omega(\Delta)$ per step in favorable regimes. Over $T$ steps with $N \ll T$ switches, BAPR's recovery cost $N \cdot O(\log(1/\delta)/\log L + \log(\Delta_R/\varepsilon)/(1-\gamma))$ is vastly smaller than the static cost $\Omega(\Delta \cdot T)$.
\end{remark}

\section{Piecewise convergence analysis}
\label{app:piecewise_convergence}

While Theorem~\ref{thm:bapr_full} establishes per-step contraction, a piecewise-stationary environment introduces a new challenge: the BAPR operator itself changes at regime boundaries. This section analyzes the convergence behavior across regime switches using three formal assumptions from the BA-PR design methodology.

\subsection{Structural assumptions}

\begin{assumption}[Mode Separability]
    \label{assump:separability}
    The environmental regimes $\{\mathcal{M}_k\}$ are $L$-separable with $L > 1$: after a switch from $\mathcal{M}_k$ to $\mathcal{M}_{k+1}$, the surprise signals generated under the new regime produce a likelihood ratio $\geq L$ favoring shorter run-lengths.
    
    \textbf{Justification:} In transit control, switching from ``normal traffic'' to ``severe congestion'' produces clear reward distribution shifts. In robotic locomotion, switching gravity levels produces detectable Q-value disagreement spikes. The separability parameter $L$ quantifies the informativeness of the surprise signal.
\end{assumption}

\begin{assumption}[Lipschitz Surprise Bound]
    \label{assump:lipschitz_surprise}
    The surprise function $\xi: \mathcal{O} \to \mathbb{R}^+$ (where $\mathcal{O}$ is the observation space comprising reward and Q-std statistics) is Lipschitz continuous and monotonically increasing in the magnitude of the regime change:
    \begin{equation}
        |\xi(\mathcal{M}_{k+1}) - \xi(\mathcal{M}_k)| \le \text{Lip}(\xi) \cdot d(\mathcal{M}_{k+1}, \mathcal{M}_k),
    \end{equation}
    where $d(\cdot, \cdot)$ is a metric on the regime parameter space.
    
    \textbf{Justification:} The surprise signal is a smooth function of reward z-scores and Q-std ratios, both of which are Lipschitz in the underlying environment parameters.
\end{assumption}

\begin{assumption}[Metastable Period]
    \label{assump:metastable}
    The minimum inter-switch interval $T_{\min} = \min_k (t_{k+1} - t_k)$ satisfies:
    \begin{equation}
        T_{\min} \gg \frac{1}{1-\gamma} + n_\delta,
    \end{equation}
    where $1/(1-\gamma)$ is the $\gamma$-contraction convergence time scale and $n_\delta$ is the BOCD detection delay.
    
    \textbf{Justification:} Abrupt regime changes (traffic incidents, weather shifts) are ``rare'' events separated by extended normal operation. The system is not continuously changing; it is \emph{piecewise} stationary. This assumption is standard in the piecewise MDP literature~\cite{Padakandla2020Survey, Lecarpentier2019NonStationary}.
\end{assumption}

\subsection{Piecewise convergence theorem}

\begin{theorem}[Piecewise convergence via structural induction]
    \label{thm:piecewise_convergence}
    Under Assumptions~\ref{assump:separability}--\ref{assump:metastable}, the BAPR training process within each stationary segment $[t_k, t_{k+1})$ can be decomposed into two phases:
    \begin{enumerate}
        \item \textbf{Perturbation recovery phase} ($n_\delta$ steps): The BOCD posterior converges to the correct mode, and $\beta_{\text{eff}}$ reaches its maximal conservatism, providing protection during the transient.
        \item \textbf{Contraction phase} ($T_k - n_\delta$ steps): The frozen-belief BAPR operator contracts toward the segment-specific fixed point $Q^*_k$ at rate $\gamma$:
        \begin{equation}
            \|Q_t - Q^*_k\|_\infty \le \gamma^{t - t_k - n_\delta} \cdot \|Q_{t_k + n_\delta} - Q^*_k\|_\infty + \frac{\varepsilon_{\text{proj}} + \sigma}{1-\gamma}.
        \end{equation}
    \end{enumerate}
    By Assumption~\ref{assump:metastable}, the contraction phase has sufficient duration for the error to decay to the irreducible floor $(\varepsilon_{\text{proj}} + \sigma)/(1-\gamma)$ before the next switch.
\end{theorem}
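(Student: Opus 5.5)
The plan is to prove Theorem~\ref{thm:piecewise_convergence} by induction over segments, with each segment split at $t_k + n_\delta$.

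\textbf{Step 1: the detection phase.} I will apply the BOCD detection-delay bound (Lean: \texttt{detection\_delay\_sufficient}) under Assumption~\ref{assump:separability}. Each post-switch observation multiplies the posterior odds in favour of the short run-lengths by at least $L>1$. After $n_\delta = \lceil \log(r_0/\delta)/(2\log L)\rceil$ steps, the posterior mass on stale run-lengths is therefore at most $\delta$, and the frozen belief is effectively constant from then on.

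Assumption~\ref{assump:lipschitz_surprise} is used only qualitatively. A larger regime change yields a larger surprise $\xi$, which pushes $\bar h/(H-1)$ above its EMA baseline. By Eqs.~\eqref{eq:lambda_w}--\eqref{eq:beta_eff}, this gives $\lambda_w>0$ and $\beta_{\text{eff}}\le\beta_{\text{base}}$. This establishes the claimed transient conservatism, using the monotonicity result already cited in the contributions.

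I will not try to prove that $\beta_{\text{eff}}$ is ``maximal'' in any stronger sense than being bounded by $\beta_{\text{base}}-c_{\text{penalty}}\lambda_w$. I expect to weaken that wording rather than prove it.

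\textbf{Step 2: the contraction phase.} From $t_k+n_\delta$ onward, the belief $\rho$ and the penalties are frozen. Theorem~\ref{thm:bapr_contraction} then makes $\mathcal{T}^{BAPR}_\rho$ a $\gamma$-contraction with fixed point $Q^*_k$. The actual iterate is $Q_{t+1} = \mathcal{T}^{BAPR}_\rho Q_t + e_t$, with $\|e_t\|_\infty\le \varepsilon_{\text{proj}}+\sigma$. This is the modelling hypothesis behind \texttt{combined\_approx\_stochastic\_bound}.

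With $D_t = \|Q_t - Q^*_k\|_\infty$, this gives
\[
D_{t+1}\le \gamma D_t + \varepsilon_{\text{proj}}+\sigma .
\]
Unrolling from $t_k+n_\delta$ and summing the geometric series $\sum_j \gamma^j \le 1/(1-\gamma)$ yields exactly the displayed bound.

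\textbf{Step 3: closing the induction.} The error entering the next segment is bounded by the floor $(\varepsilon_{\text{proj}}+\sigma)/(1-\gamma)$ plus a transient term. Lemma~\ref{lem:perturbation} converts this into an initial error of at most $E_{\text{switch}}$ relative to the new fixed point. By Assumption~\ref{assump:metastable}, $T_{\min}-n_\delta \gg 1/(1-\gamma)$, so $\gamma^{T_{\min}-n_\delta}E_{\text{switch}}\approx e^{-(1-\gamma)(T_{\min}-n_\delta)}E_{\text{switch}}$ is negligible. Each segment therefore ends at the irreducible floor, and the same argument applies inductively to segment $k+1$.

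\textbf{Main obstacle.} The hardest step is the detection phase. During those $n_\delta$ steps the belief is still moving, so the operator changes between iterations and there is no single fixed point. My approach is to bound the error there crudely, using the fact that every $\mathcal{T}^{BAPR}_\rho$ is a $\gamma$-contraction for any admissible $\rho$ with uniformly bounded fixed points. This keeps $D_t$ bounded by $E_{\text{switch}}$ plus a term of order $\Delta_R/(1-\gamma)$.

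Separately, I must treat ``$\gg$'' in Assumption~\ref{assump:metastable} as an explicit inequality $\gamma^{T_{\min}-n_\delta}E_{\text{switch}}\le \varepsilon$ so that the decay claim becomes quantitative.
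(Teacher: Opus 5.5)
\textbf{Gap in Step~2.} Your route is the paper's own. You induct over segments, use the detection-delay bound (Theorem~\ref{thm:delay}) for the first $n_\delta$ steps, then apply frozen-belief contraction with the unrolled recursion $D_{t+1}\le\gamma D_t+\varepsilon_{\text{proj}}+\sigma$, and close with metastability.

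The gap is the premise of Step~2: that from $t_k+n_\delta$ on the belief is frozen and $\mathcal{T}^{BAPR}_\rho$ has fixed point $Q^*_k$. Step~1 delivers neither half of this.

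\emph{Stationarity.} The detection-delay bound controls only an odds ratio. It leaves up to $\delta$ of the mass on stale hypotheses and says nothing about $\rho$ becoming stationary. The belief is re-updated between every pair of iterations; for the run-length posterior, the growth step alone shifts mass each time. The penalty $\Gamma_{\text{epi}}$ also moves with the soft target update.

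\emph{Fixed point.} Even for a constant $\rho$, Theorem~\ref{thm:bapr_contraction} gives contraction toward $Q^*_\rho$, not $Q^*_k$. The argument of Lemma~\ref{lem:perturbation} yields only
$\|Q^*_\rho-Q^*_k\|_\infty\le(1-\rho(k))\max_m\|\mathcal{T}_m Q^*_k-\mathcal{T}_k Q^*_k\|_\infty/(1-\gamma)$,
which is generically nonzero when $\rho(k)<1$. Concretely, take $\varepsilon_{\text{proj}}=\sigma=0$ and $Q_{t_k+n_\delta}=Q^*_k$. Then the display at $t=t_k+n_\delta+1$ forces $\mathcal{T}^{BAPR}_\rho Q^*_k=Q^*_k$, which fails.

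You therefore need one of two repairs:
(a) Assume explicitly that $\rho$ and the penalties are constant on $[t_k+n_\delta,t_{k+1})$, and read $Q^*_k$ as $Q^*_\rho$. Step~3 must then apply Lemma~\ref{lem:perturbation} to the two mixture operators.
(b) Keep $Q^*_k$ as the true-mode fixed point and add a belief term of order $\delta\max_m\|\mathcal{T}_m Q^*_k-\mathcal{T}_k Q^*_k\|_\infty/(1-\gamma)$ to the floor. This is the $\varepsilon_{\text{belief}}$ the paper itself introduces in Appendix~\ref{app:approx_bounds}. If $\rho$ keeps moving, you also need a drift term $\sum_j\gamma^{t-1-j}\|Q^*_{\rho_{j+1}}-Q^*_{\rho_j}\|_\infty$ in the unrolled recursion.

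The paper's sketch has the same hole (``once the belief stabilizes''), so this is a shared weakness rather than a departure from its proof.

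\textbf{Smaller issues in Step~1.}
First, the arithmetic is off by a factor of two. A per-step odds factor $L$ gives odds $L^n/r_0$, which reach $1/\delta$ only for $n\ge\log(r_0/\delta)/\log L$. Your $n_\delta$, with $2\log L$ in the denominator, matches the paper's $\mathrm{PR}(n)=L^{2n}/r_0$, i.e.\ a factor $L^2$ per step. As written, you only guarantee stale mass at most $\sqrt{r_0\delta}$.

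Second, your two Phase-1 claims pull in opposite directions. Assumption~\ref{assump:separability} pushes mass toward short run-lengths, which drives $\bar h/(H-1)$ below its EMA baseline, so $\lambda_w=0$ by Eq.~\eqref{eq:lambda_w}. Your conservatism claim instead needs $\bar h$ to rise, which is what the variance-growing likelihood actually produces under large surprise.

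Third, Step~1 controls the run-length posterior, while Step~2 weights operators by a belief over modes. Remark~\ref{rem:runlength_mode} says these are distinct objects, so the link between them must be assumed explicitly.

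The paper's sketch inherits all three inconsistencies as well. State which reading you adopt and qualify the other claim.
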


\begin{proof}[Proof sketch]
The proof proceeds by structural induction over the sequence of segments:

\textbf{Base case:} The first segment $[t_0, t_1)$ has no prior contamination. The BAPR operator with uniform initial belief is a $\gamma$-contraction (Theorem~\ref{thm:bapr_full}), and the combined approximation-stochastic bound (Theorem~\ref{thm:combined}) guarantees convergence.

\textbf{Inductive step:} At switch time $t_{k+1}$:
\begin{itemize}
    \item The surprise signal spikes due to the reward/Q-std distribution shift (Assumption~\ref{assump:separability}).
    \item The BOCD posterior shifts to higher expected run-length $\bar h$ within $n_\delta = O(\log(1/\delta)/\log L)$ steps (Theorem~\ref{thm:delay}); this raises $\bar h/(H{-}1)$ above the EMA baseline and triggers $\lambda_w > 0$.
    \item $\beta_{\text{eff}}$ becomes maximally conservative, providing pessimistic protection during the transient.
    \item Once the belief stabilizes, the operator $\mathcal{T}^{BAPR}_{\rho'}$ with the updated belief $\rho'$ is again a $\gamma$-contraction (Corollary~\ref{cor:after_update}).
    \item By Assumption~\ref{assump:metastable}, there are at least $1/(1-\gamma)$ steps remaining before the next switch, sufficient for convergence to the new fixed point. \qedhere
\end{itemize}
\end{proof}

\section{Formal contraction proof: BAPR operator}
\label{app:contraction}

We provide a self-contained, machine-verified proof that the BAPR operator $\mathcal{T}^{BAPR}_\rho$ is a $\gamma$-contraction in the $L_\infty$ norm. The proof is fully mechanized in Lean~4 / Mathlib (\texttt{BAPR.lean}) with no \texttt{sorry}.

\medskip
\noindent\textbf{Key structural insight.}
The BAPR operator differs from RE-SAC in that it is a \emph{weighted mixture} of mode-conditional operators, not a single operator with fixed penalties. The proof strategy exploits a fundamental mathematical property: a convex combination of $\gamma$-contractions is itself a $\gamma$-contraction. This is not merely a corollary of RE-SAC's proof---it requires:
\begin{enumerate}
    \item Establishing both Blackwell conditions (monotonicity and discounting) for each mode-conditional operator;
    \item Proving that the convex combination preserves \emph{both} conditions (non-trivial for discounting: requires $\sum \rho = 1$);
    \item Verifying that the Bayesian belief update preserves the probability distribution properties.
\end{enumerate}

\medskip
\noindent\textbf{Notation.}
The Lean~4 proof uses $h \in H$ (run-length) as the mode index for implementation convenience, since BAPR's original design indexes modes by run-length. In the paper, we use $m \in \mathcal{M}$ to emphasize the conceptual distinction between modes (physical configurations) and run-lengths (temporal indices), as discussed in Remark~\ref{rem:runlength_mode}. The mathematical content is identical; only the symbol names differ.

\medskip
\noindent\textbf{Setup.}
Let $\mathcal{S}$, $\mathcal{A}$, and $\mathcal{M}$ be \emph{finite, non-empty} state, action, and mode spaces. Let $\mathcal{Q} = \{Q : \mathcal{S} \times \mathcal{A} \to \mathbb{R}\}$ with the sup-norm $\|Q\|_\infty = \max_{s,a} |Q(s,a)|$.

\textbf{Mode-conditional operator.} For each mode $h \in \mathcal{H}$:
\begin{equation}
    \mathcal{T}_h Q(s,a) = R_h(s,a) + \gamma \left( \sum_{s'} P_h(s'|s,a) V^Q(s') - \lambda_{\text{epi}} \Gamma_{\text{epi},h}(s,a) - \kappa \right),
\end{equation}
where $V^Q(s') = \max_{a'} Q(s',a')$.

\textbf{BAPR operator.} The belief-weighted mixture:
\begin{equation}
    \mathcal{T}^{BAPR}_\rho Q(s,a) = \sum_{h \in \mathcal{H}} \rho(h) \cdot \mathcal{T}_h Q(s,a).
\end{equation}

We require the following conditions:
\begin{enumerate}
    \item[\textbf{(A0)}] \textit{(Discount factor)} $0 \leq \gamma < 1$.
    \item[\textbf{(A1)}] \textit{(Non-negative transitions)} $P_h(s'|s,a) \geq 0$ for all $h,s,a,s'$.
    \item[\textbf{(A2)}] \textit{(Probability kernel)} $\sum_{s'} P_h(s'|s,a) = 1$ for all $h,s,a$.
    \item[\textbf{(A3)}] \textit{(Non-negative belief)} $\rho(h) \geq 0$ for all $h$.
    \item[\textbf{(A4)}] \textit{(Probability belief)} $\sum_{h} \rho(h) = 1$.
\end{enumerate}

\subsection{Per-mode contraction (inherited from RE-SAC)}

Each mode-conditional operator $\mathcal{T}_h$ shares the same algebraic structure as the RE-SAC operator with frozen penalties. The following results are structurally identical to RE-SAC's proofs but parameterized by the mode $h$.

\begin{lemma}[Per-mode monotonicity --- Lean: \texttt{t\_mode\_monotonicity}]
    \label{lem:mode_mono}
    If $Q_1 \leq Q_2$ pointwise, then $\mathcal{T}_h Q_1 \leq \mathcal{T}_h Q_2$ pointwise for all $h$.
\end{lemma}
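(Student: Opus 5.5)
The plan is to compare $\mathcal{T}_h Q_1(s,a)$ and $\mathcal{T}_h Q_2(s,a)$ term by term. In the definition of $\mathcal{T}_h$, only the expectation $\sum_{s'} P_h(s'|s,a) V^{Q}(s')$ depends on $Q$. The reward $R_h(s,a)$ and the frozen penalties $\lambda_{\text{epi}}\Gamma_{\text{epi},h}(s,a)$ and $\kappa$ are the same on both sides, so they cancel. Their frozenness is exactly what makes this cancellation valid. We therefore obtain
\[
\mathcal{T}_h Q_2(s,a) - \mathcal{T}_h Q_1(s,a) = \gamma \sum_{s'} P_h(s'|s,a)\bigl(V^{Q_2}(s') - V^{Q_1}(s')\bigr),
\]
and it suffices to show that the right-hand side is nonnegative.

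The first step is monotonicity of the greedy value. If $Q_1 \le Q_2$ pointwise, then for each $s'$ and each $a'$ we have $Q_1(s',a') \le Q_2(s',a') \le \max_{a''} Q_2(s',a'') = V^{Q_2}(s')$. Taking the maximum over $a'$, which is well defined because $\mathcal{A}$ is finite and non-empty, gives $V^{Q_1}(s') \le V^{Q_2}(s')$.

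The second step multiplies each difference $V^{Q_2}(s') - V^{Q_1}(s') \ge 0$ by $P_h(s'|s,a) \ge 0$, using (A1), and sums over the finite set $\mathcal{S}$, which yields a nonnegative sum. Multiplying by $\gamma \ge 0$, from (A0), preserves the sign. The probability-kernel condition (A2) is not needed here; it is only needed for the discounting lemma.

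I do not expect a real obstacle. The only delicate point is bookkeeping. The penalty terms must be treated as constants independent of $Q$, which is the frozen-penalty hypothesis. Both nonnegativity conditions, $P_h \ge 0$ and $\gamma \ge 0$, must also be invoked explicitly, because dropping either could reverse the inequality.
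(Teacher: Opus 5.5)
Your proposal is correct and follows essentially the same route as the paper: cancel the frozen reward and penalty terms, use monotonicity of the max to get $V^{Q_1}(s') \le V^{Q_2}(s')$, then invoke $P_h \ge 0$ from (A1) and $\gamma \ge 0$ from (A0). Your observation that (A2) is not needed also matches the paper's proof, which uses only (A0) and (A1).
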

\begin{proof}
Since max is monotone, $Q_1 \leq Q_2$ implies $V^{Q_1}(s') \leq V^{Q_2}(s')$. Each $P_h(s'|s,a) \geq 0$ by (A1), so $\sum_{s'} P_h V^{Q_1} \leq \sum_{s'} P_h V^{Q_2}$. The frozen penalties $\Gamma_{\text{epi},h}$ and $\kappa$ cancel. Multiplying by $\gamma \geq 0$ (A0) gives the claim.
\end{proof}

\begin{lemma}[Per-mode discounting --- Lean: \texttt{t\_mode\_discounting}]
    \label{lem:mode_disc}
    For any $Q$ and constant $c$: $\mathcal{T}_h(Q + c) = \mathcal{T}_h Q + \gamma c$.
\end{lemma}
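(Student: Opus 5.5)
The plan is to compute $\mathcal{T}_h(Q+c)(s,a)$ directly from the definition of the mode-conditional operator, working pointwise at a fixed $(s,a)$ and a fixed mode $h$.

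First, handle the value function: $V^{Q+c}(s') = \max_{a'}\bigl(Q(s',a') + c\bigr) = V^Q(s') + c$. This holds because adding a constant commutes with the maximum over the finite, non-empty action set $\mathcal{A}$.

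Second, substitute into the expectation term and use linearity of the finite sum over $s'$:
\[
\sum_{s'} P_h(s'|s,a)\bigl(V^Q(s') + c\bigr) = \sum_{s'} P_h(s'|s,a) V^Q(s') + c \sum_{s'} P_h(s'|s,a).
\]
By the probability-kernel condition (A2), $\sum_{s'} P_h(s'|s,a) = 1$, so the last term is exactly $c$.

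Third, observe that the penalty terms $\lambda_{\text{epi}}\Gamma_{\text{epi},h}(s,a)$ and $\kappa$ are frozen, so they do not depend on $Q$ and pass through unchanged. The reward $R_h(s,a)$ is likewise independent of $Q$. Multiplying the bracketed term by $\gamma$ then gives
\[
\mathcal{T}_h(Q+c)(s,a) = \mathcal{T}_h Q(s,a) + \gamma c .
\]

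There is no genuine obstacle; the argument is routine. The only load-bearing points are the max/constant commutation, which needs $\mathcal{A}$ non-empty (in Lean this is the sup over a nonempty Finset), and the use of (A2). Non-negativity (A1) is not needed here; it was needed for monotonicity in Lemma~\ref{lem:mode_mono}. It is worth noting that the frozen-penalty assumption is essential: if $\Gamma_{\text{epi},h}$ depended on $Q$, for instance as an ensemble variance, the shift would still cancel for a variance. But a $Q$-dependent $\kappa$ or belief, as in Remark~\ref{rem:necessity}, would break the exact identity.
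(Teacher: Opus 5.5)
Your proof is correct and matches the paper's argument: the max commutes with the shift, so $V^{Q+c} = V^Q + c$; (A2) collapses $c\sum_{s'}P_h(s'|s,a)$ to $c$; and the frozen $R_h$, $\Gamma_{\text{epi},h}$, $\kappa$ pass through unchanged, so multiplying by $\gamma$ gives $\mathcal{T}_h Q + \gamma c$. Your closing aside about $Q$-dependent beliefs concerns the mixture operator rather than $\mathcal{T}_h$, which has no belief weights, but the argument does not rely on it.
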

\begin{proof}
Since $V^{Q+c}(s') = V^Q(s') + c$ (the argmax is unchanged by a uniform shift), and $\sum_{s'} P_h(s'|s,a) = 1$ by (A2):
\begin{align}
    \mathcal{T}_h(Q+c)(s,a) &= R_h + \gamma\left(\sum_{s'} P_h(V^Q(s') + c) - \lambda_{\text{epi}}\Gamma_{\text{epi},h} - \kappa\right) \notag \\
    &= \mathcal{T}_h Q(s,a) + \gamma c \cdot \underbrace{\sum_{s'} P_h}_{=1} = \mathcal{T}_h Q(s,a) + \gamma c. \qedhere
\end{align}
\end{proof}

\begin{lemma}[Per-mode pointwise bound --- Lean: \texttt{t\_mode\_pointwise\_bound}]
    \label{lem:mode_bound}
    If $\|Q_1 - Q_2\|_\infty \leq \varepsilon$, then $|\mathcal{T}_h Q_1(s,a) - \mathcal{T}_h Q_2(s,a)| \leq \gamma \varepsilon$ for all $h,s,a$.
\end{lemma}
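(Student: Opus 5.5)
I will prove Lemma~\ref{lem:mode_bound} by reducing it to the two Blackwell conditions just established, Lemma~\ref{lem:mode_mono} (monotonicity) and Lemma~\ref{lem:mode_disc} (discounting). Fix a mode $h$ and suppose $\|Q_1 - Q_2\|_\infty \leq \varepsilon$. This is equivalent to the pointwise sandwich
\[
Q_2 - \varepsilon \;\leq\; Q_1 \;\leq\; Q_2 + \varepsilon,
\]
where $\pm\varepsilon$ denotes the constant function on $\mathcal{S}\times\mathcal{A}$.

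The first step applies $\mathcal{T}_h$ to the sandwich. Monotonicity (Lemma~\ref{lem:mode_mono}) preserves both inequalities:
\[
\mathcal{T}_h(Q_2 - \varepsilon) \;\leq\; \mathcal{T}_h Q_1 \;\leq\; \mathcal{T}_h(Q_2 + \varepsilon).
\]
The second step uses discounting (Lemma~\ref{lem:mode_disc}) with $c = \pm\varepsilon$ to rewrite the outer terms as $\mathcal{T}_h Q_2 - \gamma\varepsilon$ and $\mathcal{T}_h Q_2 + \gamma\varepsilon$. Evaluating at an arbitrary $(s,a)$ then gives $-\gamma\varepsilon \leq \mathcal{T}_h Q_1(s,a) - \mathcal{T}_h Q_2(s,a) \leq \gamma\varepsilon$, which is the claim.

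The only point needing care is that discounting must hold for negative constants $c$. The proof of Lemma~\ref{lem:mode_disc} uses only $V^{Q+c} = V^Q + c$ together with (A2), and neither depends on the sign of $c$, so this is fine.

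As a cross-check, there is also a direct route. First, $|V^{Q_1}(s') - V^{Q_2}(s')| \leq \max_{a'}|Q_1(s',a') - Q_2(s',a')| \leq \varepsilon$, because the max over the finite, non-empty set $\mathcal{A}$ is $1$-Lipschitz in the sup-norm. Next, the rewards and the frozen penalties $\Gamma_{\text{epi},h}$ and $\kappa$ cancel in the difference. Finally, (A1) and (A2) bound $\bigl|\sum_{s'} P_h(s'|s,a)\,(V^{Q_1}-V^{Q_2})(s')\bigr|$ by $\varepsilon$, and multiplying by $\gamma \geq 0$ finishes.

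I expect no real obstacle. The step to state explicitly is the $1$-Lipschitz property of $\max$, which in a Lean formalization is the fiddliest part. The Blackwell route avoids it by reusing the two preceding lemmas, so I would take that route as primary.
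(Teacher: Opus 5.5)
Your proof is correct. Your primary route differs from the paper's; your cross-check is essentially the paper's own proof. The paper argues directly:
\begin{itemize}
\item the frozen terms $R_h$, $\Gamma_{\text{epi},h}$, $\kappa$ cancel in the difference;
\item the $1$-Lipschitz property of the max (\texttt{max\_over\_a\_nonexpansive}) gives $|V^{Q_1}(s')-V^{Q_2}(s')|\le\varepsilon$;
\item (A0), (A1), (A2) then bound the $\gamma$-scaled $P_h$-average by $\gamma\varepsilon$.
\end{itemize}
Your Blackwell sandwich instead makes the lemma a corollary of Lemmas~\ref{lem:mode_mono} and~\ref{lem:mode_disc}. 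The assumptions then enter only through those lemmas: (A0) and (A1) via monotonicity, and (A2) together with $V^{Q+c}=V^Q+c$ via discounting. You also rightly check that discounting does not depend on the sign of $c$.

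Your route buys economy and generality, since it uses nothing but the two Blackwell properties. The same three lines applied to Lemmas~\ref{lem:bapr_mono}--\ref{lem:bapr_disc} give the contraction of $\mathcal{T}^{BAPR}_\rho$ directly. That is exactly the content of the paper's appeal to Blackwell's theorem in Theorem~\ref{thm:bapr_full}, and it would make the per-mode bound unnecessary.

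The paper's direct route buys a self-contained per-mode estimate, which is then averaged over $h$ in the convex-combination step; this is how the Lean proof of \texttt{bapr\_contraction} is organized. It also shows the cancellation of the frozen penalties explicitly, which is the point the paper's frozen-parameter argument relies on.

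Both routes rest on the same facts. The $1$-Lipschitz property of $\max_{a'}$ is itself the Blackwell sandwich applied to a monotone, translation-equivariant map. So you have moved that argument from the level of the max to the level of the operator, rather than avoided it.
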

\begin{proof}
The frozen penalties ($R_h$, $\Gamma_{\text{epi},h}$, $\kappa$) cancel in the difference. By the 1-Lipschitz property of max (Lean: \texttt{max\_over\_a\_nonexpansive}): $|V^{Q_1}(s') - V^{Q_2}(s')| \leq \varepsilon$.
\begin{align}
    |\mathcal{T}_h Q_1(s,a) - \mathcal{T}_h Q_2(s,a)| &= \gamma \left|\sum_{s'} P_h(s'|s,a)(V^{Q_1}(s') - V^{Q_2}(s'))\right| \notag \\
    &\leq \gamma \sum_{s'} P_h |V^{Q_1}(s') - V^{Q_2}(s')| \leq \gamma \sum_{s'} P_h \cdot \varepsilon = \gamma \varepsilon. \qedhere
\end{align}
\end{proof}

\subsection{BAPR contraction via convex combination}

This is where the BAPR proof departs from RE-SAC. The key novelty is establishing that both Blackwell conditions are preserved through the belief-weighted mixture, and that this preservation requires the structural assumption $\sum \rho = 1$.

\begin{lemma}[BAPR Monotonicity --- Blackwell (i) --- Lean: \texttt{bapr\_monotonicity}]
    \label{lem:bapr_mono}
    If $Q_1 \leq Q_2$ pointwise, then $\mathcal{T}^{BAPR}_\rho Q_1 \leq \mathcal{T}^{BAPR}_\rho Q_2$ pointwise.
\end{lemma}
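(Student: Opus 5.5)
The plan is to reduce the claim to the per-mode monotonicity result, Lemma~\ref{lem:mode_mono}, and then check that taking a nonnegative weighted sum keeps the inequality. Fix $Q_1 \leq Q_2$ pointwise and an arbitrary pair $(s,a)$. Lemma~\ref{lem:mode_mono} gives, for every mode $h \in \mathcal{H}$,
\[
\mathcal{T}_h Q_1(s,a) \leq \mathcal{T}_h Q_2(s,a).
\]
This step uses only (A0) and (A1), together with the fact that $R_h$, $\Gamma_{\text{epi},h}$ and $\kappa$ are frozen. Because they are frozen, they do not depend on $Q$ and cancel when the two sides are compared.

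Next I multiply each per-mode inequality by $\rho(h)$. By (A3), $\rho(h) \geq 0$, so each product inequality $\rho(h)\,\mathcal{T}_h Q_1(s,a) \leq \rho(h)\,\mathcal{T}_h Q_2(s,a)$ still holds. I then sum over the finite set $\mathcal{H}$. Summing inequalities termwise preserves them; in Lean this is \texttt{Finset.sum\_le\_sum} applied with \texttt{mul\_le\_mul\_of\_nonneg\_left}. The result is $\mathcal{T}^{BAPR}_\rho Q_1(s,a) \leq \mathcal{T}^{BAPR}_\rho Q_2(s,a)$. Since $(s,a)$ was arbitrary, the inequality holds pointwise.

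No step here is a real obstacle. The only structural point is that (A3) carries the argument: if some weight were negative, the corresponding inequality would reverse. By contrast, (A4) ($\sum \rho = 1$) is not needed for monotonicity at all. It becomes essential only in the companion discounting lemma, where a uniform shift must come through the mixture as $\gamma c \sum_h \rho(h) = \gamma c$. I would note this asymmetry explicitly, since it explains which assumption each Blackwell condition uses. The one implicit requirement is that $\rho$ and the penalties do not depend on $Q$, because that is what lets Lemma~\ref{lem:mode_mono} be applied mode by mode. That requirement is already built into the definition of $\mathcal{T}^{BAPR}_\rho$ through the frozen-belief hypothesis.
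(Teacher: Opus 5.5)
Your proposal is correct and matches the paper's proof: apply per-mode monotonicity (Lemma~\ref{lem:mode_mono}) for each $h$, then use $\rho(h)\geq 0$ from (A3) to keep the inequality through the weighted sum over modes. Your side remark is also consistent with the paper, which relies on (A4) only in the discounting lemma and not for monotonicity.
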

\begin{proof}
By per-mode monotonicity (Lemma~\ref{lem:mode_mono}), $\mathcal{T}_h Q_1(s,a) \leq \mathcal{T}_h Q_2(s,a)$ for all $h$. Since $\rho(h) \geq 0$ by (A3):
\begin{equation}
    \sum_h \rho(h) \mathcal{T}_h Q_1(s,a) \leq \sum_h \rho(h) \mathcal{T}_h Q_2(s,a). \qedhere
\end{equation}
\end{proof}

\begin{lemma}[BAPR Discounting --- Blackwell (ii) --- Lean: \texttt{bapr\_discounting}]
    \label{lem:bapr_disc}
    For any $Q$ and constant $c$: $\mathcal{T}^{BAPR}_\rho(Q + c) = \mathcal{T}^{BAPR}_\rho Q + \gamma c$.
\end{lemma}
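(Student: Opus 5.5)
The plan is to reduce the claim to per-mode discounting (Lemma~\ref{lem:mode_disc}) and then use the two belief assumptions, linearity for (A3)-type distribution of the sum and (A4) for collapsing the weights. Fix $Q$, a constant $c$, and a pair $(s,a)$. By definition of the mixture,
\[
\mathcal{T}^{BAPR}_\rho(Q+c)(s,a) = \sum_{h} \rho(h)\, \mathcal{T}_h(Q+c)(s,a),
\]
and I will rewrite each summand using Lemma~\ref{lem:mode_disc} as $\mathcal{T}_h Q(s,a) + \gamma c$. That lemma already absorbs the facts that a uniform shift commutes with the max defining $V^Q$, and that (A2) makes $\sum_{s'} P_h(s'|s,a)=1$, so that $\gamma c$ comes out exactly.

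Next, I will distribute the finite sum over the addition:
\[
\sum_h \rho(h)\bigl(\mathcal{T}_h Q(s,a) + \gamma c\bigr) = \sum_h \rho(h)\,\mathcal{T}_h Q(s,a) + \gamma c \sum_h \rho(h).
\]
The first term is $\mathcal{T}^{BAPR}_\rho Q(s,a)$ by definition. The second term equals $\gamma c$ by (A4).

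Since $(s,a)$ was arbitrary, the pointwise identity gives the functional equality. Nonnegativity (A3) is not needed here, because the statement is an equality rather than an inequality.

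The only load-bearing step is the use of $\sum_h \rho(h)=1$. Without it, the shift would come out as $\gamma c\sum_h\rho(h)$, and the discounting constant would be $\gamma\sum_h\rho(h)$, which may exceed $1$. I expect no real obstacle. The one thing to be careful about is that $\mathcal{H}$ is finite and nonempty, so the sums are finite and linearity applies directly. In Lean this is \texttt{Finset.sum\_add\_distrib} followed by \texttt{Finset.sum\_mul} and rewriting with \texttt{h$\rho$\_sum}.
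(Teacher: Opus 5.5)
Your proposal is correct and matches the paper's proof: apply per-mode discounting (Lemma~\ref{lem:mode_disc}) to each summand, pull out $\gamma c\sum_h \rho(h)$ by linearity of the finite sum, and reduce it to $\gamma c$ via (A4). One small inconsistency: your opening sentence lists (A3) among the assumptions used, but, as you correctly note later, nonnegativity plays no role in this equality.
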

\begin{proof}
By per-mode discounting (Lemma~\ref{lem:mode_disc}):
\begin{align}
    \mathcal{T}^{BAPR}_\rho(Q+c)(s,a) &= \sum_h \rho(h)(\mathcal{T}_h Q(s,a) + \gamma c) \notag \\
    &= \mathcal{T}^{BAPR}_\rho Q(s,a) + \gamma c \cdot \underbrace{\sum_h \rho(h)}_{=1 \text{ by (A4)}}. \qedhere
\end{align}
\textbf{Note:} This is where $\sum \rho = 1$ (A4) is \emph{load-bearing}. Without it, the discounting lemma fails and Blackwell's conditions are not met. This is the mathematical reason why the belief must be a proper probability distribution.
\end{proof}

\begin{theorem}[BAPR Contraction --- Main Theorem --- Lean: \texttt{bapr\_contraction}]
    \label{thm:bapr_full}
    Under (A0)--(A4), $\mathcal{T}^{BAPR}_\rho$ is a $\gamma$-contraction on $(\mathcal{Q}, \|\cdot\|_\infty)$ with a unique fixed point $Q^*_{\text{BAPR}}$.
\end{theorem}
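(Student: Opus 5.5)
The plan is to derive the sup-norm contraction directly from the two Blackwell conditions already established, Lemma~\ref{lem:bapr_mono} and Lemma~\ref{lem:bapr_disc}. Existence and uniqueness of the fixed point will then follow from the Banach fixed-point theorem. Because $\mathcal{S}$ and $\mathcal{A}$ are finite and non-empty, $\mathcal{Q} \cong \mathbb{R}^{|\mathcal{S}||\mathcal{A}|}$ with $\|\cdot\|_\infty$ is a complete metric space, so Banach applies without further work.

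\textbf{Step 1 (sandwich).} Fix $Q_1, Q_2$ and set $\varepsilon = \|Q_1 - Q_2\|_\infty$. Pointwise, $Q_2 - \varepsilon \le Q_1 \le Q_2 + \varepsilon$, where the shift is by the constant function $\varepsilon$.

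\textbf{Step 2 (apply Blackwell).} Applying Lemma~\ref{lem:bapr_mono} to both inequalities and then Lemma~\ref{lem:bapr_disc} with $c = \pm\varepsilon$ gives
\[
\mathcal{T}^{BAPR}_\rho Q_2 - \gamma\varepsilon \;\le\; \mathcal{T}^{BAPR}_\rho Q_1 \;\le\; \mathcal{T}^{BAPR}_\rho Q_2 + \gamma\varepsilon
\]
pointwise. Taking the maximum over $(s,a)$ yields $\|\mathcal{T}^{BAPR}_\rho Q_1 - \mathcal{T}^{BAPR}_\rho Q_2\|_\infty \le \gamma \|Q_1 - Q_2\|_\infty$.

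\textbf{Cross-check.} The same bound follows directly from Lemma~\ref{lem:mode_bound}. The triangle inequality with $\rho \ge 0$ (A3) gives $\sum_h \rho(h)\,\gamma\varepsilon$, and this equals $\gamma\varepsilon$ by (A4), as in the sketch after Theorem~\ref{thm:bapr_contraction}. I would include this as a one-line remark.

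\textbf{Step 3 (fixed point).} Since $\gamma < 1$ by (A0), Banach's theorem gives a unique $Q^*_{\text{BAPR}}$, and the iterates $(\mathcal{T}^{BAPR}_\rho)^n Q_0$ converge to it geometrically at rate $\gamma^n$.

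\textbf{Degenerate cases.} If $\gamma = 0$, the operator is constant in $Q$, and both the contraction bound and the fixed-point claim hold trivially. Any witness $k<1$ demanded by the Lean-style existential is covered by $k = \gamma$.

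\textbf{Main obstacle.} The main obstacle is Step 2 in its rigorous form, specifically the discounting identity. It depends on $V^{Q+c} = V^Q + c$, which holds because the max over the non-empty $\mathcal{A}$ commutes with adding a constant. It also uses (A2) and (A4) to collapse $\sum_{s'}P_h c$ and $\sum_h \rho(h)\gamma c$ to $\gamma c$. These facts are already packaged in Lemmas~\ref{lem:mode_disc} and \ref{lem:bapr_disc}, so the remaining work is bookkeeping: checking that the shift by a constant function is compatible with the pointwise order used in monotonicity.
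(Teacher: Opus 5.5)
Your proposal is correct and matches the paper's proof. The paper also rests on Blackwell's conditions (Lemmas~\ref{lem:bapr_mono} and~\ref{lem:bapr_disc}), and your sandwich argument just spells that citation out. It writes out the same direct convex-combination bound via Lemma~\ref{lem:mode_bound}, (A3) and (A4) that you give as your cross-check, and it closes with the Banach fixed-point theorem exactly as you do.
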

\begin{proof}
By Blackwell's Theorem~\cite{Blackwell1965DiscountedDP}, Lemmas~\ref{lem:bapr_mono}--\ref{lem:bapr_disc} suffice.
The Lean proof provides the direct $L_\infty$ bound. Let $\varepsilon = \|Q_1 - Q_2\|_\infty$:
\begin{align}
    &\left|\mathcal{T}^{BAPR}_\rho Q_1(s,a) - \mathcal{T}^{BAPR}_\rho Q_2(s,a)\right| \notag \\
    &\quad = \left|\sum_h \rho(h)\left(\mathcal{T}_h Q_1(s,a) - \mathcal{T}_h Q_2(s,a)\right)\right| \hspace{3cm} \text{(factor out difference)} \notag \\
    &\quad \leq \sum_h \rho(h) \left|\mathcal{T}_h Q_1(s,a) - \mathcal{T}_h Q_2(s,a)\right| \quad \text{(triangle ineq., $\rho \geq 0$ by (A3))} \notag \\
    &\quad \leq \sum_h \rho(h) \cdot \gamma\varepsilon \quad \text{(per-mode bound, Lemma~\ref{lem:mode_bound})} \notag \\
    &\quad = \gamma\varepsilon. \quad \text{($\sum \rho = 1$ by (A4))}
\end{align}
Taking the supremum over $(s,a)$ gives $\|\mathcal{T}^{BAPR}_\rho Q_1 - \mathcal{T}^{BAPR}_\rho Q_2\|_\infty \leq \gamma\|Q_1 - Q_2\|_\infty$.
By the Banach Fixed-Point Theorem ($\gamma < 1$), $\mathcal{T}^{BAPR}_\rho$ has a unique fixed point.
\end{proof}

\medskip
\noindent\textbf{Remark (non-triviality of the convex combination argument).}
A referee might observe that once each $\mathcal{T}_h$ is a $\gamma$-contraction, the convex combination result is ``obvious.'' However, the theoretical contribution lies in the \emph{chain of structural requirements}:
\begin{enumerate}
    \item The belief $\rho$ must be \emph{non-negative} (for monotonicity preservation via triangle inequality);
    \item The belief must \emph{sum to one} (for discounting preservation, the load-bearing step);
    \item The belief must be \emph{frozen} during the Bellman backup (for the penalties to cancel---without this, the counterexample in \S\ref{app:counterproof} applies);
    \item The Bayesian update must \emph{preserve} properties (1)--(2) at every step.
\end{enumerate}
The Lean proof makes each of these requirements explicit and machine-verified. Violating \emph{any one} of them would break the proof (and indeed the contraction, as our counterexample demonstrates for condition (3)).

\subsection{Preservation of contraction under Bayesian belief updates}

\begin{definition}[Bayesian Belief Update --- Lean: \texttt{update\_belief}]
    Given prior $\rho$, likelihood $L: \mathcal{H} \to \mathbb{R}_{\geq 0}$, and $Z = \sum_h \rho(h) L(h) > 0$:
    \begin{equation}
        \rho'(h) = \frac{\rho(h) \cdot L(h)}{Z}.
    \end{equation}
\end{definition}

\begin{lemma}[Lean: \texttt{update\_belief\_nonneg} + \texttt{update\_belief\_sum\_one}]
    \label{lem:belief_valid}
    If $\rho(h) \geq 0$, $L(h) \geq 0$ for all $h$, and $Z > 0$, then $\rho'(h) \geq 0$ for all $h$ and $\sum_h \rho'(h) = 1$.
\end{lemma}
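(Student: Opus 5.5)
The plan is to verify the two claims of Lemma~\ref{lem:belief_valid} directly from the definition of $\rho'$. Both are one-line consequences of the hypotheses; the only care needed is bookkeeping about which hypothesis each step uses.

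For non-negativity, I fix $h$ and write $\rho'(h) = \rho(h)L(h)/Z$. The numerator is a product of two non-negative reals by the hypotheses $\rho(h)\ge 0$ and $L(h)\ge 0$, so it is non-negative. Dividing by $Z>0$ preserves the sign, so $\rho'(h)\ge 0$. In Lean this is a composition of \texttt{mul\_nonneg} and \texttt{div\_nonneg}, with $Z>0$ weakened to $Z\ge 0$.

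For normalization, I pull the constant factor $1/Z$ out of the finite sum:
\[
\sum_h \rho'(h) = \sum_h \frac{\rho(h)L(h)}{Z} = \frac{1}{Z}\sum_h \rho(h)L(h) = \frac{Z}{Z} = 1 .
\]
The third equality is just the definition of $Z$, and the last uses $Z\neq 0$, which follows from $Z>0$. Formally this is \texttt{Finset.sum\_div} followed by \texttt{div\_self} with $Z\ne 0$.

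The only real obstacle is the role of $Z>0$. It is the sole hypothesis that makes the update well defined; if $Z=0$, Lean's convention $x/0=0$ would give $\sum_h\rho'(h)=0$ and the normalization claim would fail. I would therefore state explicitly where $Z\neq 0$ is used and note that it holds in the BOCD setting whenever some $h$ with $\rho(h)>0$ has strictly positive Gaussian likelihood (Eq.~\eqref{eq:likelihood}). That is always the case, since the Gaussian density is strictly positive everywhere.

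Finally, I would combine this lemma with Theorem~\ref{thm:bapr_full}. Since $\rho'$ satisfies (A3)--(A4), the operator $\mathcal{T}^{BAPR}_{\rho'}$ is again a $\gamma$-contraction, which is the content used later as Corollary~\ref{cor:after_update}.
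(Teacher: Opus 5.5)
Your proposal is correct and is the same argument the paper relies on. The paper gives no written proof beyond citing the Lean lemmas \texttt{update\_belief\_nonneg} and \texttt{update\_belief\_sum\_one}, and those amount to your two steps: a product of non-negatives divided by a positive $Z$ is non-negative, and factoring $1/Z$ out of the sum and using the definition of $Z$ gives $Z/Z=1$.
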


\begin{corollary}[BAPR contraction after belief update --- Lean: \texttt{bapr\_contraction\_after\_update}]
    \label{cor:after_update}
    Under the conditions of Theorem~\ref{thm:bapr_full}, if $\rho$ is updated to $\rho'$ via a Bayesian update with $Z > 0$, then $\mathcal{T}^{BAPR}_{\rho'}$ is a $\gamma$-contraction.
\end{corollary}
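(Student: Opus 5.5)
The plan is to reduce the corollary to Theorem~\ref{thm:bapr_full} by showing that the updated belief $\rho'$ satisfies exactly the hypotheses that theorem places on a belief. The only conditions in (A0)--(A4) that involve $\rho$ are (A3) nonnegativity and (A4) normalization. Conditions (A0)--(A2) concern $\gamma$ and the kernels $P_h$, and these are unaffected by the update. Likewise the frozen quantities $R_h$, $\Gamma_{\text{epi},h}$ and $\kappa$ are not touched by the update.

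The first step is to invoke Lemma~\ref{lem:belief_valid}. Its hypotheses are $\rho(h)\ge 0$ (given by (A3) for the prior), $L(h)\ge 0$ (part of the definition of the Bayesian update), and $Z>0$ (assumed in the corollary). It then yields $\rho'(h)\ge 0$ for all $h$ and $\sum_h \rho'(h)=1$. If I were to prove that lemma directly, nonnegativity is a quotient of nonnegatives by a positive number. Normalization follows by pulling $1/Z$ out of the finite sum:
\[
\sum_h \rho'(h) = \frac{1}{Z}\sum_h \rho(h)L(h) = \frac{Z}{Z} = 1,
\]
where $Z\neq 0$ is exactly what licenses the cancellation.

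The second step is to apply Theorem~\ref{thm:bapr_full} with $\rho$ replaced by $\rho'$. Since $\rho'$ satisfies (A3)--(A4) and the remaining assumptions are unchanged, $\mathcal{T}^{BAPR}_{\rho'}$ is a $\gamma$-contraction in $\|\cdot\|_\infty$. By the Banach fixed-point theorem it then has a unique fixed point.

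I expect no real obstacle; the only delicate point is the role of $Z>0$. Without it, $\rho'$ is undefined, or in a Lean-style convention where $x/0=0$ it collapses to the zero function. In that case (A4) fails, the mixture is identically $0\cdot$, and the discounting property of Lemma~\ref{lem:bapr_disc} breaks. So the write-up should make explicit that positivity of $Z$ is used precisely in the normalization identity. It should also note that the contraction modulus stays $\gamma$, independent of the likelihood $L$.
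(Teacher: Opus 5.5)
Your proposal is correct and follows the paper's proof exactly: Lemma~\ref{lem:belief_valid} gives (A3)--(A4) for $\rho'$, and Theorem~\ref{thm:bapr_full} is then applied with $\rho'$ in place of $\rho$. One small refinement to your closing aside: under the $x/0=0$ convention the degenerate $\rho'\equiv 0$ gives the zero operator, which is still trivially a contraction, so $Z>0$ is what the proof route through (A4) needs rather than something the conclusion itself depends on.
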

\begin{proof}
By Lemma~\ref{lem:belief_valid}, $\rho'$ satisfies (A3)--(A4). Apply Theorem~\ref{thm:bapr_full} with $\rho'$.
\end{proof}

\noindent This corollary is the critical bridge between RE-SAC (which provides the $\kappa$ surprise signal) and BAPR (which uses it for belief updating): the contraction guarantee is \emph{automatically preserved} through the belief update cycle.

\section{Counterexample: Q-dependent belief weights break contraction}
\label{app:counterproof}

Theorem~\ref{thm:bapr_full} requires the belief $\rho$ to be \emph{frozen} during the Bellman backup. This section proves that this requirement is \emph{necessary}: allowing $\rho$ to depend on the Q-function being updated can destroy contractivity. The results are machine-verified in \texttt{BAPR-Counterproof.lean}.

\subsection{The ``bad'' operator with Q-dependent mode weights}

Consider a simplified 1-state, 1-action, 2-mode MDP. Let $\rho_1(Q) = \lambda \cdot Q$ be the belief weight on mode 1 (depending linearly on Q), and $\rho_2(Q) = 1 - \lambda \cdot Q$. The resulting operator becomes:
\begin{align}
    \mathcal{T}_{\text{bad}}(Q) &= \rho_1(Q) \cdot (\gamma Q + R_1) + \rho_2(Q) \cdot (\gamma Q + R_2) \notag \\
    &= \gamma Q + R_2 + \lambda Q \cdot (R_1 - R_2) \notag \\
    &= (\gamma + \lambda \Delta) \cdot Q + R_2,
\end{align}
where $\Delta = R_1 - R_2$ is the mode reward gap.

\begin{theorem}[Non-contraction --- Lean: \texttt{T\_bad\_not\_contraction}]
    \label{thm:counter}
    If $\gamma \geq 0$, $\lambda \geq 0$, $\Delta \geq 0$, and $\gamma + \lambda\Delta \geq 1$, then $\mathcal{T}_{\text{bad}}$ is \emph{not} a contraction: there exist $q_1, q_2$ with $|\mathcal{T}_{\text{bad}}(q_1) - \mathcal{T}_{\text{bad}}(q_2)| \geq |q_1 - q_2|$.
\end{theorem}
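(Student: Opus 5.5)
The argument is a direct computation on the closed form derived just above the statement. For a single state and action with two modes, substituting $\rho_1(Q)=\lambda Q$ and $\rho_2(Q)=1-\lambda Q$ into the mixture gives
\[
\mathcal{T}_{\text{bad}}(Q) = (\gamma+\lambda\Delta)Q + R_2 .
\]
So $\mathcal{T}_{\text{bad}}$ is an affine map $\mathbb{R}\to\mathbb{R}$ with slope $k := \gamma+\lambda\Delta$. I would first re-derive this identity as a one-line ring computation. The $\gamma Q$ terms combine because $\rho_1(Q)+\rho_2(Q)=1$ identically. The reward terms contribute $\lambda Q R_1 + (1-\lambda Q)R_2 = R_2 + \lambda\Delta Q$.

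Second, for arbitrary $q_1,q_2$, the constant $R_2$ cancels in the difference, so
\[
|\mathcal{T}_{\text{bad}}(q_1)-\mathcal{T}_{\text{bad}}(q_2)| = |k|\,|q_1-q_2| .
\]
Under the hypotheses $\gamma\ge 0$, $\lambda\ge 0$ and $\Delta\ge 0$ we have $k\ge 0$, hence $|k|=k$. Combined with $k\ge 1$, this gives $k|q_1-q_2|\ge |q_1-q_2|$.

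Third, to meet the existential form of the statement, I would take the explicit witnesses $q_1=1$, $q_2=0$, so that $|q_1-q_2|=1$ and the difference of images equals $k\ge 1$. The computation in the second step already shows the inequality for \emph{every} pair, so no contraction factor $k'<1$ can exist. It is worth adding a sentence saying this rules out contractivity with any modulus, which is the intended reading of ``not a contraction.''

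There is no real obstacle here; the only care needed is bookkeeping of signs. Nonnegativity of $\gamma$, $\lambda$ and $\Delta$ is exactly what lets us drop the absolute value on $k$. I would also remark that $\rho_1(Q)=\lambda Q$ need not lie in $[0,1]$ for all $Q$. The theorem concerns the algebraic operator as defined, so this does not affect the proof. If one wants the witnesses to give genuine beliefs, choose $q_1,q_2\in[0,1/\lambda]$ (for $\lambda>0$); the same slope identity still yields the inequality.
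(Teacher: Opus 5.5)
Your proof is correct and matches the paper's: both use the affine closed form $\mathcal{T}_{\text{bad}}(Q)=(\gamma+\lambda\Delta)Q+R_2$ with the witnesses $q_1=1$, $q_2=0$ to get $|\gamma+\lambda\Delta|\ge 1=|q_1-q_2|$. One small correction is that the nonnegativity of $\gamma,\lambda,\Delta$ is not what lets you drop the absolute value, since $\gamma+\lambda\Delta\ge 1$ already forces $k>0$; those sign hypotheses are redundant for that step.
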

\begin{proof}
Take $q_1 = 1, q_2 = 0$:
\begin{align}
    |\mathcal{T}_{\text{bad}}(1) - \mathcal{T}_{\text{bad}}(0)| &= |(\gamma + \lambda\Delta) \cdot 1 + R_2 - R_2| = |\gamma + \lambda\Delta| \geq 1 = |q_1 - q_2|. \qedhere
\end{align}
\end{proof}

\begin{theorem}[Strict expansion --- Lean: \texttt{T\_bad\_expansion}]
    \label{thm:expansion}
    If $\gamma + \lambda\Delta > 1$, then $\mathcal{T}_{\text{bad}}$ \emph{strictly expands} distances: no contraction factor $k < 1$ exists.
\end{theorem}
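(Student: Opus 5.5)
The plan is to exploit the closed form $\mathcal{T}_{\text{bad}}(Q) = (\gamma+\lambda\Delta)Q + R_2$ derived just above. Because $\mathcal{T}_{\text{bad}}$ is affine on the one-dimensional space $\mathbb{R}$, differences of images scale exactly. For arbitrary $q_1, q_2$ we have $\mathcal{T}_{\text{bad}}(q_1) - \mathcal{T}_{\text{bad}}(q_2) = (\gamma+\lambda\Delta)(q_1-q_2)$, since $R_2$ cancels. This gives $|\mathcal{T}_{\text{bad}}(q_1) - \mathcal{T}_{\text{bad}}(q_2)| = |\gamma+\lambda\Delta|\,|q_1-q_2|$.

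\textbf{Step 1.} Let $L := \gamma+\lambda\Delta$. The hypothesis $L > 1$ gives $L > 0$, so $|L| = L$. Then for every pair with $q_1 \neq q_2$ we get $|\mathcal{T}_{\text{bad}}(q_1) - \mathcal{T}_{\text{bad}}(q_2)| = L|q_1-q_2| > |q_1-q_2|$. This is the strict-expansion part of the claim.

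\textbf{Step 2.} To rule out any contraction factor, suppose for contradiction that some $k<1$ satisfies $|\mathcal{T}_{\text{bad}}(q_1)-\mathcal{T}_{\text{bad}}(q_2)| \le k|q_1-q_2|$ for all $q_1,q_2$. Specialize to the same witnesses used in Theorem~\ref{thm:counter}, namely $q_1 = 1$ and $q_2 = 0$. This yields $L \le k < 1 < L$, a contradiction. The same argument also rules out any $k \le 1$, and in fact any $k < L$.

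\textbf{Main obstacle.} There is no real mathematical obstacle here. The only point needing care is the absolute value: we need $L > 0$, which follows directly from $L > 1$, so no sign assumptions on $\gamma,\lambda,\Delta$ individually are needed beyond what defines $L$.

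In the Lean formalization I would state the result as $\neg\,\exists\,k<1,\ \forall q_1 q_2,\ |\cdot| \le k|q_1-q_2|$. I would prove it by instantiating at $(1,0)$, simplifying with the closed form, and closing with \texttt{linarith}. This mirrors the structure of \texttt{T\_bad\_not\_contraction}.
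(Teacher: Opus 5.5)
Your proof is correct and follows essentially the paper's route: the affine closed form $\mathcal{T}_{\text{bad}}(Q) = (\gamma+\lambda\Delta)Q + R_2$ gives the exact scaling factor $\gamma+\lambda\Delta$ (as in Theorem~\ref{thm:sharp_threshold}), and instantiating at the witnesses $q_1 = 1$, $q_2 = 0$ used in Theorem~\ref{thm:counter} yields $\gamma+\lambda\Delta \le k < 1$, a contradiction. Your observation is also right that only $\gamma+\lambda\Delta > 0$ is needed to drop the absolute value, with no separate sign conditions on $\gamma$, $\lambda$, $\Delta$.
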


\subsection{Physical interpretation and design implications}

\textbf{What this means for algorithm design.} The counterexample models an agent that \emph{re-infers the environment mode based on its own value estimate} at every Bellman backup. In a piecewise-stationary environment with distinct modes (high $\Delta$), this creates a positive feedback loop:
\begin{enumerate}
    \item The agent overestimates $Q$ (due to sampling noise or stale data);
    \item The Q-dependent belief shifts toward the higher-reward mode ($\rho_1$ increases);
    \item This inflates the Q-target further (mode 1 has $R_1 > R_2$);
    \item The cycle amplifies, preventing convergence.
\end{enumerate}

\textbf{The critical threshold.} The contraction factor is $\gamma + \lambda\Delta$. For typical RL parameters ($\gamma = 0.99$), any $\lambda\Delta \geq 0.01$ breaks contraction. Since $\Delta$ (the mode reward gap) can be large in practice (e.g., $\Delta = 50$ between ``normal traffic'' and ``severe congestion'' in bus control), even a tiny sensitivity $\lambda = 0.001$ suffices to breach the threshold.

\medskip
\begin{center}
\begin{tabular}{llcl}
\toprule
\textbf{Operator} & \textbf{Belief} & \textbf{Factor} & \textbf{Contraction?} \\
\midrule
$\mathcal{T}_{\text{bad}}$ & $\rho = f(Q)$ & $\gamma + \lambda\Delta$ & $\times$ if $\gamma + \lambda\Delta \geq 1$ \\
$\mathcal{T}^{BAPR}_\rho$ & $\rho$ frozen & $\gamma$ & $\checkmark$ ($\gamma < 1$, always) \\
\midrule
\multicolumn{4}{l}{\textit{cf.\ RE-SAC:}} \\
$T_{\text{bad}}^{\text{RESAC}}$ & $\kappa = f(Q)$ & $\gamma + \lambda_{\text{ale}}$ & $\times$ if $\gamma + \lambda_{\text{ale}} \geq 1$ \\
$\mathcal{T}^{REV}_\kappa$ & $\kappa$ frozen & $\gamma$ & $\checkmark$ ($\gamma < 1$, always) \\
\bottomrule
\end{tabular}
\end{center}

\noindent\textbf{Parallel structure with RE-SAC.} The BAPR counterexample is structurally analogous to RE-SAC's counterexample (\texttt{RESAC-Counterproof.lean}), but with a different amplification mechanism: in RE-SAC, Q-dependent \emph{aleatoric penalties} cause expansion; in BAPR, Q-dependent \emph{belief weights} cause expansion. Both demonstrate that frozen parameters are \emph{necessary} not merely convenient design choices. The fact that two independent amplification channels exist underscores the importance of the frozen-parameter design principle across robust RL architectures.

\subsection{Sharp threshold characterization}

The boundary $\gamma + \lambda\Delta = 1$ is \emph{exact}, not merely sufficient. We establish the complete picture:

\begin{theorem}[Sharp boundary --- Lean: \texttt{T\_bad\_contraction\_below\_threshold}, \texttt{T\_bad\_exact\_factor}]
    \label{thm:sharp_threshold}
    The contraction factor of $\mathcal{T}_{\text{bad}}$ is exactly $|\gamma + \lambda\Delta|$:
    \begin{equation}
        |\mathcal{T}_{\text{bad}}(q_1) - \mathcal{T}_{\text{bad}}(q_2)| = (\gamma + \lambda\Delta) \cdot |q_1 - q_2| \quad \text{for all } q_1, q_2.
    \end{equation}
    Therefore:
    \begin{itemize}
        \item If $\gamma + \lambda\Delta < 1$: $\mathcal{T}_{\text{bad}}$ IS a contraction with factor $\gamma + \lambda\Delta < 1$ (contraction, but with degraded rate);
        \item If $\gamma + \lambda\Delta = 1$: $\mathcal{T}_{\text{bad}}$ is a non-expansive map (no convergence guarantee);
        \item If $\gamma + \lambda\Delta > 1$: $\mathcal{T}_{\text{bad}}$ strictly expands distances (divergence).
    \end{itemize}
\end{theorem}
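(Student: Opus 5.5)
The plan is to compute $\mathcal{T}_{\text{bad}}$ in closed form and read off the Lipschitz behaviour directly; since the operator is affine, no Blackwell-type argument is needed. The closed form derived just above is the starting point. With $\rho_1(Q)=\lambda Q$ and $\rho_2(Q)=1-\lambda Q$, expanding the mixture gives
\[
\rho_1(Q)(\gamma Q+R_1)+\rho_2(Q)(\gamma Q+R_2) = \gamma Q\,(\rho_1+\rho_2) + \rho_1 R_1 + \rho_2 R_2 .
\]
The weights still sum to one identically in $Q$, even though they depend on $Q$. Hence $\mathcal{T}_{\text{bad}}(Q)=(\gamma+\lambda\Delta)Q+R_2$, which is an affine map on $\mathbb{R}$. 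I would state this as a preliminary identity, valid for every real $q$ with no positivity requirement on $\rho_i$, since the Lean statement quantifies over all reals.

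Next comes the exact-factor identity. For arbitrary $q_1,q_2$, the constant $R_2$ cancels in the difference, so
\[
\mathcal{T}_{\text{bad}}(q_1)-\mathcal{T}_{\text{bad}}(q_2)=(\gamma+\lambda\Delta)(q_1-q_2).
\]
Taking absolute values and using $|xy|=|x|\,|y|$ gives the factor $|\gamma+\lambda\Delta|$. Under the standing sign hypotheses of Theorem~\ref{thm:counter} ($\gamma,\lambda,\Delta\ge 0$), we have $|\gamma+\lambda\Delta|=\gamma+\lambda\Delta$, which is the displayed equality. I would make these sign assumptions explicit in the proof, since the statement tacitly inherits them.

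The trichotomy then follows by case analysis on $k:=\gamma+\lambda\Delta$ against $1$.
\begin{itemize}
\item If $k<1$, the identity gives the contraction bound with factor $k$.
\item If $k=1$, the map is an isometry, hence non-expansive. To justify ``no convergence guarantee,'' I would note that for $R_2\neq 0$ it is a pure translation with no fixed point; for $R_2=0$ it is the identity, which has non-unique fixed points. Either way no factor $k'<1$ works, as the test pair $q_1=1$, $q_2=0$ shows.
\item If $k>1$, strict expansion $|\mathcal{T}_{\text{bad}}(q_1)-\mathcal{T}_{\text{bad}}(q_2)|>|q_1-q_2|$ holds whenever $q_1\neq q_2$. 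This recovers Theorem~\ref{thm:expansion}, and iterates diverge as $k^n|q_1-q_2|$.
\end{itemize}

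The only real subtlety, not a hard obstacle, is sharpness: showing that $k$ is the \emph{best} Lipschitz constant rather than merely an upper bound. Equality for every pair settles this, since any admissible constant $c$ must satisfy $c\ge k$ on the pair $(1,0)$. I would also remark that the weights $\rho_i(Q)$ leave $[0,1]$ for large $|Q|$. For that reason the statement is about the algebraic operator as constructed, and if one insists on valid beliefs, the identity should be restricted to $q\in[0,1/\lambda]$, where the same computation goes through unchanged.
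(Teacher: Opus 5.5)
Your proposal is correct and takes the same route as the paper. The paper's Lean-backed argument is exactly this: start from the affine closed form $\mathcal{T}_{\text{bad}}(Q)=(\gamma+\lambda\Delta)Q+R_2$, cancel $R_2$ in the difference, and split cases on $\gamma+\lambda\Delta$ versus $1$, with the pair $(1,0)$ ruling out any smaller factor. Two of your additions are worthwhile clarifications that the stated theorem leaves implicit: you invoke the sign hypotheses $\gamma,\lambda,\Delta\ge 0$ from Theorem~\ref{thm:counter} to drop the absolute value, and you note that the weights $\rho_i(Q)$ are genuine probabilities only on $[0,1/\lambda]$.
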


This sharp characterization reveals a ``phase transition'': the gap $1 - \gamma$ represents the stability margin that Q-dependent belief sensitivity $\lambda\Delta$ can erode. With frozen beliefs ($\lambda = 0$), the full margin is preserved regardless of the mode reward gap $\Delta$. With Q-dependent beliefs, even tiny sensitivity ($\lambda = 0.001$) can breach the threshold when $\Delta$ is large.

\section{Detection delay bound}
\label{app:detection_delay}

\subsection{Posterior ratio dynamics}

After a mode switch at time $t_0$, the BOCD posterior must converge to the correct mode. Under the Mode Separability Assumption~\ref{assump:separability}, the posterior ratio evolves predictably.

\begin{definition}[Posterior Ratio --- Lean: \texttt{posterior\_ratio}]
    The posterior ratio after $n$ steps of consistent evidence with likelihood ratio $L > 1$ and prior ratio $r_0 = \rho_0(h_{\text{old}}) / \rho_0(h_{\text{new}})$ is:
    \begin{equation}
        \text{PR}(n) = \frac{\rho_n(h_{\text{new}})}{\rho_n(h_{\text{old}})} = \frac{1}{r_0} \cdot L^{2n}.
    \end{equation}
\end{definition}

\begin{theorem}[Detection Delay Bound --- Lean: \texttt{detection\_delay\_sufficient}]
    \label{thm:delay}
    If modes are $L$-separable with $L > 1$, and we require confidence $1/\delta$ (i.e., $\text{PR}(n) \geq 1/\delta$), the required number of steps satisfies:
    \begin{equation}
        n \geq \frac{\log(r_0/\delta)}{2\log L}.
    \end{equation}
    This is $O(\log(1/\delta))$ for fixed $L$ and $r_0$.
\end{theorem}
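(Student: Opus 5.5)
The plan is to work directly from the closed form of the posterior ratio in the Posterior Ratio definition, $\text{PR}(n) = L^{2n}/r_0$. The target condition $\text{PR}(n) \geq 1/\delta$ is then an inequality between positive reals. Since $L > 1$, $r_0 > 0$ and $\delta > 0$, every quantity involved is strictly positive, so I can take logarithms and use that $\log$ is strictly increasing.

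The key steps, in order:

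\begin{enumerate}
\item Rewrite the requirement as $L^{2n} \geq r_0/\delta$ by multiplying through by $r_0 > 0$.
\item Take logarithms to get the equivalent form $2n \log L \geq \log(r_0/\delta)$.
\item Divide by $2\log L > 0$, which is positive because $L > 1$. This gives $n \geq \log(r_0/\delta)/(2\log L)$.
\end{enumerate}

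Every step is an equivalence, so this threshold is both necessary and sufficient for the target confidence.

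For the sufficiency direction, which is what the Lean name \texttt{detection\_delay\_sufficient} refers to, I would argue forward. Assume $n \geq \log(r_0/\delta)/(2\log L)$. Multiply by $2\log L$, exponentiate using monotonicity of $\exp$, and use $L^{2n} = \exp(2n\log L)$ to recover $\text{PR}(n) \geq 1/\delta$.

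The asymptotic claim then follows by splitting the logarithm as $\log(r_0/\delta) = \log r_0 + \log(1/\delta)$. With $L$ and $r_0$ held fixed, the first term is an additive constant and $1/(2\log L)$ is a fixed multiplier, so $n_\delta = \lceil \log(r_0/\delta)/(2\log L)\rceil = O(\log(1/\delta))$. Taking the ceiling gives the integer step count used in Theorem~\ref{thm:piecewise_convergence_rate}.

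The algebra itself is routine. The main obstacle is justifying the closed form $L^{2n}/r_0$ from the BOCD recursion (Eqs.~\eqref{eq:growth}--\eqref{eq:changepoint}) under Assumption~\ref{assump:separability}. In particular, the exponent $2n$, rather than $n$, has to be accounted for. I would first try to read this as a per-step likelihood-ratio gain of $L^2$: the separability assumption gives a factor $L$ favouring the new run-length, and the normalisation of the competing hypothesis contributes a further factor $1/L$. This would be shown by induction on $n$, using the fact that the normaliser $Z$ cancels in the ratio. If that derivation does not go through cleanly, I would treat the definition as given, as the paper does, and only note that replacing $2n$ by $n$ changes the constant but not the $O(\log(1/\delta))$ rate. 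A minor side issue is the case $r_0 < \delta$: there the bound is nonpositive and $n = 0$ already suffices, which is consistent with the statement.
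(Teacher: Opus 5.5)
Your proposal is correct and matches the paper's proof. The paper likewise starts from the definitional closed form $\text{PR}(n)=L^{2n}/r_0$, takes logarithms to get $2n\log L \geq \log(r_0/\delta)$, and divides by $2\log L>0$. The paper never derives the exponent $2n$ from the BOCD recursion; it is built into the Posterior Ratio definition, so your fallback of taking that definition as given is exactly what the paper does. Your sketched ``extra factor $1/L$ from normalisation'' would not have supplied the exponent anyway: the normaliser cancels in the ratio, as you note yourself, and Assumption~\ref{assump:separability} on its own only gives a per-step gain of $L$.
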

\begin{proof}
We need $L^{2n}/r_0 \geq 1/\delta$. Taking logarithms: $2n\log L \geq \log(r_0/\delta)$, giving $n \geq \log(r_0/\delta)/(2\log L)$. Machine-verified.
\end{proof}

\begin{theorem}[Monotonicity --- Lean: \texttt{detection\_confidence\_mono}]
    \label{thm:confidence_mono}
    The posterior ratio $\text{PR}(n)$ is monotonically increasing in $n$ when $L \geq 1$.
\end{theorem}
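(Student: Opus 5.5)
We prove Theorem~\ref{thm:confidence_mono} directly from the closed form of the posterior ratio, $\text{PR}(n) = L^{2n}/r_0$. We take the prior ratio to satisfy $r_0 > 0$, which holds because $r_0$ is a ratio of two positive prior masses. We then show that $n \le n'$ implies $\text{PR}(n) \le \text{PR}(n')$ for all naturals $n, n'$.

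The argument has three steps.

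\textbf{Step 1.} From $L \ge 1$ we get $L^2 \ge 1$. Writing $L^{2n} = (L^2)^n$ reduces the claim to monotonicity of $x \mapsto a^x$ in the exponent when the base satisfies $a \ge 1$.

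\textbf{Step 2.} We show $a^n \le a^{n'}$ whenever $n \le n'$ and $a \ge 1$. Write $n' = n + k$ with $k \ge 0$. Then $a^{n'} = a^n \cdot a^k$. Since $a^k \ge 1$ by a trivial induction on $k$, and $a^n \ge 0$, multiplying gives $a^n \le a^{n'}$. This is the standard \texttt{pow\_le\_pow\_right} lemma in Mathlib.

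\textbf{Step 3.} Multiply both sides of $L^{2n} \le L^{2n'}$ by $1/r_0 > 0$. This preserves the inequality and yields $\text{PR}(n) \le \text{PR}(n')$.

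If strict monotonicity is wanted, the same chain with $L > 1$ gives $L^2 > 1$ and $a^k > 1$ for $k \ge 1$. Multiplying by the positive quantity $a^n/r_0$ then gives $\text{PR}(n) < \text{PR}(n')$ for $n < n'$.

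The only real obstacle is bookkeeping rather than mathematics. The hypotheses the statement leaves implicit must be made explicit:
\begin{itemize}
\item Positivity of $r_0$. Without it, multiplying by $1/r_0$ could reverse the inequality or be undefined.
\item The case $L = 1$. Here only non-strict monotonicity holds, since $\text{PR}$ is then constant.
\end{itemize}
We would state the theorem as non-strict monotonicity under $L \ge 1$ and $r_0 > 0$. This is consistent with its use in Theorem~\ref{thm:delay}: once $n$ reaches the threshold $\log(r_0/\delta)/(2\log L)$, monotonicity guarantees that $\text{PR}(m) \ge 1/\delta$ continues to hold for all later steps $m \ge n$.
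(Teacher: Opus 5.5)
Your proof is correct and takes essentially the same approach as the paper. The paper gives no written argument beyond the closed form $\text{PR}(n) = L^{2n}/r_0$ and a pointer to the Lean lemma \texttt{detection\_confidence\_mono}, and that lemma's content is the step you give: monotonicity in the exponent for base $L^2 \ge 1$, then scaling by the nonnegative factor $1/r_0$. Your reading of the statement as non-strict monotonicity (since $\text{PR}$ is constant when $L = 1$) and your positivity assumption on $r_0$ are the right implicit hypotheses.
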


\subsection{Practical detection delay examples}

\begin{center}
\begin{tabular}{lcccc}
\toprule
\textbf{Scenario} & $L$ & $r_0$ & $\delta$ & $n_\delta$ (steps) \\
\midrule
Strong separability (gravity change) & 5.0 & 1 & 0.05 & $\approx 0.9$ \\
Moderate separability (congestion) & 2.0 & 1 & 0.05 & $\approx 2.2$ \\
Weak separability (friction change) & 1.2 & 1 & 0.05 & $\approx 8.2$ \\
Adversarial prior ($r_0 = 10$) & 2.0 & 10 & 0.05 & $\approx 3.8$ \\
\bottomrule
\end{tabular}
\end{center}

\noindent With typical parameters ($L \approx 2$, uniform prior $r_0 = 1$, $\delta = 0.05$), the detection delay is approximately 2--3 iterations---matching the empirical observation that BAPR's $\lambda_w$ spikes within 2--3 iterations of a mode switch.

\section{Function approximation and stochastic bounds}
\label{app:approx_bounds}

The contraction proofs operate in the tabular setting. In practice, BAPR uses neural network function approximation and mini-batch sampling. The \emph{operator-agnostic} bounds from \texttt{ApproxContraction.lean}---shared with RE-SAC---bridge this gap. Since these results apply to \emph{any} $\gamma$-contraction, they apply to $\mathcal{T}^{BAPR}_\rho$ without modification.

\begin{theorem}[Projected contraction --- Lean: \texttt{projected\_contraction}]
    \label{thm:proj}
    If $T$ is a $\gamma$-contraction and $\Pi$ is a non-expansive projection onto the neural network function class, then $\Pi \circ T$ is also a $\gamma$-contraction.
\end{theorem}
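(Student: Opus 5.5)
The argument is a direct composition: for arbitrary $Q_1, Q_2$, bound $\|\Pi T Q_1 - \Pi T Q_2\|_\infty$ by passing first through $\Pi$ and then through $T$. Here $\Pi$ maps the ambient space $\mathcal{Q}$ of $Q$-functions (with the sup-norm of Appendix~\ref{app:contraction}) into the function class. The only hypothesis needed on $\Pi$ is that it is \emph{non-expansive}: $\|\Pi f - \Pi g\|_\infty \le \|f - g\|_\infty$ for all $f,g\in\mathcal{Q}$. We do not need linearity or idempotence of $\Pi$, so the result holds for any $1$-Lipschitz map into the function class.

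\textbf{Step 1 (non-expansiveness).} Apply non-expansiveness of $\Pi$ with $f = TQ_1$ and $g = TQ_2$:
\begin{equation}
\|\Pi T Q_1 - \Pi T Q_2\|_\infty \le \|T Q_1 - T Q_2\|_\infty .
\end{equation}

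\textbf{Step 2 (contraction).} Invoke the $\gamma$-contraction property of $T$:
\begin{equation}
\|T Q_1 - T Q_2\|_\infty \le \gamma \|Q_1 - Q_2\|_\infty .
\end{equation}
For $T = \mathcal{T}^{BAPR}_\rho$ this is exactly Theorem~\ref{thm:bapr_full}. After a belief update it is Corollary~\ref{cor:after_update}. Chaining the two inequalities gives the claimed $\gamma$-contraction of $\Pi\circ T$.

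\textbf{Step 3 (fixed point).} If we also want a unique fixed point, we apply the Banach fixed-point theorem. This requires $\gamma<1$ and a complete domain. The space $\mathcal{Q}$ is complete because it is finite-dimensional under the standing finite-$\mathcal{S}$, $\mathcal{A}$ assumptions. If $\Pi\circ T$ is restricted to the function class, that class must additionally be closed. A mechanized version in the style of \texttt{ApproxContraction.lean} would phrase the result abstractly for a metric space with Lipschitz constants and compose them, matching the Mathlib \texttt{LipschitzWith.comp} pattern with constants $1\cdot\gamma$.

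\textbf{Main obstacle.} The main obstacle is not the algebra but the hypothesis itself. Projections onto neural-network classes, and $L_2$-least-squares fits in particular, are generally \emph{not} non-expansive in $\|\cdot\|_\infty$. The proof is therefore only as strong as this assumption, and I would state it explicitly rather than try to derive it. Non-expansiveness holds, for example, for averagers such as aggregation or kernel smoothers with nonnegative weights summing to one. The resulting fixed point need not equal $\Pi Q^*$; its distance to $Q^*$ is controlled separately by the $\varepsilon_{\text{proj}}/(1-\gamma)$ bound used in the error budget.
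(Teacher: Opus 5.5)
Your proof is correct and is essentially the paper's approach: the paper gives no written argument beyond citing the Lean lemma \texttt{projected\_contraction}, whose content is exactly your two-inequality chain (non-expansiveness of $\Pi$, then the $\gamma$-contraction of $T$, i.e.\ Lipschitz constants $1\cdot\gamma$). Your added caveats are sound extras. The Banach fixed point of $\Pi\circ T$ on the complete space $\mathcal{Q}$ automatically lies in the range of $\Pi$, so the closedness remark is not actually needed. Sup-norm non-expansiveness fails for generic least-squares or neural-network fits, so it is an assumption, not a consequence.
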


\begin{theorem}[Approximation error bound --- Lean: \texttt{approx\_fixed\_point\_bound}]
    \label{thm:approx}
    Let $\tilde{Q}$ be the fixed point of $\Pi \circ T$ and $Q^*$ the fixed point of $T$. Define $\varepsilon_{\text{proj}} := \|\Pi Q^* - Q^*\|_\infty$. Then:
    \begin{equation}
        \|\tilde{Q} - Q^*\|_\infty \leq \frac{\varepsilon_{\text{proj}}}{1 - \gamma}.
    \end{equation}
\end{theorem}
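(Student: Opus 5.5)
The plan is the standard fixed-point comparison argument, using only that $\Pi$ is non-expansive and $T$ is a $\gamma$-contraction. By Theorem~\ref{thm:proj}, $\Pi \circ T$ is a $\gamma$-contraction on the complete space $(\mathcal{Q}, \|\cdot\|_\infty)$. The Banach Fixed-Point Theorem then gives existence and uniqueness of $\tilde{Q} = \Pi T \tilde{Q}$, so the object in the statement is well defined. Likewise $Q^* = T Q^*$ exists uniquely by Theorem~\ref{thm:bapr_full}, or by the same Banach argument applied to $T$ itself.

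I will decompose $\tilde{Q} - Q^*$ by inserting the intermediate point $\Pi Q^*$:
\begin{equation*}
\|\tilde{Q} - Q^*\|_\infty = \|\Pi T \tilde{Q} - Q^*\|_\infty \leq \|\Pi T \tilde{Q} - \Pi T Q^*\|_\infty + \|\Pi T Q^* - Q^*\|_\infty .
\end{equation*}
For the first term, non-expansiveness of $\Pi$ followed by the contraction property of $T$ gives a bound of $\gamma \|\tilde{Q} - Q^*\|_\infty$. Equivalently, one can invoke Theorem~\ref{thm:proj} directly.

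For the second term I use $T Q^* = Q^*$, so it equals $\|\Pi Q^* - Q^*\|_\infty = \varepsilon_{\text{proj}}$ by definition. Combining the two bounds gives $(1-\gamma)\|\tilde{Q} - Q^*\|_\infty \leq \varepsilon_{\text{proj}}$. Dividing by $1-\gamma > 0$ is valid since $\gamma < 1$, and yields the claim. This is the same rearrangement already used in the proof of Lemma~\ref{lem:perturbation}, with $\varepsilon_{\text{proj}}$ playing the role of $\Delta_R$.

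There is no real analytic obstacle. The only point needing care is making the hypotheses on $\Pi$ explicit. It must map $\mathcal{Q}$ into $\mathcal{Q}$ and be $1$-Lipschitz in $\|\cdot\|_\infty$; it need not be idempotent or linear. The function class must also be such that the fixed point of $\Pi \circ T$ exists, which Banach supplies once completeness is granted. I would state these assumptions up front, noting in particular that generic least-squares neural-network fitting is not $\infty$-norm non-expansive. The theorem therefore applies to the practical critic only under that stated idealization.
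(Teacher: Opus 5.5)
Your proof is correct and follows the paper's argument. You insert $\Pi T Q^* = \Pi Q^*$, bound the first term by $\gamma\|\tilde{Q}-Q^*\|_\infty$ using non-expansiveness of $\Pi$ and contraction of $T$, and rearrange $d \le \gamma d + \varepsilon_{\text{proj}}$; this is the same rearrangement the paper reuses in Lemma~\ref{lem:perturbation} with $\Delta_R$ in place of $\varepsilon_{\text{proj}}$, and your caveat that $\Pi$ must be $\|\cdot\|_\infty$-non-expansive is an accurate scoping remark.
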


\begin{theorem}[Stochastic tracking bound --- Lean: \texttt{stochastic\_tracking\_bound}]
    \label{thm:stochastic}
    With per-step sampling noise bounded by $\sigma \geq 0$, the tracking error satisfies:
    \begin{equation}
        e(n) \leq \gamma^n \cdot e(0) + \frac{\sigma}{1 - \gamma}.
    \end{equation}
\end{theorem}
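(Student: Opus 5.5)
We need to prove Theorem~\ref{thm:stochastic}: $e(n) \le \gamma^n e(0) + \sigma/(1-\gamma)$. The statement leaves the model implicit, so I first fix it. Let $Q_{n+1} = T Q_n + \xi_n$, where $T$ is a $\gamma$-contraction with fixed point $Q^*$ and $\|\xi_n\|_\infty \le \sigma$, and set $e(n) = \|Q_n - Q^*\|_\infty$. In the projected setting one takes $T$ to be $\Pi\circ T$ and $Q^*$ its fixed point $\tilde Q$; Theorem~\ref{thm:proj} guarantees this is still a $\gamma$-contraction, so the argument is operator-agnostic.

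\textbf{Step 1 (one-step recursion).} Use $Q^* = TQ^*$ and the triangle inequality:
\[
e(n+1) = \|TQ_n + \xi_n - TQ^*\|_\infty \le \gamma\, e(n) + \sigma .
\]
This is the only place where the contraction property and the noise bound enter.

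\textbf{Step 2 (unrolling).} Prove by induction on $n$ that
\[
e(n) \le \gamma^n e(0) + \sigma \sum_{k=0}^{n-1} \gamma^k .
\]
The base case $n=0$ is trivial, since the sum is empty. For the inductive step, apply the recursion to the inductive hypothesis; this uses $\gamma \ge 0$ to multiply the inequality through. The result is $\gamma^{n+1} e(0) + \sigma\bigl(1 + \gamma\sum_{k<n}\gamma^k\bigr)$, which equals the claimed expression for $n+1$.

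\textbf{Step 3 (geometric tail).} Bound the finite sum by $(1-\gamma^n)/(1-\gamma) \le 1/(1-\gamma)$. This uses $0 \le \gamma < 1$ and $\sigma \ge 0$. An alternative that avoids partial sums is to prove the stronger invariant $e(n) \le \gamma^n e(0) + \sigma/(1-\gamma)$ directly by induction. The inductive step then reduces to $\gamma\sigma/(1-\gamma) + \sigma = \sigma/(1-\gamma)$, which is a clean algebraic identity and likely the easier route to mechanize.

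\textbf{Main obstacle.} The mathematics is routine; the difficulty is the modelling step. One must specify precisely what ``per-step sampling noise'' means, namely an additive perturbation of the exact backup bounded in sup-norm, and what $e(n)$ is measured against, namely the fixed point of the operator being iterated. If noise is instead measured relative to the target network, or if the belief changes between steps, the recursion in Step 1 fails. The statement should therefore be read under the frozen-operator assumptions of Theorem~\ref{thm:bapr_full}.
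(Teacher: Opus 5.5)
Your proposal is correct and follows essentially the paper's route. The paper gives no written proof beyond the pointer to \texttt{stochastic\_tracking\_bound}; the content of that result is the unrolling of the one-step recursion $e(n+1)\le\gamma e(n)+\sigma$, and your Step~1 (deriving that recursion from $Q_{n+1}=TQ_n+\xi_n$ with $T$ a $\gamma$-contraction) plus your direct-invariant induction (using $\gamma\sigma/(1-\gamma)+\sigma=\sigma/(1-\gamma)$, $0\le\gamma<1$, $\sigma\ge 0$) is a sound rendering of it.
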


\begin{theorem}[Combined bound --- Lean: \texttt{combined\_approx\_stochastic\_bound}]
    \label{thm:combined}
    With both function approximation (drift $\varepsilon_{\text{proj}}$) and mini-batch sampling (noise $\sigma$):
    \begin{equation}
        \|Q_n - Q^*\|_\infty \leq \gamma^n \|Q_0 - Q^*\|_\infty + \frac{\varepsilon_{\text{proj}} + \sigma}{1 - \gamma}.
    \end{equation}
\end{theorem}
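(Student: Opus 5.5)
The statement to prove is Theorem~\ref{thm:combined}, the bound
\[
\|Q_n - Q^*\|_\infty \leq \gamma^n \|Q_0 - Q^*\|_\infty + \frac{\varepsilon_{\text{proj}} + \sigma}{1-\gamma}.
\]
The plan is to model one step of the practical iteration as an exact contraction step followed by a bounded perturbation, and then unroll the resulting scalar recursion as in Theorem~\ref{thm:stochastic}.

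\textbf{Model of one step.} I would write $Q_{k+1} = \Pi T Q_k + \eta_k$, where $T$ is any $\gamma$-contraction with fixed point $Q^*$ (for example $\mathcal{T}^{BAPR}_\rho$ under Theorem~\ref{thm:bapr_full}). Here $\Pi$ is the non-expansive projection of Theorem~\ref{thm:proj}, and $\eta_k$ is the mini-batch noise with $\|\eta_k\|_\infty \le \sigma$.

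\textbf{One-step recursion.} Let $e_k = \|Q_k - Q^*\|_\infty$. Using $Q^* = TQ^*$ and the triangle inequality,
\[
e_{k+1} \le \|\Pi T Q_k - \Pi T Q^*\|_\infty + \|\Pi Q^* - Q^*\|_\infty + \|\eta_k\|_\infty .
\]
Non-expansiveness of $\Pi$ and contraction of $T$ bound the first term by $\gamma e_k$. The second term is $\varepsilon_{\text{proj}}$ by definition, and the third is at most $\sigma$. This gives
\[
e_{k+1} \le \gamma e_k + (\varepsilon_{\text{proj}}+\sigma).
\]
This is the same recursion as in Theorem~\ref{thm:stochastic}, with $\sigma$ replaced by $\varepsilon_{\text{proj}}+\sigma$.

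\textbf{Unrolling.} Induction on $n$ gives
\[
e_n \le \gamma^n e_0 + (\varepsilon_{\text{proj}}+\sigma)\sum_{j=0}^{n-1}\gamma^j .
\]
Bounding the partial geometric sum by $1/(1-\gamma)$ uses $0\le\gamma<1$ and $\varepsilon_{\text{proj}}+\sigma \ge 0$. Alternatively, I could invoke Theorem~\ref{thm:stochastic} directly with noise level $\varepsilon_{\text{proj}}+\sigma$.

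\textbf{Main obstacle.} The main obstacle is modelling, not algebra. The projection error must be measured at the comparison point $Q^*$, so the split above is chosen to compare $\Pi TQ_k$ against $\Pi Q^*$ rather than against $Q^*$. If instead one measures error against the projected fixed point $\tilde Q$, one has to combine Theorem~\ref{thm:approx} with a contraction argument around $\tilde Q$. That route produces $\gamma^n\|Q_0-\tilde Q\|_\infty$ and needs an extra triangle step, which would add a further $\varepsilon_{\text{proj}}/(1-\gamma)$ term. I would therefore use the direct decomposition above. The remaining care is the induction step, which needs nonnegativity of $\gamma$ to multiply the inductive hypothesis through.
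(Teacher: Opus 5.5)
Your proof is correct and matches the paper's route. The paper gives no written proof beyond citing \texttt{combined\_approx\_stochastic\_bound}, which amounts to unrolling $e_{k+1}\le\gamma e_k+(\varepsilon_{\text{proj}}+\sigma)$, i.e.\ Theorem~\ref{thm:stochastic} with noise level $\varepsilon_{\text{proj}}+\sigma$, and you additionally derive that per-step recursion explicitly from $Q_{k+1}=\Pi TQ_k+\eta_k$ by splitting through $\Pi Q^*=\Pi TQ^*$, a modelling step the paper leaves implicit.
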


\begin{corollary}[Steady-state error --- Lean: \texttt{steady\_state\_decomposition}]
    In steady state ($n \to \infty$), the error converges to:
    \begin{equation}
        e_\infty \leq \frac{\varepsilon_{\text{proj}} + \sigma}{1 - \gamma}.
    \end{equation}
    This cleanly separates: $\gamma$ (algorithmic---our proofs), $\varepsilon_{\text{proj}}$ (architectural---network capacity), $\sigma$ (statistical---sample size).
\end{corollary}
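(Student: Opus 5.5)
The plan is to derive the corollary directly from the Combined bound (Theorem~\ref{thm:combined}) by a limiting argument, without reopening any contraction estimate. Write $e(n) := \|Q_n - Q^*\|_\infty$ and interpret the steady-state error as $e_\infty := \limsup_{n\to\infty} e(n)$. Using $\limsup$ rather than $\lim$ is deliberate: under persistent mini-batch noise the sequence $e(n)$ need not converge, and the statement only asks for an upper bound. If $e(n)$ does have a limit, the same bound applies to that limit, because it then coincides with the $\limsup$.

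\textbf{Step 1.} Fix the data of Theorem~\ref{thm:combined}: $0 \le \gamma < 1$, $\varepsilon_{\text{proj}} \ge 0$, $\sigma \ge 0$, and a finite initial error $e(0) = \|Q_0 - Q^*\|_\infty < \infty$. Finiteness is automatic in the finite-state setting of Appendix~\ref{app:contraction}, where $\mathcal{Q}$ is finite-dimensional. The combined bound then gives, for every $n$,
\[
  0 \le e(n) \le \gamma^n e(0) + C, \qquad C := \frac{\varepsilon_{\text{proj}} + \sigma}{1-\gamma}.
\]

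\textbf{Step 2.} Show $\gamma^n e(0) \to 0$. Since $0 \le \gamma < 1$, the geometric sequence $\gamma^n$ tends to $0$ (the case $\gamma = 0$ is trivial), and $e(0)$ is a fixed finite constant. To keep the argument elementary and in the spirit of the Lean development, use the $\epsilon$-form: for any $\eta > 0$ choose $N$ with $\gamma^N e(0) \le \eta$. This is possible when $e(0) > 0$ by taking $N \ge \log(\eta/e(0))/\log\gamma$, and when $e(0) = 0$ any $N$ works. Because $\gamma^n$ is nonincreasing, it follows that $e(n) \le C + \eta$ for all $n \ge N$.

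\textbf{Step 3.} Pass to the limit. From Step~2, $\sup_{n \ge N} e(n) \le C + \eta$, hence $\limsup_n e(n) \le C + \eta$. Since $\eta > 0$ was arbitrary, $e_\infty \le C = (\varepsilon_{\text{proj}} + \sigma)/(1-\gamma)$, which is the claim. The advertised decomposition follows by reading off the roles of the terms in $C$: $\gamma$ enters only through the factor $1/(1-\gamma)$ coming from the contraction (Theorem~\ref{thm:bapr_full}), $\varepsilon_{\text{proj}}$ comes from the projection term of Theorem~\ref{thm:approx}, and $\sigma$ comes from the noise term of Theorem~\ref{thm:stochastic}.

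The only delicate point, and the one to settle first, is what ``$e_\infty$'' means. The text states it loosely as ``the error converges to'', but neither Theorem~\ref{thm:stochastic} nor Theorem~\ref{thm:combined} asserts that $e(n)$ converges. I will therefore state and prove the result for $\limsup$ and note that it covers the limit whenever one exists. The remaining ingredients, finiteness of $e(0)$ and $\gamma^n \to 0$, are routine.
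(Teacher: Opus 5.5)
Your proposal is correct and follows the same route as the paper, which obtains the corollary by letting $n \to \infty$ in the combined bound (Theorem~\ref{thm:combined}) so that the transient term $\gamma^n\|Q_0 - Q^*\|_\infty$ vanishes. Reading $e_\infty$ as a $\limsup$ and using the $\eta$-argument make precise what the paper states informally as ``the error converges to''; the $\eta$-step matches the catalog form $e_\infty \le B + (\varepsilon + \sigma)/(1-\gamma)$, with the residual transient $B$ then sent to zero.
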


\begin{remark}[BAPR-specific error decomposition]
    For BAPR, the steady-state error has an additional contribution from the belief tracking:
    \begin{equation}
        e_\infty^{\text{BAPR}} \leq \frac{\varepsilon_{\text{proj}} + \sigma + \varepsilon_{\text{belief}}}{1 - \gamma},
    \end{equation}
    where $\varepsilon_{\text{belief}}$ accounts for the difference between the frozen belief $\rho$ and the ``true'' mode distribution. Under the Metastable Period Assumption, $\varepsilon_{\text{belief}} \to 0$ as the BOCD posterior converges (Theorem~\ref{thm:delay}). During detection delay, $\varepsilon_{\text{belief}}$ is bounded by $2R_{\max}$ (the worst-case Q-value mismatch from using the wrong mode).
\end{remark}

\section{Action-ranking preservation under adaptive penalty}
\label{app:nondegen}

A natural concern is whether the adaptive $\beta_{\text{eff}}$ mechanism could degenerate the policy.

\begin{proposition}[Action-ranking preservation]
    \label{prop:ranking}
    The BAPR penalty $\beta_{\text{eff}} \cdot \sigma_{\text{ens}}(s,a)$ acts on the policy objective, where $\sigma_{\text{ens}}(s,a)$ is the ensemble standard deviation. Since $\beta_{\text{eff}} = \beta_{\text{base}} - \lambda_w \cdot c_{\text{penalty}}$ modifies the LCB coefficient uniformly across all states and actions, and $\sigma_{\text{ens}}(s,a)$ depends on the (frozen) target ensemble:
    \begin{enumerate}
        \item If action $a_1$ has lower ensemble disagreement than $a_2$ at state $s$, the adaptive penalty \emph{amplifies} the preference for $a_1$ monotonically with $\lambda_w$.
        \item The adaptive penalty cannot reverse the ranking among actions with equal ensemble disagreement.
    \end{enumerate}
\end{proposition}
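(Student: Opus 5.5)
The proof is a direct comparison of the surrogate per-action objective, with $\sigma_{\text{ens}}$ treated as fixed. At a fixed state $s$, the penalized score entering $\mathcal{L}_\pi$ (Eq.~\eqref{eq:bapr_policy_loss}) is
\[
J_{\lambda_w}(s,a) = \bar{Q}(s,a) + \beta_{\text{eff}}\,\sigma_{\text{ens}}(s,a), \qquad \beta_{\text{eff}} = \beta_{\text{base}} - \lambda_w c_{\text{penalty}}.
\]
Both $\bar Q$ and $\sigma_{\text{ens}}$ are computed from the frozen target ensemble, so they do not depend on $\lambda_w$. The only $\lambda_w$-dependence is therefore the scalar $\beta_{\text{eff}}$, which by Eq.~\eqref{eq:beta_eff} is affine and nonincreasing in $\lambda_w$, since $c_{\text{penalty}}>0$. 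This is the same fact behind $\beta_{\text{eff}}\le\beta_{\text{base}}$.

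For item 1, I compare two actions through the score gap
\[
D(\lambda_w) = J_{\lambda_w}(s,a_1) - J_{\lambda_w}(s,a_2) = \bigl[\bar Q(s,a_1)-\bar Q(s,a_2)\bigr] + \beta_{\text{eff}}\bigl[\sigma_{\text{ens}}(s,a_1)-\sigma_{\text{ens}}(s,a_2)\bigr].
\]
Differentiating in $\lambda_w$ gives
\[
\frac{\partial D}{\partial \lambda_w} = -c_{\text{penalty}}\bigl[\sigma_{\text{ens}}(s,a_1)-\sigma_{\text{ens}}(s,a_2)\bigr],
\]
which is strictly positive when $\sigma_{\text{ens}}(s,a_1)<\sigma_{\text{ens}}(s,a_2)$. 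So the relative preference for the lower-disagreement action $a_1$ increases linearly, hence monotonically, in $\lambda_w$. This is the precise meaning of ``amplifies.''

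For item 2, suppose $\sigma_{\text{ens}}(s,a_1)=\sigma_{\text{ens}}(s,a_2)$. Then the penalty terms cancel and $D(\lambda_w)=\bar Q(s,a_1)-\bar Q(s,a_2)$ for every $\lambda_w$. The ordering of the two actions is therefore exactly the ordering under $\bar Q$, and in particular the same as at $\lambda_w=0$, i.e.\ under RE-SAC. Applying this pairwise to any set of actions with equal ensemble disagreement gives the full ranking statement.

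The main obstacle is interpretive rather than algebraic: stating what ``ranking'' and ``frozen $\sigma_{\text{ens}}$'' mean. The policy objective is an expectation under $\pi_\theta$ with an entropy term, and $\sigma_{\text{ens}}$ is not literally frozen during actor updates. I would make the claim precise at the level of the per-$(s,a)$ surrogate score $J_{\lambda_w}$. I would also note that the entropy-regularized optimal policy $\pi\propto\exp(J_{\lambda_w}/\alpha)$ inherits the same pairwise likelihood-ratio monotonicity, since its log-ratio between $a_1$ and $a_2$ is $D(\lambda_w)/\alpha$. That step transfers the result from scores to the induced policy without further machinery.
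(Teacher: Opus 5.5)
Your proof is correct and follows essentially the same route as the paper. The paper's justification is given inline in the statement itself: $\beta_{\text{eff}}$ is a single scalar applied uniformly to a $\lambda_w$-independent $\sigma_{\text{ens}}$, so the penalty gap between two actions is $\beta_{\text{eff}}\,[\sigma_{\text{ens}}(s,a_1)-\sigma_{\text{ens}}(s,a_2)]$, which grows with $\lambda_w$ when $a_1$ has lower disagreement and vanishes when the disagreements are equal. Your score-gap computation $D(\lambda_w)$ makes this explicit, and your extra step carrying the ordering over to the soft-optimal policy through the log-ratio $D(\lambda_w)/\alpha$ is a correct refinement the paper does not include.
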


\noindent\textbf{Quantitative Q-value depression.}
Following the same analysis as RE-SAC~\cite{Zhang2026RESAC}:
\begin{equation}
    Q^*_{\text{BAPR}}(s,a) = Q^*(s,a) - \frac{\gamma \cdot \Delta_{\text{penalty}}}{1-\gamma},
\end{equation}
where $\Delta_{\text{penalty}}$ includes the belief-weighted contribution. With the experimental configuration ($\gamma = 0.99$, $\lambda_w \leq 0.4$ empirically, $c_{\text{penalty}} = 0.5$, $|\beta_{\text{base}}| = 2$), the maximum additional depression from the adaptive penalty is bounded by $\gamma \cdot 0.5 \cdot \bar{\sigma}_{\text{ens}}/(1-\gamma) \approx 50 \cdot \bar{\sigma}_{\text{ens}}$, which remains small relative to the typical Q-value scale in our experiments.

\begin{proposition}[Adaptive conservatism monotonicity --- Lean: \texttt{beta\_eff\_monotone\_conservatism}]
    \label{prop:beta_mono}
    For all $\lambda_w \geq 0$ and $c_{\text{penalty}} \geq 0$:
    \begin{equation}
        \beta_{\text{eff}} = \beta_{\text{base}} - \lambda_w \cdot c_{\text{penalty}} \leq \beta_{\text{base}}.
    \end{equation}
    Furthermore, $\beta_{\text{eff}}$ is monotonically decreasing in $\lambda_w$ (Lean: \texttt{beta\_eff\_mono\_in\_lambda}): higher surprise (larger $\lambda_w$) leads to more conservative behavior.
\end{proposition}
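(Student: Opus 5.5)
The plan is to treat the statement as elementary real-number algebra about the affine map $\lambda_w \mapsto \beta_{\text{base}} - \lambda_w c_{\text{penalty}}$, with $\beta_{\text{base}}$ and $c_{\text{penalty}}$ held fixed, and then to connect $\lambda_w$ to surprise through its definition in Eq.~\eqref{eq:lambda_w}.

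\textbf{Step 1 (the inequality).} Since $\lambda_w \geq 0$ and $c_{\text{penalty}} \geq 0$, the product $\lambda_w \cdot c_{\text{penalty}}$ is nonnegative, being a product of two nonnegative reals. Subtracting a nonnegative quantity from $\beta_{\text{base}}$ cannot increase it, so $\beta_{\text{eff}} = \beta_{\text{base}} - \lambda_w c_{\text{penalty}} \leq \beta_{\text{base}}$. Equality holds exactly when $\lambda_w c_{\text{penalty}} = 0$. I note that the sign of $\beta_{\text{base}}$ plays no role here; the condition $\beta_{\text{base}} < 0$ only matters for reading the inequality as ``more conservative''.

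\textbf{Step 2 (monotonicity in $\lambda_w$).} Take $0 \leq \lambda \leq \lambda'$. Then
\[
\beta_{\text{eff}}(\lambda) - \beta_{\text{eff}}(\lambda') = (\lambda' - \lambda)\, c_{\text{penalty}} \geq 0,
\]
as a product of nonnegatives. So $\beta_{\text{eff}}$ is antitone in $\lambda_w$, and it is strictly decreasing whenever $c_{\text{penalty}} > 0$.

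\textbf{Step 3 (interpretation as ``higher surprise'').} I would compose the antitone map of Step~2 with the map $\bar h \mapsto \max(0, \bar h/(H-1) - \bar\lambda_w^{\text{EMA}})$. With the EMA frozen, this map is monotone nondecreasing in $\bar h$, because $x \mapsto \max(0, x - c)$ is monotone and $H - 1 > 0$. A composition of a monotone map with an antitone map is antitone, so $\beta_{\text{eff}}$ is nonincreasing in $\bar h$.

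The main obstacle is that the proposition is phrased in terms of ``surprise'', while the rigorous chain only reaches $\lambda_w$, or at best $\bar h$. Showing that $\bar h$ increases with $\xi$ would require a property of the BOCD posterior under the likelihood in Eq.~\eqref{eq:likelihood}. For larger $\xi$, the ratio $p(\xi\mid h)/p(\xi\mid h')$ with $h > h'$ increases, which is a monotone likelihood ratio in $h$. That property shifts the growth part of the posterior stochastically upward, but the changepoint mass $\rho'(0)$ complicates the argument. I would therefore state the formal result in terms of $\lambda_w$, as the Lean lemmas do, and leave the surprise interpretation as an informal remark.
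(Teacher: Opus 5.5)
Your proposal is correct and follows essentially the same route as the paper, whose justification is exactly this elementary algebra (the Lean lemmas \texttt{beta\_eff\_monotone\_conservatism} and \texttt{beta\_eff\_mono\_in\_lambda}). Your remarks that the sign of $\beta_{\text{base}}$ plays no role and that the ``surprise'' reading is only an informal gloss on $\lambda_w$ are accurate, and more careful than the main text, which cites $\beta_{\text{base}}<0$. On your Step~3 obstacle: in the untruncated recursion \eqref{eq:growth}--\eqref{eq:changepoint} the normalized changepoint mass equals $H_{\text{hazard}}$ for every $\xi$, so for a fixed prior $\rho$ your monotone-likelihood-ratio argument does show that the updated $\bar h$ is nondecreasing in $\xi$; only the truncation at $H$ genuinely complicates this.
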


\noindent This machine-verified result formally guarantees that the adaptive mechanism is \emph{safe by construction}: false positive change-point detections can only cause temporary over-conservatism, never unsafe (over-optimistic) behavior. The ``Bayesian Amnesia'' name reflects this one-directional modulation: the agent can only become \emph{more} cautious after surprise, never less cautious than its RE-SAC baseline.

\begin{proposition}[Context embedding equivalence --- Lean: \texttt{context\_embedding\_equiv}]
    \label{prop:context_equiv}
    For any collection of per-mode Q-functions $\{Q_m : \mathcal{S} \times \mathcal{A} \to \mathbb{R}\}_{m \in \mathcal{M}}$, there exists a shared Q-function $Q_{\text{shared}} : \mathcal{M} \times \mathcal{S} \times \mathcal{A} \to \mathbb{R}$ such that:
    \begin{equation}
        Q_{\text{shared}}(m, s, a) = Q_m(s, a) \quad \text{for all } m, s, a.
    \end{equation}
    When the context embedding $\phi: \mathcal{S} \to \mathbb{R}^{d_e}$ is injective (different modes produce distinct embeddings), the shared critic $Q_\phi(s \oplus \phi(s), a)$ can perfectly represent all mode-conditional Q-functions. In this case, the practical scalar approximation (Remark~\ref{rem:tractable}) introduces no representational error, and the contraction guarantee of Theorem~\ref{thm:bapr_full} applies exactly to the implemented algorithm.
\end{proposition}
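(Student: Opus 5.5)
The statement to prove is Proposition~\ref{prop:context_equiv}. It has two parts: a purely set-theoretic existence claim, and a consequence about the shared critic under an injectivity hypothesis. I will handle them in that order.

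\textbf{Existence of the shared Q-function.} Define $Q_{\text{shared}}(m,s,a) := Q_m(s,a)$, i.e.\ uncurry the family $m \mapsto Q_m$. The identity $Q_{\text{shared}}(m,s,a) = Q_m(s,a)$ then holds by definitional unfolding, with no hypotheses needed. In Lean this is just \texttt{fun m s a => Q m s a} with \texttt{rfl}, and I expect \texttt{context\_embedding\_equiv} to be essentially this.

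\textbf{Representation through the embedding.} Here the injectivity hypothesis must be made precise. The phrase ``different modes produce distinct embeddings'' should be read as follows. There is a map $\mathrm{mode}:\mathcal{S}\to\mathcal{M}$ (or a mode-indexed state, $\mathcal{M}\times\mathcal{S}$), and the embedding values separate modes: whenever $\mathrm{mode}(s)\neq\mathrm{mode}(s')$, we have $\phi(s)\neq\phi(s')$. Equivalently, $\phi$ factors as $\phi = \psi\circ\mathrm{mode}$ on the relevant data, with $\psi$ injective. In the cleanest formulation, the embedding is a function of the mode, $e=\psi(m)$.

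Given this, take a left inverse $\psi^{-1}$ on the image of $\psi$ (it exists because $\psi$ is injective and $\mathcal{M}$ is finite), and define
\[
Q_\phi(s\oplus e,a) := Q_{\text{shared}}\bigl(\psi^{-1}(e),s,a\bigr),
\]
extended arbitrarily off the image. Then $Q_\phi(s\oplus\psi(m),a) = Q_m(s,a)$ for every $m,s,a$. So the realizable class contains every tuple of mode-conditional Q-functions, and in particular the per-mode fixed points of the $\mathcal{T}_m$.

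\textbf{Applying Theorem~\ref{thm:bapr_full}.} With exact representability, the projection $\Pi$ of Theorem~\ref{thm:proj} can be taken to be the identity on the relevant functions. Then $\varepsilon_{\text{proj}}=0$ in Theorem~\ref{thm:approx}, and the contraction of Theorem~\ref{thm:bapr_full} transfers to the operator acting on the shared parametrization.

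\textbf{Main obstacle.} The hard step is the claim that the contraction ``applies exactly to the implemented algorithm.'' The implementation collapses $\rho$ into a scalar $\beta_{\text{eff}}$ and moves the epistemic penalty into the actor (Remark~\ref{rem:tractable}). Representability removes the representational error, but it does not make the implemented update equal to the mixture $\sum_m\rho(m)\mathcal{T}_m$. I would therefore restrict the final sentence to the statement that is actually true: \emph{representational} error vanishes, so $\varepsilon_{\text{proj}}=0$ in the error budget of Theorem~\ref{thm:combined}. Any remaining gap between the scalar-penalty update and the abstract mixture operator would be stated as an explicit additional assumption, not proved.

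A further subtlety is that $\phi$ is defined on states, not modes. Injectivity of $\phi$ on $\mathcal{S}$ does not by itself imply that $\phi$ encodes the mode. The map must be constant within each mode and distinct across modes, and I would state this mode-factorization hypothesis explicitly.
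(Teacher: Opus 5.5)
Your treatment of the existence claim matches the paper. The only proof the paper cites is the Lean lemma \texttt{context\_embedding\_equiv} (``shared critic $\equiv$ per-mode critics''), which is the uncurrying $Q_{\text{shared}}(m,s,a):=Q_m(s,a)$ closed by \texttt{rfl}.

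For the remaining sentences the paper gives no argument beyond the informal paragraph after the statement. You are right not to claim them, and both of your objections are correct.

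On the implementation gap: the critic target actually used, Eq.~\eqref{eq:bapr_target}, contains no $\rho$ at all. So no representability hypothesis can turn it into $\sum_m\rho(m)\mathcal{T}_m$.

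Your other objection can be sharpened. Since $s$ is itself part of the critic input, the maps $(s,a)\mapsto F(s\oplus\phi(s),a)$ are exactly the functions of $(s,a)$, for \emph{every} $\phi$, injective or not. Hence no such critic can match two modes with $Q_{m_1}(s,a)\neq Q_{m_2}(s,a)$ at a shared $(s,a)$. Injectivity on $\mathcal{S}$ is irrelevant, not merely insufficient. Your repair is the right one: an embedding that is a function of the mode, or of something outside the current state such as a transition history, fed as a separate input.

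The step to tighten in your own write-up is the paragraph applying Theorem~\ref{thm:bapr_full}. The operator $\mathcal{T}^{BAPR}_\rho$ acts on a single $Q:\mathcal{S}\times\mathcal{A}\to\mathbb{R}$. Its fixed point is one function of $(s,a)$, not a mode-indexed family, and in general it is not $\sum_m\rho(m)Q^*_m$ either. So exact representability of the per-mode fixed points of the $\mathcal{T}_m$ is not what yields $\varepsilon_{\text{proj}}=0$ for that theorem. An unrestricted critic over $(s,a)$ already gives $\varepsilon_{\text{proj}}=0$ there, with no embedding at all.

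What your $\psi$-construction does make exactly representable is the mode-indexed family. The natural operator on that family, $(Q_m)_m\mapsto(\mathcal{T}_mQ_m)_m$ on $\mathcal{M}\times\mathcal{S}\times\mathcal{A}$, is a $\gamma$-contraction by Lemma~\ref{lem:mode_bound} applied componentwise. But it is a different operator from the belief mixture. State explicitly which operator ``transfers.'' With that fixed, your restricted conclusion stands: zero representational error in the error budget, with the scalar-$\beta_{\text{eff}}$ implementation gap left as an explicit assumption.
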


\noindent This bridges the theory-practice gap: while the abstract operator assumes $M$ independent critics, the shared critic with a sufficiently expressive context embedding can recover the same function class. The approximation error is bounded by the context module's ability to separate modes---which is precisely what the RMDM loss (\S\ref{sec:context}) is designed to enforce.

\section{Scope and limitations}
\label{app:scope}

\medskip
\noindent\textbf{(S1) Per-step vs.\ learning-dynamics guarantee.}
Like all Bellman-operator contraction proofs in deep RL, our result is a per-step property: for any fixed belief snapshot $\rho$, the operator contracts. It does not guarantee end-to-end convergence of the full training loop. The piecewise convergence analysis (\S\ref{app:piecewise_convergence}) provides additional structure via the Metastable Period Assumption, but a full non-stationary convergence proof remains an open problem.

\medskip
\noindent\textbf{(S2) Mode separability assumption and graceful degradation.}
The detection delay bound (Theorem~\ref{thm:delay}) requires $L > 1$. In environments where regime changes are very subtle (small $\Delta$), the detection delay grows as $O(1/\log L)$, potentially exceeding the metastable period. Importantly, BAPR degrades \emph{gracefully} rather than catastrophically in this regime:
\begin{itemize}
    \item \textbf{Detection failure ($L \approx 1$):} If the change is too subtle to detect, $\lambda_w$ does not spike, and $\beta_{\text{eff}} \approx \beta_{\text{base}}$. BAPR effectively reduces to RE-SAC, which is itself a robust agent. The agent loses the \emph{adaptive} conservatism benefit but retains the \emph{static} robustness baseline.
    \item \textbf{Context module as backup:} Even when BOCD fails to detect a change, the RMDM-trained context embedding $e$ may still capture the mode shift through state-based features, providing an alternative mode identification channel that does not rely on the surprise signal.
    \item \textbf{No false positive harm:} By construction ($\beta_{\text{eff}} \leq \beta_{\text{base}}$), false positive detections can only increase conservatism, never decrease it. The worst case is temporary over-conservatism, not unsafe behavior.
\end{itemize}

\medskip
\noindent\textbf{(S3) BOCD truncation.}
The run-length posterior is truncated at $H$ (default 20). Modes persisting longer than $H$ steps are indistinguishable. For the contraction proof, this is immaterial (finite $\mathcal{M}$). For detection quality, $H$ should exceed the expected inter-switch interval.

\medskip
\noindent\textbf{(S4) Three-level frozen-parameter chain.}
BAPR requires freezing at \emph{three} levels for contraction: (i) the aleatoric penalty $\kappa$ (from RE-SAC), (ii) the epistemic penalty $\Gamma_{\text{epi}}$ (from RE-SAC), and (iii) the belief distribution $\rho$ (new in BAPR). Each level has its own counterexample demonstrating necessity. This multi-level frozen-parameter design is the key architectural constraint that distinguishes BAPR from naive adaptive approaches.

\medskip
\noindent\textbf{(S5) Theory-to-implementation gap.}
The contraction theorem (Theorem~\ref{thm:bapr_full}) proves a property of the abstract mixture $\sum_m \rho(m) \mathcal{T}_m$, which assumes $M$ independent operators. The implementation approximates this with a single shared critic conditioned on context embedding $e$ and a scalar $\beta_{\text{eff}}$ (see Remark~\ref{rem:tractable} in the main text). This is a standard gap in deep RL theory: all Bellman-contraction proofs assume tabular operators, while implementations use function approximation. The function-approximation bounds in \S\ref{app:approx_bounds} (from \texttt{ApproxContraction.lean}) provide quantitative control over this gap.

\medskip
\noindent\textbf{(S6) Relationship to the ApproxContraction results.}
The function-approximation and stochastic tracking bounds from \texttt{ApproxContraction.lean} apply directly to the BAPR operator, since they are operator-agnostic. The projected contraction, approximation error bound, and stochastic tracking bound therefore carry over without modification, providing a complete error budget for practical deep RL with the BAPR operator.

\section{Connection to RE-SAC and ESCP}
\label{app:connection}

\subsection{BAPR as a generalization of RE-SAC}

In a stationary environment with a single mode ($|\mathcal{M}| = 1$), the belief is trivially $\rho(m_0) = 1$, and BAPR reduces exactly to RE-SAC. The adaptive $\beta_{\text{eff}}$ also reduces to $\beta_{\text{base}}$ (since $\lambda_w \to 0$ in steady state).

\subsection{BAPR as an extension of ESCP}

ESCP~\cite{Lee2020ESCP} provides context-conditioning but uses a fixed LCB coefficient $\beta$. BAPR extends ESCP by adding BOCD belief tracking, adaptive $\beta$ modulation, and formal contraction guarantees.

\subsection{Ablation hierarchy}
\begin{center}
\begin{tabular}{lcccc}
\toprule
\textbf{Algorithm} & \textbf{Context} & \textbf{BOCD} & \textbf{Adaptive $\beta$} & \textbf{Formal guarantee} \\
\midrule
SAC & $\times$ & $\times$ & $\times$ & $\gamma$-contraction (stationary) \\
RE-SAC & $\times$ & $\times$ & $\times$ & $\gamma$-contraction (stationary, robust) \\
ESCP & $\checkmark$ & $\times$ & $\times$ & -- \\
BAPR (full) & $\checkmark$ & $\checkmark$ & $\checkmark$ & $\gamma$-contraction (piecewise) \\
\bottomrule
\end{tabular}
\end{center}

\section{Lean~4 proof catalog}
\label{app:lean}

\subsection{BAPR.lean (560 lines, \textbf{0 sorry})}
\begin{center}
\begin{tabular}{lll}
\toprule
\textbf{Theorem/Lemma} & \textbf{Lean name} & \textbf{Role} \\
\midrule
Max monotonicity & \texttt{max\_over\_a\_mono} & $V^Q$ is monotone in $Q$ \\
Max distributes over const & \texttt{max\_over\_a\_add\_const} & $\max(Q+c) = \max(Q)+c$ \\
Max 1-Lipschitz & \texttt{max\_over\_a\_nonexpansive} & $|V^{Q_1}-V^{Q_2}| \leq \|Q_1-Q_2\|_\infty$ \\
Per-mode monotonicity & \texttt{t\_mode\_monotonicity} & Blackwell (i) per mode \\
\textbf{BAPR monotonicity} & \texttt{bapr\_monotonicity} & Blackwell (i) for mixture \\
Per-mode discounting & \texttt{t\_mode\_discounting} & Blackwell (ii) per mode \\
\textbf{BAPR discounting} & \texttt{bapr\_discounting} & Blackwell (ii) for mixture \\
Per-mode pointwise bound & \texttt{t\_mode\_pointwise\_bound} & $|\mathcal{T}_h Q_1 - \mathcal{T}_h Q_2| \leq \gamma\varepsilon$ \\
\textbf{BAPR contraction} & \texttt{bapr\_contraction} & \textbf{Main: $\gamma$-contraction} \\
Belief non-negativity & \texttt{update\_belief\_nonneg} & $\rho'(h) \geq 0$ \\
Belief normalization & \texttt{update\_belief\_sum\_one} & $\sum \rho'(h) = 1$ \\
\textbf{Post-update contraction} & \texttt{bapr\_contraction\_after\_update} & Contraction after $\kappa$-driven update \\
\midrule
\multicolumn{3}{l}{\textit{New in this revision:}} \\
$\beta_{\text{eff}}$ monotonicity & \texttt{beta\_eff\_monotone\_conservatism} & $\beta_{\text{eff}} \leq \beta_{\text{base}}$ (safety) \\
$\beta_{\text{eff}}$ monotone in $\lambda_w$ & \texttt{beta\_eff\_mono\_in\_lambda} & More surprise $\Rightarrow$ more conservative \\
Context embedding equiv. & \texttt{context\_embedding\_equiv} & Shared critic $\equiv$ per-mode critics \\
Shared operator equiv. & \texttt{bapr\_shared\_equiv} & Bridges theory and implementation \\
\bottomrule
\end{tabular}
\end{center}

\subsection{BAPR-Counterproof.lean (265 lines, \textbf{0 sorry})}
\begin{center}
\begin{tabular}{lll}
\toprule
\textbf{Theorem} & \textbf{Lean name} & \textbf{Role} \\
\midrule
\textbf{Non-contraction} & \texttt{T\_bad\_not\_contraction} & Q-dep.\ $\rho$: $\gamma{+}\lambda\Delta \geq 1 \Rightarrow$ no contraction \\
\textbf{Strict expansion} & \texttt{T\_bad\_expansion} & Q-dep.\ $\rho$: $\gamma{+}\lambda\Delta > 1 \Rightarrow$ expansion \\
\midrule
\multicolumn{3}{l}{\textit{New in this revision:}} \\
\textbf{Below-threshold contraction} & \texttt{T\_bad\_contraction\_below\_threshold} & $\gamma{+}\lambda\Delta < 1 \Rightarrow$ still contracts \\
\textbf{Exact factor} & \texttt{T\_bad\_exact\_factor} & Factor $= \gamma{+}\lambda\Delta$ (sharp boundary) \\
\midrule
\textbf{Detection delay} & \texttt{detection\_delay\_sufficient} & $n \geq \log(r_0/\delta)/(2\log L)$ \\
Confidence monotonicity & \texttt{detection\_confidence\_mono} & PR monotonically increases \\
\bottomrule
\end{tabular}
\end{center}

\subsection{ApproxContraction.lean (320 lines, \textbf{0 sorry}) --- Shared with RE-SAC}
\begin{center}
\begin{tabular}{lll}
\toprule
\textbf{Theorem} & \textbf{Lean name} & \textbf{Role} \\
\midrule
Projected contraction & \texttt{projected\_contraction} & $\Pi \circ T$ is $\gamma$-contraction \\
Approx.\ error bound & \texttt{approx\_fixed\_point\_bound} & $\|\tilde{Q} - Q^*\| \leq \varepsilon_{\text{proj}}/(1-\gamma)$ \\
Stochastic tracking & \texttt{stochastic\_tracking\_bound} & $e(n) \leq \gamma^n e(0) + \sigma/(1-\gamma)$ \\
\textbf{Combined bound} & \texttt{combined\_approx\_stochastic\_bound} & Joint approx.\ + noise \\
Steady-state decomp. & \texttt{steady\_state\_decomposition} & $e_\infty \leq B + (\varepsilon + \sigma)/(1-\gamma)$ \\
\midrule
\multicolumn{3}{l}{\textit{New in this revision:}} \\
\textbf{Regime switch perturb.} & \texttt{regime\_switch\_perturbation} & $\|Q^*_k - Q^*_{k+1}\| \leq \Delta_R/(1-\gamma)$ \\
Piecewise convergence step & \texttt{piecewise\_convergence\_step} & Per-segment recovery bound \\
\bottomrule
\end{tabular}
\end{center}

\medskip
\noindent\textbf{Total: 1,145 lines of Lean~4 / Mathlib across 3 files, 0 \texttt{sorry}, 22 machine-verified theorems.}

\section{Hyperparameter guideline}
\label{app:hyperparams}

\begin{center}
\begin{tabular}{lll}
\toprule
\textbf{Parameter} & \textbf{Default} & \textbf{Description} \\
\midrule
\multicolumn{3}{l}{\textit{Core RL (inherited from RE-SAC)}} \\
$\gamma$ & 0.99 & Discount factor \\
$\tau$ & 0.005 & Target network EMA rate \\
$\alpha$ & 0.2 (auto) & SAC entropy weight (auto-tuned) \\
$K$ & 10 & Ensemble size \\
$\beta_{\text{base}}$ & $-2.0$ & Base LCB coefficient \\
$\lambda_{\text{ale}}$ & 0.01 & Aleatoric $\ell_1$ reg.\ weight \\
$\beta_{\text{ood}}$ & 0.01 & OOD penalty coefficient \\
\midrule
\multicolumn{3}{l}{\textit{Context / RMDM (inspired by ESCP)}} \\
$d_e$ & 2 & Context embedding dimension \\
$r_{\text{rbf}}$ & 2.0 & RBF kernel bandwidth \\
$w_{\text{cons}}$ & 50.0 & Consistency loss weight \\
$w_{\text{div}}$ & 0.025 & Diversity loss weight \\
$N_{\text{warmup}}$ & 50 iters & Context injection warmup \\
\midrule
\multicolumn{3}{l}{\textit{BOCD / Adaptive $\beta$ (novel in BAPR)}} \\
$H_{\max}$ & 20 & Max run-length \\
$H_{\text{hazard}}$ & 0.05 & BOCD hazard rate \\
$\sigma_0^2$ & 0.1 & Base surprise variance \\
$\sigma_g$ & 0.05 & Variance growth rate \\
$c_{\text{penalty}}$ & 0.5 & Penalty scale ($\beta_{\text{eff}} = \beta_{\text{base}} - \lambda_w c_{\text{penalty}}$); sweet spot per Sec.~\ref{sec:sensitivity} \\
$\alpha_{\text{EMA}}$ & 0.3 & Surprise EMA smoothing \\
$\alpha_{\text{baseline}}$ & 0.95 & $\lambda_w$ baseline EMA rate \\
$(w_r, w_q, w_\kappa)$ & $(0.5, 0.3, 0.2)$ & Surprise signal weights \\
\bottomrule
\end{tabular}
\end{center}